\theoremstyle{plain}
\newtheorem{theorem}{Theorem}[section]
\newtheorem{lemma}[theorem]{Lemma}
\newtheorem{corollary}[theorem]{Corollary}
\theoremstyle{definition}
\newtheorem{definition}[theorem]{Definition}
\newtheorem{assumption}{Assumption}
\theoremstyle{remark}
\newcommand{\cL}{\mathcal{L}}
\newcommand{\cX}{\mathcal{X}}
\newcommand{\cS}{\mathcal{S}}
\newcommand{\cP}{\mathcal{P}}
\newcommand{\idc}{\mathds{1}}
\newcommand{\cls}{\mathrm{\langle cls \rangle}}
\newcommand{\hbE}{\widehat{\mathbb{E}}}
\newcommand{\sfmax}{\texttt{Softmax}}
\newcommand{\Diag}{\text{Diag}}
\newcommand{\err}{\text{err}}
\newcommand{\lag}{\mathtt{L}}
\def\1{\bm{1}}
\def\eps{{\epsilon}}
\def\mE{{\bm{E}}}
\def\mM{{\bm{M}}}
\DeclareMathAlphabet{\mathsfit}{\encodingdefault}{\sfdefault}{m}{sl}
\SetMathAlphabet{\mathsfit}{bold}{\encodingdefault}{\sfdefault}{bx}{n}
\newcommand{\R}{\mathbb{R}}
\DeclareMathOperator*{\argmax}{arg\,max}
\DeclareMathOperator*{\argmin}{arg\,min}
\title{Attention with Trained Embeddings  Provably Selects Important Tokens}
\author{Diyuan Wu$^{1,}$\thanks{Equal contribution.} \quad    Aleksandr Shevchenko$^{2,}$\footnotemark[1]  \quad Samet Oymak$^3$ \quad  Marco Mondelli$^1$ }
\date{}
\begin{document}
\maketitle

\footnotetext[1]{Institute of Science and Technology Austria (ISTA). Emails: \texttt{\{diyuan.wu, marco.mondelli\}@ist.ac.at}}
\footnotetext[2]{ETH Z\"urich. Email: \texttt{aleksandr.shevchenko@inf.ethz.ch}}
\footnotetext[3]{University of Michigan.  Email: \texttt{oymak@umich.edu}}

\begin{abstract}
Token embeddings play a crucial role in language modeling but, despite this practical relevance, their theoretical understanding remains limited. Our paper addresses the gap by characterizing the structure of embeddings obtained via gradient descent. Specifically, we consider a one-layer softmax attention model with a linear head for binary classification, i.e., $\sfmax( p^\top \mE_X^\top ) \mE_X v =  \frac{ \sum_{i=1}^T \exp(p^\top E_{x_i})  E_{x_i}^\top v}{\sum_{j=1}^T \exp(p^\top E_{x_{j}}) }$, where $\mE_X = [ E_{x_1} , \dots, E_{x_T} ]^\top$ contains the embeddings of the input sequence, $p$ is the embedding of the $\cls$ token and $v$ the output vector. First, we show that, already after a single step of gradient training with the logistic loss, the embeddings $\mE_X$ capture the importance of tokens in the dataset by aligning with the output vector $v$ proportionally to the frequency with which the corresponding tokens appear in the dataset.  Then, after training $p$ via gradient flow until convergence, the softmax selects the important tokens in the sentence (i.e., those that are predictive of the label), and the resulting $\cls$ embedding maximizes the margin for such a selection. Experiments on real-world datasets (IMDB, Yelp) exhibit a phenomenology close to that unveiled by our theory.
\end{abstract}

\section{Introduction}

The introduction of the attention mechanism \cite{bahdanau15translation,vaswani2017attention} marked a paradigm shift in the design of frontier machine learning models, leading to significant advances such as ChatGPT \cite{achiam2023gpt}, Claude \cite{anthropic2025claude}, AlphaFold \cite{jumper2021highly}, CLIP \cite{radford2021learning} and Dall-E \cite{ramesh2021zero}. This success prompted a surge of interest in understanding the structure and function of attention layers, with their optimization dynamics and inductive biases 
being object of extensive theoretical research \cite{abbe2024far,cabannes2024scaling,geshkovski2023emergence,makkuva2024attention,ataee2023max,vuckovic2020mathematical} (see also Section \ref{sec:related}).
Embeddings are a crucial component of the attention mechanism \cite{zhou2023one}, especially for downstream adaptation \cite{houlsby2019parameter, jiang2024repurposing, kossen2023three} with some works \cite{levine2024cell2sentence, zhou2023one} specifically highlighting their importance. However, despite the importance of learning embeddings, 
the existing analyses of transformer-like architectures either ignore the properties of embeddings by resorting to orthogonal structures \cite{yang2024training}, or omit embeddings completely by considering unprocessed inputs \cite{tiberi2024dissecting}.

Our paper fills this gap by studying directly the embedding training dynamics. Specifically, we aim to provide theoretical insight to the following questions:

\begin{quote}

\centering

\emph{    What is the structure learnt by the embeddings during gradient descent training?}

\emph{How is this structure related to the statistical properties of the data?}
\end{quote}
In Figure \ref{fig:imdb_corr_2layer}, we investigate these questions by analyzing the embeddings of a two-layer transformer trained on a sentiment analysis task on  IMDB and Yelp reviews. The plots reveal a remarkable simplicity in the structure of the learned embeddings, which capture the frequency of appearance of tokens in the dataset. Specifically, 
 the predictive mechanism (overlap with the regression coefficient $v$) favors the tokens which appear more frequently in the corresponding positive/negative context. A similar pattern emerges at the selection stage of the attention mechanism (overlap with the $\cls$ embedding $p$), i.e., more frequent tokens have a higher attention score. 

\begin{figure}[t]
  \centering
  \includegraphics[width=0.49\textwidth]{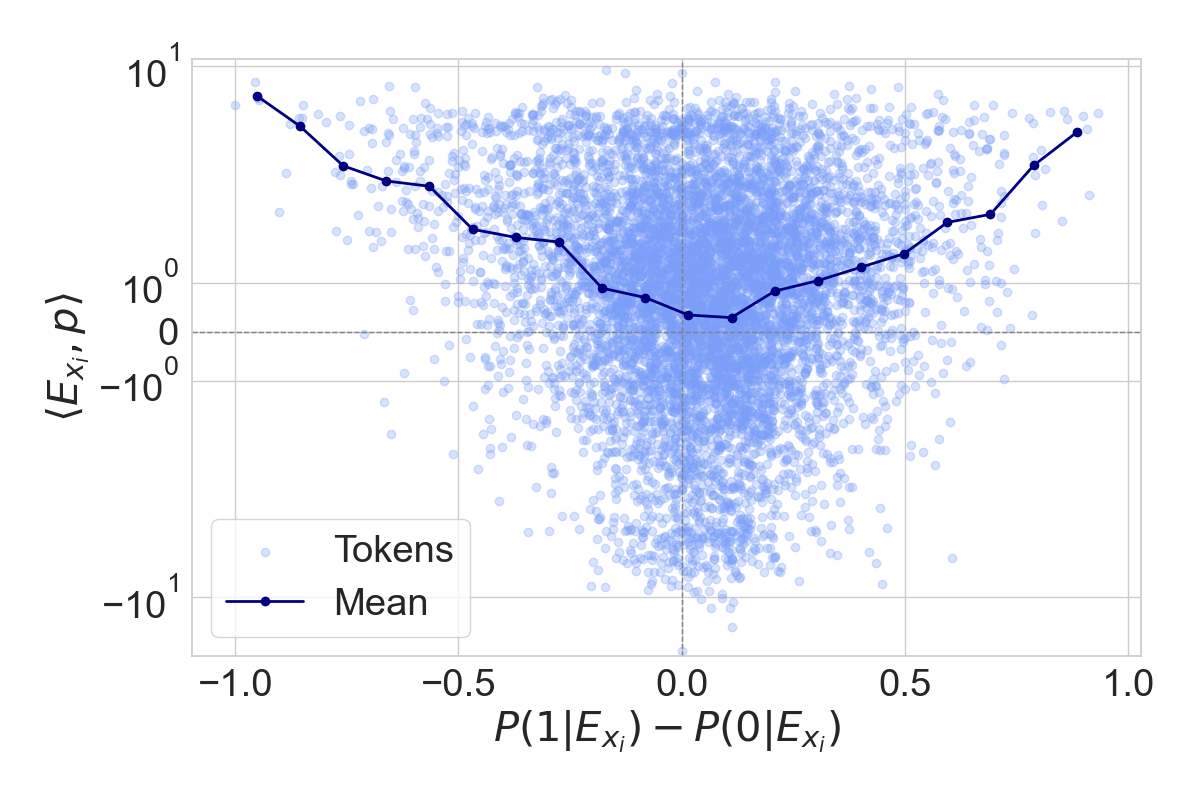}
  \includegraphics[width=0.49\textwidth]{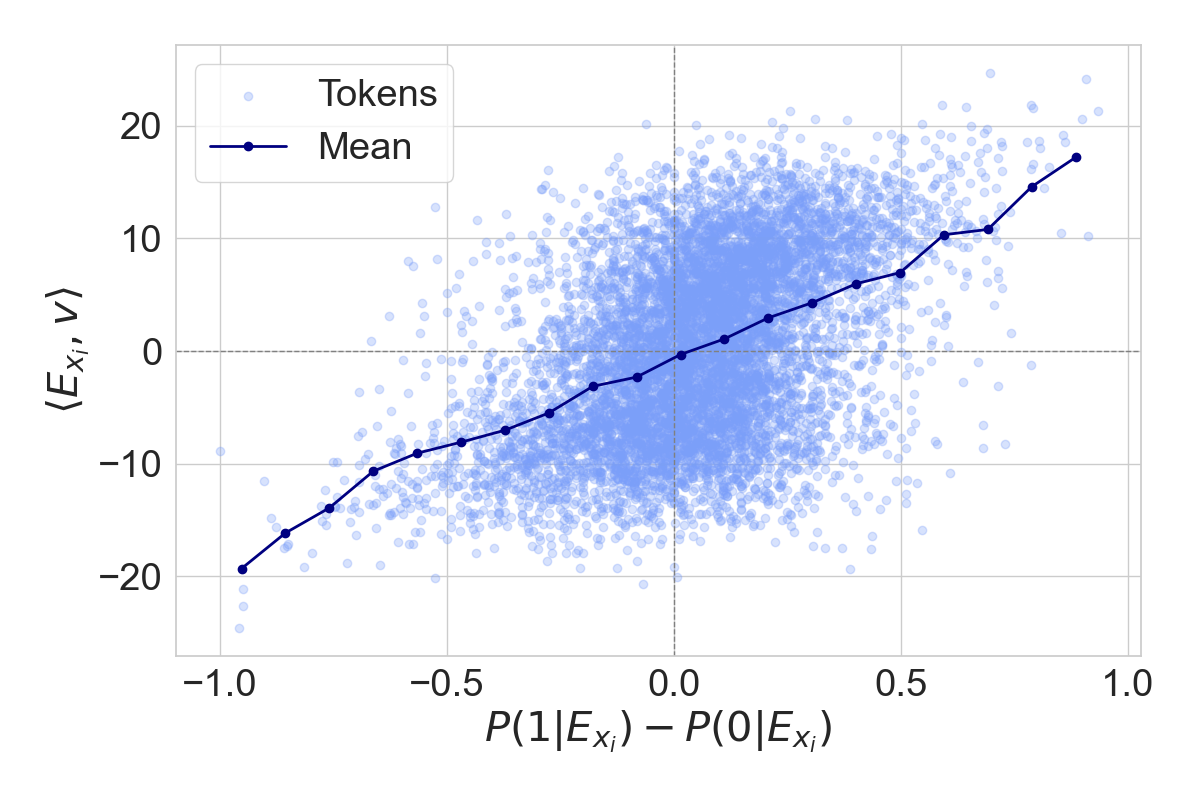}

  \includegraphics[width=0.49\textwidth]{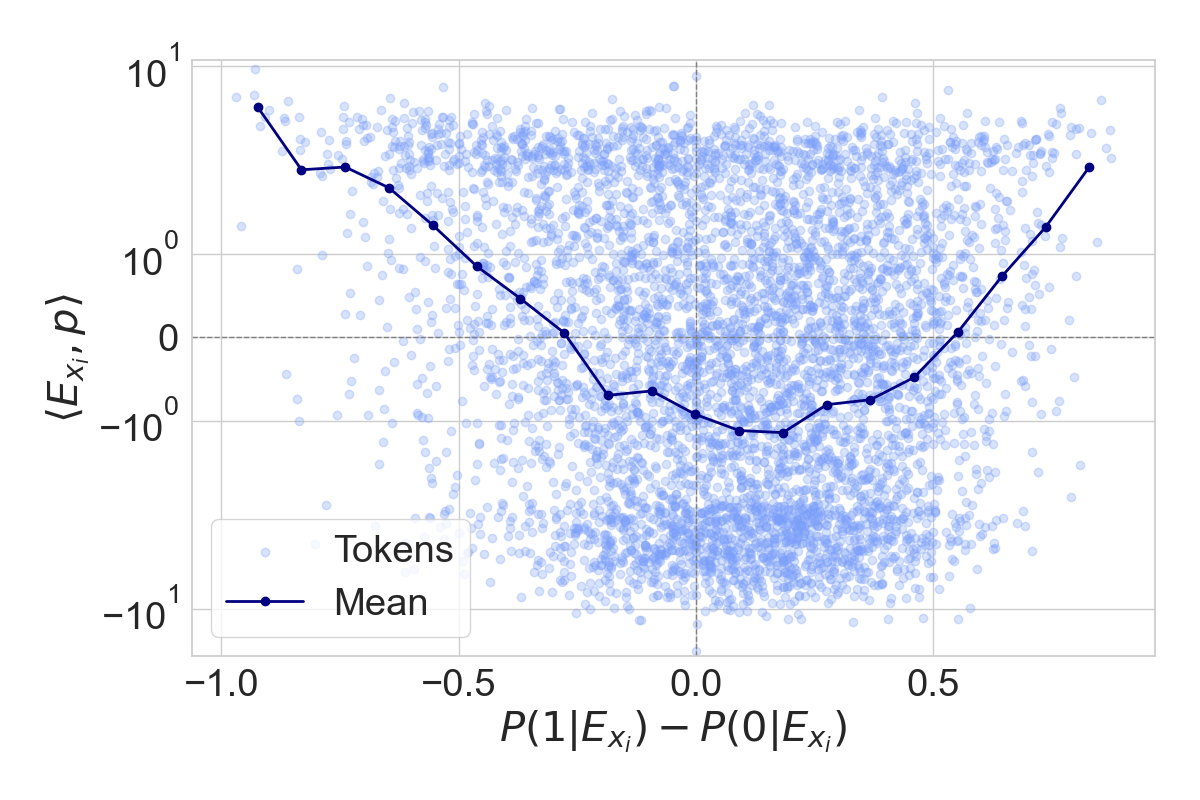}
  \includegraphics[width=0.49\textwidth]{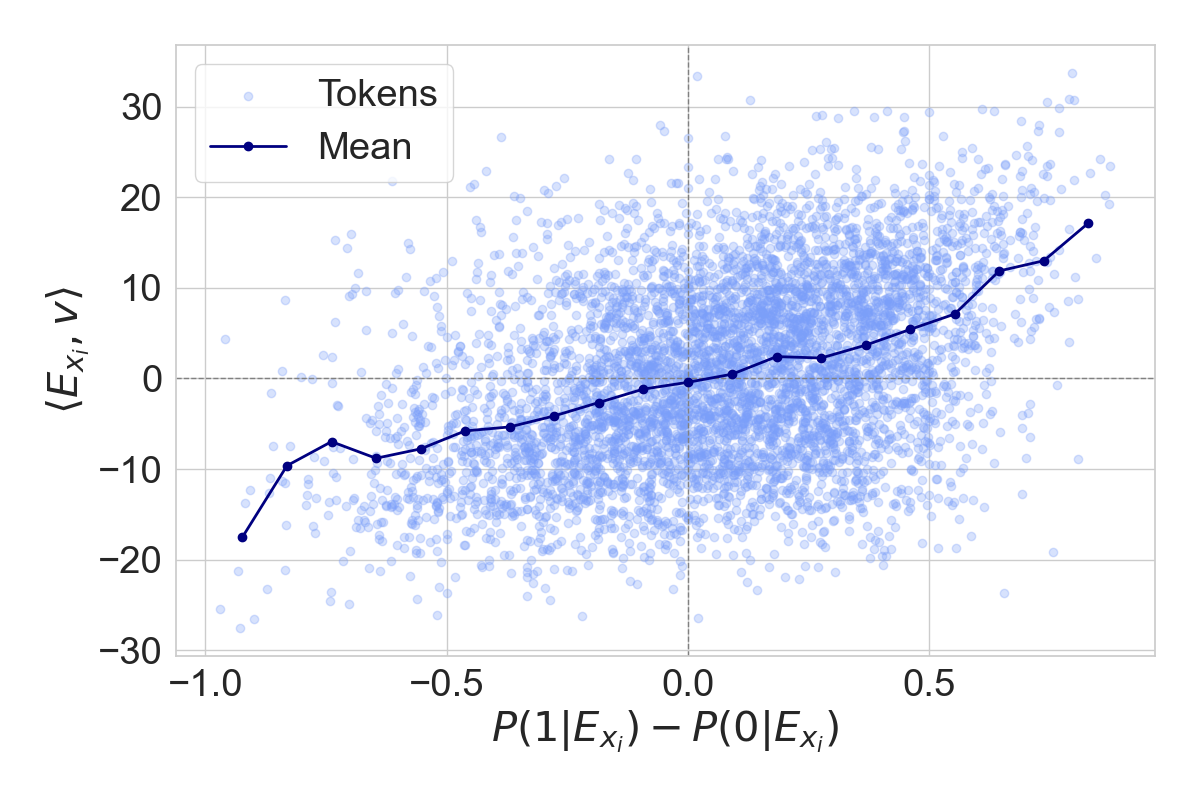}
  
  \caption{Dot-product of token embeddings with $\cls$ embedding $p$ (left) and regression coefficients $v$ (right), as a function of the token-wise difference in posterior probabilities, for IMDB (top row) and Yelp (bottom row) datasets. We consider the two-layer attention model in \eqref{eq:2layer_def} with all parameters trained until convergence.}
  \label{fig:imdb_corr_2layer}
\end{figure}

For the theoretical study of this emergent structure, we focus on a one-layer softmax attention model. Namely, for an input sequence $X = [x_1,\cdots,x_T]$, the output of the model is given by 
\begin{equation}\label{eq:f}
    f(X; p, \mE) =  \sfmax( p^\top \mE_X^\top ) \mE_X v =  \frac{ \sum_{i=1}^T \exp(p^\top E_{x_i})  E_{x_i}^\top v}{\sum_{j=1}^T \exp(p^\top E_{x_{j}}) },
\end{equation} 
where $\mE_X = [ E_{x_1} , \dots, E_{x_T} ]^\top$ contains the embeddings of the input $X$, $p$ is the embedding of the $\cls$ token and $v$ is the final regression vector. Our main results are summarized below:

\begin{itemize}    \item We show that, already after a single step of gradient training with the standard logistic loss, the embeddings $\mE_X$ capture the importance of tokens in the dataset by aligning with the output vector $v$ proportionally to the corresponding  empirical frequencies (Lemma \ref{lem:overlap_v}). 

    \item In a setting where each sequence contains a single important token, the $\cls$ embedding obtained from gradient flow must select all important tokens. We further characterize all the possible directions that the $\cls$ embedding may converge to, which are the max-margin solutions associated to feasible token selections (Theorem \ref{thm:complete_seq}).

    \item While in general the $\cls$ embedding may select irrelevant tokens, we identify sufficient conditions leading to the selection only of important tokens (Lemmas \ref{lem:no-select-all} and \ref{lem:suff}).

\end{itemize}

\section{Related work}\label{sec:related}

\paragraph{Implicit bias, margin maximization, attention.} The implicit bias literature has been instrumental in understanding the behavior of neural networks or overparameterized models optimized by gradient methods \cite{chizat2020implicit,arora2019implicit,neyshabur2014search}. A key phenomenon is that gradient descent on separable data with logistic loss directionally converges to the max-margin separator \cite{soudry2018implicit,ji2019implicit}. 
More recently, a series of works \cite{tarzanagh2023transformers,magen2024benign,julistiono2024optimizing,vasudeva2024implicit,li2024mechanics,sheen2024implicit,ataee2023max,sakamoto2024benign} has established an 
equivalence between the optimization geometry of self-attention and a hard-margin SVM problem selecting a subset of tokens via linear constraints on the outer-products of token pairs. Compared to these works that mostly focus on the training of single-layer attention weights, we point out two differences. First, we study the role of embeddings and their joint training with the $\cls$ token. Second, under our data model, we establish benign properties of the solution reached at convergence (which may not hold for arbitrary datasets \cite{ataee2023max}).

\paragraph{Theory of attention.} 
A line of work \cite{makkuva2024attention,makkuva2024local,nichani2024transformers} has explored whether attention-based architectures can extract causal structure from Markovian inputs. 
 The mechanics of 
next-token prediction when training a single self-attention layer is characterized in \cite{li2024mechanics}.
Towards understanding how to utilize structural properties of the data, the behavior of transformers on sparse token selection tasks is considered in  \cite{sanford2023representational,wang2024transformers}. 
The study \cite{ildiz2024self} provides a theoretical justification to the tendency of modern language models to generate repetitive text by showing that the underlying self-attention mechanism collapses into sampling only a limited subset of tokens. This stands in contrast to the slightly different setup of \cite{tian2023scan} where the transformer model does not degrade to a ``winner-takes-all'' strategy. The works \cite{geshkovski2023emergence,geshkovski2023mathematical,geshkovski2024dynamic} take a mean-field view to analyze the clustering behavior in transformer representations that emerges after successive applications of the attention block. 
Under a random feature design, it is shown in \cite{bombari2024towards} that softmax attention exhibits a sensitivity property which allows for a sharp change in attention scores given the perturbation of a single token. The role of the attention mechanism is also studied in \cite{oymak23attention} for prompt-tuning and in \cite{gozeten2025test} for test-time-training. 

\section{Problem setup}

\paragraph{Data model.} We focus on binary text classification problems. We consider a (context) vocabulary set $\mathcal{S}$ with size $|\mathcal{S}|$, together with a $\cls$ token for classification. Let $(X_i,y_i)_{i=1}^n$ be the dataset containing $n$ context sequences, where $y_i\in \{-1, 1\}$ and each context sequence $X\in \mathcal X_n:=\{X_1, \ldots, X_n\}$ contains $T$ tokens, i.e., $X = [x_1, \dots, x_T]$ with $x_i \in \cS$. Without loss of generality, we let $\cS$ be the set of tokens that appears in $\mathcal{X}_n$, as the embeddings of the remaining tokens are not trained and are not relevant for the problem at hand.

\paragraph{Architecture.} We consider a one-layer softmax attention model with a linear head for classification. First, we append a $\cls$ token at the end of the sequence $X$, and then we embed each token into a vector of dimension $d$. Namely, after the embedding layer, we have $\mE_X = [ E_{x_1} , \dots, E_{x_T} ]^\top \in \R^{T \times d}$, where $E_s \in \R^d$ denotes the embedding of the token $s.$
We let $\mE \in \R^{|\mathcal S| \times d}$ be the embedding matrix of all context tokens and $p \in \R^d$ the embedding of the $\cls$ token.

We focus on the architecture defined in \eqref{eq:f} 
where, given a vector $a\in \mathbb R^T$, $[\sfmax(a)]_i:=\frac{\exp(a_i)}{\sum_{j = 1}^T\exp( a_{j})}$ for $i\in \{1, \ldots, T\}$. We remark that the same model is also studied in \cite{ataee2023max,sakamoto2024benign}. In practice, it is common to include the $W_{KQ}$ matrix and consider a model with output $f(X; p,W_{KQ}, \mE) =  \sfmax( p^\top W_{KQ} \mE_X^\top ) \mE_X v. $ Since $p^\top W_{KQ}$ plays the same role as $p$ and one can easily reconstruct  $W_{KQ}$ from $p$ in each gradient update as discussed in \cite{ataee2023max}, we use the model in \eqref{eq:f} for simplicity.

\paragraph{Optimization problem.} The output vector $v$ is fixed and all the embedding vectors $p, \mE$ are trained with the standard logistic loss:
\begin{equation}\label{eq:loss}
    \cL(\mE,p) = \frac{1}{n}\sum_{k=1}^n  \log(1 + \exp(- y_k f(X_k;\mE,p)))=\hbE \left[ \log(1 + \exp(- y f(X;\mE,p))) \right],
\end{equation}  where the notation $\hbE$ is a shorthand for the average over the dataset $\mathcal D=\{(X_k, y_k)\}_{k=1}^n$. 

\paragraph{Empirical statistics of each token in the dataset.}

The goal of the paper is to characterize the structure of the embeddings $\mE,p$ obtained by optimizing the objective \eqref{eq:loss} via gradient descent, and we show that such structure is related to the empirical statistics of the tokens in the dataset. Specifically, after training, the softmax attention learns to select tokens that are more correlated to the labels based on the dataset.  To quantify the correlation between a token $s$ and the label $y$, we define the \emph{average signed frequency} of a token as:
\begin{equation}\label{eq:alphas}
    \alpha_s := \frac{1}{nT}\sum_{(X, y)\in\mathcal D}y \sum_{i=1}^T\idc_{x_i =s} =\frac{1}{T}\hbE\left[y \sum_{i=1}^T\idc_{x_i =s}\right].
\end{equation} 
In words, $\alpha_s$ is obtained by taking the number of occurrences of $s$ in sequences with a positive label, subtracting the number of occurrences of $s$ in sequences with a negative label, and finally dividing by the total number of tokens $nT$. As such, it provides an average of the signed frequency of $s$, where the sign comes from the label of the sequences in which the token appears. 

\begin{definition}[Positive, negative and irrelevant tokens]
    We say that a token $s$ is \emph{(i)}  \textit{positive} if $\alpha_s > 0$, \emph{(ii)} \textit{negative} if $\alpha_s <0$, and \emph{(iii)} \textit{irrelevant} if $\alpha_s = 0$. Moreover, we say that a token $s$ is \textit{completely positive} if it appears only in sequences with label $1$, and \textit{completely negative} if it appears only in sequences with label $-1$.
\end{definition}

In words, a token is positive (negative) when it is more frequently associated to the positive (negative) label; tokens that appear the same number of times associated to positive and negative labels are irrelevant. The quantity $\alpha_s$ quantifies how positive/negative a token is. Intuitively, if either $\alpha_s>\alpha_{s'}>0$ or $\alpha_s<\alpha_{s'}<0$ , then the token $s$ is more relevant than the token $s'$ for the classification task and, therefore, we expect that this will be reflected into the structure of the corresponding embeddings. 


\section{Main theoretical results}
\label{sec:1step_1impt}

We show that the trained attention model \eqref{eq:f} learns to select important tokens from the empirical statistics of the data. First, in Section \ref{subsec:embed} we prove that training the context embeddings with a single gradient step suffices to capture the empirical importance of tokens in the dataset; then, in Section \ref{subsec:select} we characterize the implicit bias of training the $\cls$ embedding $p$ until convergence, having fixed the context embeddings after the first gradient step.

\subsection{One step of gradient descent learns the importance of the tokens}\label{subsec:embed}

 We start by showing that the first step of gradient descent is already enough to give a correlation between embeddings and the output vector $v$. 
 Furthermore, this correlation is proportional to the average signed frequency defined in \eqref{eq:alphas}. We initialize $v$ with any unit-norm vector and $E_s^0, p^0 \overset{\text{i.i.d.}}{\sim} \mathcal{N}(0, \frac{1}{d} I)$ for all $s\in\mathcal S$. Then, we
perform one step of gradient descent with step size $\eta_0$ on all trainable embeddings:
\begin{equation}\label{eq:GD1}
   \begin{split}
       p^1 &= p^0 - \eta_0  \nabla_{p} \cL(\mE^0,p^0),\qquad  E_s^1 = E_s^0 - \eta_0  \nabla_{E_s} \cL(\mE^0,p^0), \quad \mbox{for all } s \in \cS.
   \end{split}
   \end{equation}
   

\begin{lemma}
\label{lem:overlap_v}
 For any $\delta > 0$, let 
\begin{equation}
     d \geq \max\left\{256, \left(2 \log\frac{|\cS|^2}{\delta} \right)^2\right\}.
 \end{equation} 
 Then, after the first step of gradient descent in \eqref{eq:GD1}, we have that, for any $s\in\mathcal S$, 
\begin{equation}
 \label{eq:embd_1step}
    E_s^1 = E_s^0 + \frac{\eta_0}{2} \alpha_s v + \err_s, \qquad p^1 = p^0 + \err_p,
    \end{equation} where the error terms $\err_s, \err_p$ are bounded with probability at least $1-\delta$ as 
\begin{equation}
    \max\{\max_{s \in \cS} \|\err_s\|_2, \|\err_p \|_2\} \leq 11\eta_0  d^{-\frac{1}{4}}.  
\end{equation}
\end{lemma}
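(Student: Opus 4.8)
The plan is to compute the gradient of $\cL$ with respect to $E_s$ and $p$ explicitly at the initialization $(\mE^0, p^0)$, and then show that the dominant term of the $E_s$-gradient is exactly $-\tfrac{1}{2}\alpha_s v$ while everything else is lower order, and that the $p$-gradient is entirely lower order. The key structural observation is that at initialization $p^0$ and the $E_s^0$ are independent isotropic Gaussians of scale $1/\sqrt d$, so all the inner products $p^{0\top} E_{x_i}^0$ appearing inside the softmax are of size $O(d^{-1/2})$ in absolute value, with high probability uniformly over the (at most $|\cS|$) tokens. Consequently, the softmax weights $[\sfmax(p^{0\top}\mE_X^{0\top})]_i$ are all within $O(d^{-1/2})$ of the uniform weight $1/T$, and similarly the model output $f(X;\mE^0,p^0) = \sum_i [\sfmax(\cdots)]_i E_{x_i}^{0\top} v$ is $O(d^{-1/2})$ in magnitude since it is a convex combination of the $E_{x_i}^{0\top}v$, each of which is $O(d^{-1/2})$ (again with high probability, uniformly). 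Therefore the logistic derivative $\sigma(-y_k f(X_k)) = \tfrac12 + O(d^{-1/2})$, so to leading order the loss behaves like $\tfrac1n\sum_k \log 2 - \tfrac12 y_k f(X_k) + \ldots$, whose gradient picks out the linear-in-$f$ part.

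Concretely, I would write $\nabla_{E_s}\cL = -\tfrac1n\sum_k \sigma(-y_k f(X_k)) y_k \nabla_{E_s} f(X_k)$. The gradient $\nabla_{E_s} f(X)$ has two pieces: one from the $E_X v$ factor (which, summed over the positions where $x_i = s$, contributes roughly $\tfrac1T \sum_{i:x_i=s} v$ plus softmax-weight corrections) and one from the softmax's dependence on $E_s$ through $p^\top E_s$ (which involves the factor $p^0$ and is therefore $O(d^{-1/2})$ smaller). Plugging in $\sigma(-y_k f(X_k)) = \tfrac12 + O(d^{-1/2})$ and the near-uniform softmax weights, the leading term of $-\eta_0\nabla_{E_s}\cL$ becomes $\tfrac{\eta_0}{2}\cdot\tfrac1{nT}\sum_{(X,y)\in\cD} y\sum_{i:x_i=s} v = \tfrac{\eta_0}{2}\alpha_s v$, which is the claimed main term. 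For $p$, the gradient $\nabla_p f(X) = \sum_i [\sfmax]_i (E_{x_i} - \sum_j[\sfmax]_j E_{x_j})(E_{x_i}^\top v)$ is a product of two factors that are each $O(d^{-1/2})$ around their mean (the centered embeddings) and the mean-zero-ish $E_{x_i}^\top v$, so $\|\nabla_p f(X)\| = O(d^{-1/2})$, giving $\|\err_p\| = O(\eta_0 d^{-1/2})$, comfortably inside the stated $11\eta_0 d^{-1/4}$ bound.

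The concentration bookkeeping is the routine-but-careful part: I would use standard Gaussian tail/norm bounds — $\|E_s^0\|_2$ concentrates around $1$, $|E_s^{0\top} v| \lesssim \sqrt{\log(|\cS|/\delta)/d}$, $|p^{0\top} E_s^0| \lesssim \sqrt{\log(|\cS|/\delta)/d}$, and $|E_s^{0\top} E_{s'}^0| \lesssim \sqrt{\log(|\cS|^2/\delta)/d}$ — each holding simultaneously for all $s, s' \in \cS$ after a union bound over the $\le |\cS|^2$ pairs, which is where the hypothesis $d \ge (2\log(|\cS|^2/\delta))^2$ enters (it makes these deviations $\le d^{-1/4}$-ish). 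Then I would propagate these through: $|\sfmax_i - 1/T| \le C d^{-1/2}\log(\cdots)$, $|f(X;\mE^0,p^0)| \le C d^{-1/2}\log(\cdots)$, $|\sigma(-yf) - 1/2| \le C d^{-1/2}\log(\cdots)$, and finally collect all the $O(\eta_0 d^{-1/2}\,\mathrm{polylog})$ error contributions into $\err_s$ and $\err_p$. The main obstacle — really the only delicate point — is tracking constants so that the accumulated error, after the $\log$ factors are absorbed using $d \ge 256$ and $d \ge (2\log(|\cS|^2/\delta))^2$, fits under the clean bound $11\eta_0 d^{-1/4}$; in particular one must check that the $\sqrt{\log}$ factors times $d^{-1/2}$ are dominated by $d^{-1/4}$, which is exactly what the second lower bound on $d$ buys (since then $\sqrt{\log(|\cS|^2/\delta)} \le d^{1/4}/\sqrt 2$, so $d^{-1/2}\sqrt{\log} \le d^{-1/4}/\sqrt2$). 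I do not expect any conceptual difficulty beyond this constant-chasing.
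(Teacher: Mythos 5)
Your proposal is correct and follows essentially the same route as the paper's proof: condition on the Gaussian concentration events (union-bounded over all token pairs, with $d \geq (2\log(|\cS|^2/\delta))^2$ converting the $\sqrt{\log/d}$ deviations into $d^{-1/4}$), deduce near-uniform softmax weights, $|f|=O(d^{-1/4})$ and hence $g(X,y)=\tfrac12+O(d^{-1/4})$, extract the main term $\tfrac{1}{2}\alpha_s v$ from the value-path of $\nabla_{E_s}f$, and dump everything else (including all of $\nabla_p\cL$, whose every term carries a small inner product $E_{x_i}^\top v$) into the error. The only nit is that $\nabla_p f$ is $O(d^{-1/4})$ rather than $O(d^{-1/2})$ under these events (the centered-embedding factor has norm $O(1)$, not $O(d^{-1/2})$), but this does not affect the conclusion since the target bound is $11\eta_0 d^{-1/4}$.
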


Lemma \ref{lem:overlap_v} implies that after one step of training, the embedding vector $E_s$ of each token $s$ learns the empirical importance of the tokens by adding a vector in the direction of the output vector $v$ with magnitude  proportional to $\alpha_s$.

The proof follows from the structure of the gradient update. In particular, it can be shown that 
\begin{equation}    \label{eq:charact}
\nabla_{E_s} \cL(\mE^0,p^0)  = - \hbE\left[y g(X,y) \left( \sum_{i=1}^T  (\sum_{j\neq i} ( \idc_{x_i=s} - \idc_{x_j=s}) q_{i} q_{j} ) (E_{x_i}^0)^\top v p^0 + \sum_{i=1}^T \idc_{x_i = s} q_{i} v \right) \right],
\end{equation}
with $q_i := \frac{\exp((p^0)^\top E_{x_i}^0) }{ \sum_{j=1}^T \exp((p^0)^\top E_{x_j}^0)}$. Note that, for all $x_i$, $(p^0)^\top E_{x_i}^0$ is of order $1/\sqrt{d}$, due to the independent Gaussian initialization. Thus,  the characterization in \eqref{eq:charact} implies that the gradient is roughly $\frac{\eta_0}{2} \alpha_s v$ plus a term that is vanishing in $d$. 
The full argument is deferred to Appendix \ref{apx:pf_lem_overlap_v}.



\subsection{Gradient flow on $p$ performs max-margin token selection}\label{subsec:select}

Lemma \ref{lem:overlap_v} shows the informative overlap between the output vector $v$ and the context embedding vectors $E_s$ after the first step of gradient descent. However, \eqref{eq:embd_1step} also implies that the overlap between the $\cls$ embedding vector $p$ and  $E_s$ does not improve after the first step. Thus, next, we study the training dynamics of $p$, characterizing its implicit bias. Specifically, we fix the context embedding matrix to $\mE^1$ (obtained after the first gradient step) and train the $\cls$ embedding vector $p$ with gradient flow initialized at $p^1$ (obtained after the first gradient step): 
\begin{equation}
\label{eq:gf_p}
    \frac{\dd}{\dd t} p_t =  -  \nabla_{p} \cL(\mE^1,p_t).
\end{equation} 

We consider gradient flow for technical convenience, and all results in this section can be readily extended to gradient descent with small enough step size. For the rest of the section and in the related proofs appearing in the appendix, we will refer to the embeddings in $\mE^1$ as $E_s$ and not $E_s^1$, omitting the superscript to favor readability. 


\paragraph{Max-margin token selection.} 
Given the $\cls$ embedding $p$ and a sequence $X$, we denote the set of tokens in $X$ selected by $p$ as 
\begin{equation}
    \cS_{X}(p) = \{ s : s = \argmax_{s \in X} p^\top E_s \},
\end{equation} and we define $\overline{\cS_{X}(p)} = X \setminus \cS_{X}(p)$. Intuitively, given a sequence $X,$ the selected tokens in $X$ have the largest softmax weight (proportional to $\exp(p^\top E_{x_i})$). Note that, for $p' \neq p,$ we may have that $\cS_X(p') = \cS_X(p)$ for all $X$. Thus, we 
define the equivalence relation 
\begin{equation}
    p \approxeq p' \Longleftrightarrow \cS_X(p) = \cS_X(p'),\qquad  \text{for all $X\in\mathcal X_n$}.
\end{equation} 
Intuitively, two vectors $p,p'$ are equivalent under the above relation if they select the same tokens for all the sequences.
Given a vector $p_\circ,$ we denote by $\cP_{p_\circ}$ its equivalence class, and we  define the set of max-margin directions among all vectors in $\cP_{p_\circ}$ as 
\begin{equation}
\label{eqn:max-margin}
    \begin{split}
        \cP_*(p_\circ) = &\left\{ \frac{\hat{p}}{\|\hat{p}\|_2}: \hat{p} = \argmin_{p \in \cP_{p_\circ}} \|p\|_2 \right. \\
        &\left. \text{s.t.} \quad  p^\top(E_s - E_{s'}) \geq 1,\quad \forall s \in \cS_X(\cP_{p_\circ}), \,\,\forall s' \in \overline{\cS_X(\cP_{p_\circ})}, \,\,\forall X \in \cX_n \right\}.
    \end{split}
\end{equation}

We first show in Lemma \ref{lem:sol_mm} (proved in Appendix \ref{apx:pf_lem_sol_mm}) below that the max-margin problem in \eqref{eqn:max-margin} always has a unique solution, which means that $\cP_*(p_\circ)$ is always a singleton. Thus, later on, we will use $\hat{p}(p_\circ)$ as the solution to \eqref{eqn:max-margin}, and $p_*(p_\circ) = \frac{\hat{p}(p_\circ)}{\|\hat{p}(p_\circ)\|_2}$. We drop the dependency on $p_\circ$ when there is no confusion. 

\vspace{-.3em}

\begin{lemma}
\label{lem:sol_mm}
     For any $p_\circ \neq 0,$ the max margin problem in \eqref{eqn:max-margin} has a unique solution denoted as $\hat{p}$. 
\end{lemma}






\paragraph{Implicit bias of gradient flow.} 
While Lemma \ref{lem:overlap_v} holds for any dataset, we need an extra assumption on the data to analyze the gradient flow, due to the complex loss landscape caused by softmax attention. 

\begin{assumption}
\label{asm:complete_token}
    Each sequence in $\mathcal{X}_n$ contains either a \emph{single completely positive} token or a \emph{single completely negative} token, and all remaining tokens are \emph{irrelevant}.
\end{assumption}

Assumption \ref{asm:complete_token} implies that all sequences in the dataset contain precisely one relevant token, and the relevant token also aligns with the label. We remark that 
datasets containing only one relevant token have been also considered in prior work, see
\citep[Theorem 1]{tarzanagh2023transformers} and \citep{marion2025attention}.
We further denote by $\cS_{c}$ the set containing all completely positive and all completely negative tokens.


\begin{theorem}
\label{thm:complete_seq}
    Under Assumption  \ref{asm:complete_token}, for any $\delta > 0,$ let  \begin{equation}\label{eq:condeta0}
         \eta_0 \geq 4 n^2 T^2, \quad d \geq \max\left\{ 256, \left(2 \log\frac{|\cS|^2}{\delta} \right)^2, (88 \eta_0^2+111\eta_0+2)^8 , |\cS|+3  \right\}.
    \end{equation} Let $p_t$ be the solution of the gradient flow \eqref{eq:gf_p}. Then, with probability at least $1-\delta$, we have that $\|p_t\|_2 \rightarrow \infty$. Furthermore, assuming that $p_\infty:=\lim_{t \rightarrow +\infty} \frac{p_t}{\|p_t\|_2}$ exists, the limiting direction $p_\infty$ satisfies the following properties with probability at least $1-\delta$: \begin{enumerate}            \item $p_\infty$ selects all completely positive and completely negative tokens, i.e., $\cS_{c} \subseteq \bigcup_X \cS_X(p_\infty)$.
            \item $p_\infty$ is the max-margin direction for such a selection, i.e., $p_\infty = p_*(p_\infty)$.
        \end{enumerate}
\end{theorem}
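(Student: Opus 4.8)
The plan is to combine the implicit-bias machinery for logistic-type losses (à la Soudry et al., Ji–Telgarsky) with the concrete structure of the embeddings after one gradient step given by Lemma \ref{lem:overlap_v}. The argument splits into three parts: (a) show the loss can be driven to zero, which forces $\|p_t\|_2 \to \infty$; (b) show that the limiting direction $p_\infty$, if it exists, must select all completely positive and completely negative tokens; (c) show that, restricted to its own equivalence class $\cP_{p_\infty}$, the direction $p_\infty$ is the (unique, by Lemma \ref{lem:sol_mm}) max-margin direction.

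\textbf{Step (a): divergence of $\|p_t\|_2$.} First I would verify that the data is ``separable'' in the relevant sense: there exists a direction $p$ along which $f(X;\mE^1, p) \to +\infty \cdot y$ as we scale $p$, i.e., $y_k f(X_k; \mE^1, \lambda p) \to +\infty$ for all $k$. Here is where Assumption \ref{asm:complete_token} and Lemma \ref{lem:overlap_v} are used: each sequence has one completely positive or completely negative token $s^\star(X)$, and after the gradient step $E_{s^\star}^1 \approx E_{s^\star}^0 + \tfrac{\eta_0}{2}\alpha_{s^\star} v$ where $\alpha_{s^\star}$ has the same sign as the label, while irrelevant tokens get only the $\cO(d^{-1/4})$ perturbation and no $v$-component. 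Taking a candidate $p$ that (i) has a large positive inner product with $E_{s^\star}$ relative to the irrelevant tokens (possible since the $E_s^0$ are in general position in high dimension — this is exactly where the condition $d \ge |\cS|+3$ enters, guaranteeing the relevant differences $E_s - E_{s'}$ are linearly independent so the SVM in \eqref{eqn:max-margin} is feasible) and (ii) aligns with $v$, one gets that the softmax concentrates on the relevant token and $f \to y\cdot(\text{positive})$. The condition $\eta_0 \ge 4n^2T^2$ is what makes the $\tfrac{\eta_0}{2}\alpha_s v$ term dominate the $\cO(d^{-1/4})$ noise and the $E_s^0$ fluctuations quantitatively, so that such a separating $p$ genuinely exists. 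Then a standard argument shows $\cL(\mE^1, p_t) \to 0$ is impossible at finite $\|p_t\|_2$ because the loss is strictly positive there, the gradient flow monotonically decreases $\cL$, and $\nabla_p \cL$ does not vanish away from infinity; hence $\|p_t\|_2 \to \infty$.

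\textbf{Steps (b) and (c): identifying the limiting direction.} Assuming $p_\infty := \lim_t p_t/\|p_t\|_2$ exists, I would argue as follows. Because $\cL \to 0$, every margin term $y_k f(X_k; \mE^1, p_t) \to +\infty$; writing $f$ as a softmax-weighted average of the scalars $E_{x_i}^\top v$ over $i$, and noting (via Lemma \ref{lem:overlap_v}) that for a completely-positive-token sequence the relevant token is the essentially unique one with a positive $v$-overlap of size $\approx \tfrac{\eta_0}{2}\alpha_{s^\star}$ while all others have $v$-overlap $\cO(d^{-1/4})$, the only way the weighted average can diverge to $+\infty$ in the positive direction is for the softmax weight on $s^\star$ to tend to $1$; equivalently $p_\infty^\top E_{s^\star} > p_\infty^\top E_{s'}$ for every other token $s'$ in that sequence — and symmetrically for completely negative sequences. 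This is precisely $\cS_c \subseteq \bigcup_X \cS_X(p_\infty)$, giving property 1. For property 2, I would invoke the implicit-bias result for the exponential/logistic tail: since $\cL$ is a sum of logistic losses composed with the (non-linear in $p$, but asymptotically ``locally linear'' once the softmax saturates) margin functions, one shows that on the cone of directions inducing the selection $\cS_X(p_\infty)$ the loss is, up to exponentially small corrections, $\sum_k \exp(-\|p_t\| \cdot \gamma_k(p_t/\|p_t\|))$ with $\gamma_k$ the min gap $p^\top(E_{s^\star} - E_{s'})$ over non-selected $s'$ in $X_k$; gradient flow on such an objective directionally converges to the maximizer of $\min_k \gamma_k$, i.e., the max-margin solution of \eqref{eqn:max-margin}, which by Lemma \ref{lem:sol_mm} is unique. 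Hence $p_\infty = p_*(p_\infty)$.

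\textbf{Main obstacle.} The delicate point is the interplay between the \emph{nonlinearity} of the softmax and the implicit-bias analysis: unlike linear classification, $f(X; \mE^1, p)$ is not linear in $p$, and the effective ``margin'' $\gamma_k(p/\|p\|)$ only emerges after the softmax has saturated onto the selected set $\cS_X(p)$, which itself depends on the direction. One must rule out the limiting direction sitting on a boundary between equivalence classes, control the exponentially-small cross terms from non-selected tokens uniformly along the trajectory, and make sure the ``reparametrized'' objective genuinely falls in the scope of the Soudry-type / Ji–Telgarsky directional-convergence theorems (smoothness, the loss being a sum of tight-exponential-tail functions of affine margins of the selected differences). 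I expect this — transferring the linear-model implicit-bias conclusion across the softmax nonlinearity, with all error terms from Lemma \ref{lem:overlap_v} and from the non-saturated tokens controlled uniformly in $t$ — to be the technically heaviest part, while steps (a) and (b) are comparatively routine given Lemmas \ref{lem:overlap_v} and \ref{lem:sol_mm}.
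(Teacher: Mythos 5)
There is a genuine gap at the heart of your proposal: you treat the classification margins $y_k f(X_k;\mE^1,p_t)$ as if they could diverge, but the model output $f(X;p,\mE)=\sum_i q_i(X)\,E_{x_i}^\top v$ is a \emph{convex combination} of the fixed scalars $E_{x_i}^\top v$, hence uniformly bounded in $p$ by $\max_s|E_s^\top v|\approx \eta_0/2$. Consequently $\cL(\mE^1,p_t)$ is bounded away from zero, the statement ``$\cL\to 0$, every margin term $y_kf(X_k)\to+\infty$'' in your step (b) is false, and your step (a) premise that there is a direction along which $y_kf(X_k;\mE^1,\lambda p)\to+\infty$ is impossible. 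What saturates exponentially as $\|p\|_2\to\infty$ is the attention weights $q_i$, not the loss; the loss merely converges to a positive constant determined by the selected tokens. This also undercuts your step (c): the loss is not asymptotically of the form $\sum_k\exp(-\|p_t\|\gamma_k)$ in the attention-logit margins, so the Soudry/Ji--Telgarsky directional-convergence theorems cannot be invoked at the level of $\cL$ as you propose.

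The paper's actual route avoids all three issues. For norm divergence it does not use separability: it constructs an explicit direction $\hat p$ (orthogonal to $v$ and $p^0$, solving $\mE^0\hat p = a$ for an indicator vector $a$ of one important token, feasible since $d\ge|\cS|+3$) whose directional derivative $-\hat p^\top\nabla_p\cL(\mE^1,p)$ is strictly positive at \emph{every} finite $p$ by Lemma \ref{lem:dir_grad}; hence no finite stationary point exists and $\|p_t\|_2\to\infty$. For token selection it uses a loss comparison: if $p_\infty$ failed to select the important token of some sequence, then for large $t$ that sequence's output would be $O(d^{-1/8})$ and the loss would exceed $1/(2n)$, contradicting $\cL(\mE^1,p^1)\le e^{-n}$ from Lemma \ref{lem:un_loss_1step} (this is where $\eta_0\ge 4n^2T^2$ enters) together with monotonicity of the flow. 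For the max-margin property it compares \emph{directional gradients}: assuming $p_\infty\notin\cP_*(p_\infty)$, Lemma \ref{lem:grad_approx} shows the gradient is dominated by the selected-vs-second-selected pairs up to factors $\exp(-\tau R)$ in the attention logits, and the margin-violating sequences dominate the rest, so $-\hat p^\top\nabla_p\cL \ge (1-\eps)(-p_t^\top\nabla_p\cL/\|p_t\|_2)\|\hat p\|_2$ for the true max-margin $\hat p$; Lemma \ref{lem:converge} then forces $p_\infty=\hat p/\|\hat p\|_2$, a contradiction. You would need to replace your steps (a)--(c) with arguments of this kind; as written, the proposal's mechanism does not apply to this bounded-output model.
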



Theorem \ref{thm:complete_seq} shows that, if $p_t$ converges in direction, it must converge to the max-margin direction that selects all the completely positive/negative token. A sketch of the argument is given below and the complete proof is in Appendix \ref{apx:pf_complete_seq}.
\begin{proof}[Proof sketch]
    We prove the three statements separately. First, we show that $\|p_t\|_2\to\infty$ (Lemma \ref{lem:1_impt_norm}). To do so, we explicitly construct a vector $\hat{p}$ such that $\hat{p} \nabla_p \cL(\mE^1, p) < 0$ for all $p$. This means that there is no stationary point with finite norm, which implies that the norm of the vector obtained via gradient flow diverges.

    Next, we show that, if the directional limit $p_\infty$ exists, then it must select all the important tokens (Lemma \ref{lem:1_impt_norm2}). To do so, we note that, after one step, the model approximately selects the important tokens. This implies that $p_t$ selects important tokens for all $t$, as gradient flow cannot increase the loss. 
    
    Finally, we show that $p_\infty$ is the max-margin solution of a feasible selection (Lemma \ref{lem:max-margin}). To do so, we assume by contradiction that the directional limit is any vector $p'$ that is not the max-margin solution of a feasible selection. Then under this assumption, we prove that $\lim_{t \rightarrow \infty} \frac{p_t}{\|p_t\|_2} = p_*(p'),$ which is in fact the solution of \eqref{eqn:max-margin}, thus giving a contradiction. 
\end{proof}

Before proceeding with the characterization of the max-margin solution, we highlight some differences with respect to the related work \citep{ataee2023max}. Theorem 3 in \citep{ataee2023max} shows that gradient descent on $p$ converges to a locally optimal max-margin solution when initialized in a regime close enough to such solution, and Theorem 4 in \citep{ataee2023max} shows that the regularization path can only converge to locally max-margin solutions. However, these results do not exclude the possibility of the gradient flow converging to directions that are \emph{not} locally optimal and \emph{not} the max-margin direction. In contrast, we characterize all possible directions the gradient flow converges to,
showing that these are max-margin directions that select all completely positive/negative tokens. Furthermore, we do so without starting from an initialization that is close enough to such solution. This requires a different proof strategy as compared to \citep{ataee2023max}.

\paragraph{Characterization of the max-margin solution.} Theorem \ref{thm:complete_seq} gives that, if gradient flow converges in direction, the limiting direction is the max-margin one that selects all important tokens in each sentence. However, this does not exclude a-priori the possibility that gradient flow also selects some irrelevant tokens. To address this point, we now establish if and how many irrelevant tokens can be selected: in general, it is not possible to select all irrelevant tokens (Lemma \ref{lem:no-select-all}) and, under an additional assumption, no irrelevant token is selected (Lemma \ref{lem:suff}). 

\begin{lemma}
\label{lem:no-select-all}
    Suppose  $\hat{p}$ selects all the tokens, i.e., $\cS_X(\hat{p}) = \cS$. Then, $p_\infty \neq \hat{p}.$
\end{lemma}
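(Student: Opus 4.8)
The plan is to argue by contradiction, relying on the second conclusion of Theorem~\ref{thm:complete_seq}: if $p_\infty$ exists, it must equal the normalized max-margin solution $p_*(p_\infty)$ associated to its \emph{own} induced token selection. The key observation is that the hypothesis ``$\hat p$ selects all the tokens'' strips that max-margin program of all of its constraints, so its solution degenerates to the zero vector and $p_*(p_\infty)$ cannot exist as a genuine unit vector.

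First I would unpack the hypothesis: $\cS_X(\hat p) = \cS$ for every $X \in \cX_n$ means $\hat p$ selects every token occurring in each sequence, hence $\overline{\cS_X(\hat p)} = X \setminus \cS_X(\hat p) = \emptyset$ for all $X$. I also record that the zero vector lies in the equivalence class of $\hat p$: since $0^\top E_s = 0$ for all $s$, every token of $X$ is tied under $0$, so $\cS_X(0) = X = \cS_X(\hat p)$ for all $X$, whence $0 \approxeq \hat p$ and $0 \in \cP_{\hat p}$.

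Now suppose toward a contradiction that $p_\infty = \hat p$. Applying Theorem~\ref{thm:complete_seq} (second conclusion) with $p_\circ = p_\infty$ gives $p_\infty = p_*(p_\infty) = \hat p(p_\infty)/\|\hat p(p_\infty)\|_2$, where $\hat p(p_\infty) = \argmin_{p \in \cP_{p_\infty}} \|p\|_2$ subject to the margin constraints of \eqref{eqn:max-margin}. But those constraints range over $s \in \cS_X(\cP_{p_\infty})$, $s' \in \overline{\cS_X(\cP_{p_\infty})}$, $X \in \cX_n$, and we just saw $\overline{\cS_X(\cP_{p_\infty})} = \emptyset$ for all $X$, so the constraint set is empty. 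Hence the program reduces to $\min_{p \in \cP_{p_\infty}}\|p\|_2$, whose minimizer is $\hat p(p_\infty) = 0$ because $0 \in \cP_{p_\infty}$. Then $p_*(p_\infty) = 0/\|0\|_2$ is not a well-defined unit vector, contradicting $p_\infty = p_*(p_\infty)$ together with $\|p_\infty\|_2 = 1$. Therefore $p_\infty \neq \hat p$. Equivalently, Theorem~\ref{thm:complete_seq} forces $\hat p(p_\infty) \neq 0$ and forces it to make at least one margin constraint tight, which is impossible unless some sequence has an unselected token — i.e.\ $p_\infty$ cannot select all tokens.

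I do not expect a genuine obstacle here: this is a short structural consequence of Theorem~\ref{thm:complete_seq}, the only delicate point being the bookkeeping that ``selecting everything'' leaves \eqref{eqn:max-margin} constraint-free. The place where real work would be required is the stronger statement that $p_\infty$ cannot even select all tokens in the weaker, union sense $\bigcup_X \cS_X(\hat p) = \cS$ with some individual sequence still possessing an unselected token; there the max-margin program is not degenerate and one needs a loss comparison to exclude such limits. Concretely, Lemma~\ref{lem:overlap_v} together with $\eta_0 \geq 4n^2T^2$ shows that for each sequence $X$ the quantity $E_{r(X)}^\top v$ (with $r(X)$ the unique relevant token of $X$, which exists under Assumption~\ref{asm:complete_token}) exceeds $E_s^\top v$ for every irrelevant $s \in X$ by a gap dominating $d^{-1/4}$; consequently $\cL(\mE^1,p^1) \leq \log(1+e^{-(2n-1)})$, while any directional limit that drops $r(X)$ from its selection for some $X$ drives $\liminf_t \cL(\mE^1,p_t) \geq \tfrac1n(\log 2 - o(1))$, contradicting that $\cL(\mE^1,p_t)$ is non-increasing along the flow. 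Once every relevant token is known to be selected in its own sequence, one is returned to the constraint-free situation treated above.
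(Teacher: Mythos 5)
Your argument is correct in its key observation, but it takes a genuinely different---and in one respect riskier---route than the paper. You derive the lemma from the max-margin characterization (conclusion 2 of Theorem~\ref{thm:complete_seq}): if $p_\infty$ selected every token of every sequence, then $\overline{\cS_X(p_\infty)}=\emptyset$ for all $X$, the program \eqref{eqn:max-margin} loses all its constraints, its minimizer over the equivalence class (which indeed contains $0$, since $0$ ties all tokens) is the zero vector, and $p_*(p_\infty)$ fails to exist as a unit vector. The paper instead gives a two-line direct argument that bypasses the max-margin machinery entirely: if $\hat p$ selects all tokens then $\hat p^\top E_{x_i}=\hat p^\top E_{x_j}$ for all $i,j$ and all $X$, so Lemma~\ref{lem:dir_grad} yields $\hat p^\top\nabla_p\cL(\mE,p)=0$ for \emph{every} $p$; hence $\hat p^\top p_t$ is conserved along the flow while $\|p_t\|_2\to\infty$ (Lemma~\ref{lem:1_impt_norm}), forcing $\hat p^\top p_\infty=0$ and in particular $p_\infty\neq\hat p$.

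The caveat with your route is that it borders on circularity in substance, if not in the paper's formal dependency graph. The proof of conclusion 2 (Lemma~\ref{lem:max-margin}) argues by contradiction from ``$p_\infty\notin\cP_*(p_\infty)$'' and must produce a \emph{nonzero} $\hat p\in\cP_*(p_\infty)$ together with a sequence $X$ and a token $j\in X\setminus\cS_X(p_\infty)$ violating the margin; in the degenerate case you exploit (constraint set empty, minimizer equal to $0$, $\cP_*(p_\infty)$ ill-defined) that construction is unavailable, so the theorem as proved does not cleanly cover the ``select everything'' selection---the present lemma is precisely the independent patch that rules it out. Leaning on the theorem exactly there is therefore fragile, whereas the paper's gradient argument is self-contained, needs no hypotheses beyond those guaranteeing $\|p_t\|_2\to\infty$ and the existence of $p_\infty$, and does not invoke the implicit-bias result at all. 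Your closing paragraph about selections in the ``union sense'' is a tangent not needed for the statement (that case is handled by Lemma~\ref{lem:1_impt_norm2}, not by this lemma).
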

\begin{proof}
If $\hat{p}$ selects all the tokens, then $\hat{p}^\top E_{x_i} = \hat{p}^\top E_{x_j}$ for all $x_i, x_j \in X$ and for all $X\in \mathcal X_n$. Thus, by Lemma \ref{lem:dir_grad}, $\hat{p}^\top \nabla_p \cL(\mE, p) = 0$ for any $p$, which gives the desired result.
\end{proof}

Lemma \ref{lem:no-select-all} shows that the directional limit $p_\infty$ (when it exists) cannot select all tokens and, as it selects all important ones, it must be biased towards them. As an application, consider the case where there is only one irrelevant token in the vocabulary. Then, the combination of Theorem \ref{thm:complete_seq} and Lemma \ref{lem:no-select-all} gives that only the completely  positive/negative tokens are selected by gradient flow. 

Going beyond the case where there is a single irrelevant token, the result below provides a sufficient condition for gradient flow to select only important tokens. 


\begin{lemma}
\label{lem:suff}
    Under Assumption \ref{asm:complete_token}, for any $\delta>0$, assume that \eqref{eq:condeta0} holds. Let $\hat{p}$ be the solution of the max-margin problem \eqref{eqn:max-margin} that only selects the completely positive/negative tokens, i.e., \begin{equation*}
        \begin{split}
            \hat{p} = &\argmin_p \|p\|_2, \qquad \text{ s.t. }\quad p^\top (E_{s_*^X} - E_s) \geq 1,\qquad  \forall s \in X \setminus  \{s_*^X\}, \,\,\forall X\in \mathcal X_n,
        \end{split}
    \end{equation*} 
  
    where $s_*^X$ denotes the unique completely positive/negative token in the sequence $X$.
    Assume that $p_\infty := \lim_{t \rightarrow \infty} \frac{p_t}{\|p_t\|_2}$ exists and that, for any $\hat{p}'$ solving \eqref{eqn:max-margin} with a different selection,  $\|\hat{p}\|_2 < (1-\mu)\|\hat{p}'\|_2$ for some constant $\mu$ that does not depend $d$. Then, by further taking $$d \geq \left\{ 
 \left( \frac{2816 Tn^2  (1+2\eta_0)}{\mu }   \right)^4, \left(2 \eta_0 \left( n \sqrt{2 \log \frac{|\cS|^2}{\delta}} + 44n\right) \right)^4 \right\},$$ 
    we have that $p_\infty = \frac{\hat{p}}{\|\hat{p}\|_2}$
    with probability at least $1-\delta$. 
\end{lemma}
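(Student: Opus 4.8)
The plan is to combine Theorem \ref{thm:complete_seq} with the explicit characterization of the max-margin solution from Lemma \ref{lem:sol_mm} to rule out every feasible selection other than the one selecting only the completely positive/negative tokens. Theorem \ref{thm:complete_seq} already tells us that, provided the directional limit $p_\infty$ exists, it equals $p_*(p_\infty)$, the normalized max-margin solution associated to some feasible token selection, and moreover this selection must contain all tokens in $\cS_c$. So the only freedom left is \emph{which} irrelevant tokens are additionally selected in each sequence. Call $\hat p$ the candidate solution that selects only $\cS_c$ (and nothing else), and let $\hat p'$ range over the max-margin solutions of all other admissible selections. The hypothesis $\|\hat p\|_2 < (1-\mu)\|\hat p'\|_2$ says that $\hat p$ has strictly — and by a $d$-independent margin — the smallest norm among all these candidates. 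Since a larger feasible region in the constraint set \eqref{eqn:max-margin} corresponds to a smaller-norm minimizer, and $\hat p$'s selection is in a sense the most restrictive, this gap assumption is the key quantitative input.

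The argument I would run is by contradiction: suppose $p_\infty = \frac{\hat p'}{\|\hat p'\|_2}$ for some selection different from the $\cS_c$-only one. By Theorem \ref{thm:complete_seq}, $p_\infty$ is a max-margin direction for its own selection, so $\hat p'$ is exactly one of the competing solutions, and $\|\hat p'\|_2 > \frac{1}{1-\mu}\|\hat p\|_2$. Now I would compare the loss $\cL(\mE^1, \cdot)$ evaluated along the two directions at large scale. Concretely, for a scalar $R \to \infty$, $\cL(\mE^1, R\, p_*(\hat p'))$ decays like $\exp(-c R / \|\hat p'\|_2)$ up to lower-order terms (the margin achieved by the unit vector $p_*(\hat p')$ on its own constraints is $1/\|\hat p'\|_2$), whereas there exists a direction — namely $p_*(\hat p)$ — along which the loss decays like $\exp(-c R/\|\hat p\|_2)$, which is strictly faster because $\|\hat p\|_2 < \|\hat p'\|_2$. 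This is not yet a contradiction by itself (gradient flow need not find the global best rate), so the real engine has to be the "no loss increase" / trapping argument already used in the proof of Theorem \ref{thm:complete_seq}: after the first gradient step the iterate $p^1$ is within an $O(d^{-1/4})$-ball of $p^0$, and one shows (as in Lemma \ref{lem:1_impt_norm2}) that the set of $p$ whose selection equals the $\cS_c$-only selection is entered early and cannot be left, because leaving it would require the loss to increase. The condition $d \geq \big(\frac{8T(1-\mu)}{\mu\eta_0}\big)^4$ is precisely what makes the $O(d^{-1/4})$ perturbation small relative to the margin gap $\mu$: it guarantees that the separating hyperplanes distinguishing the $\cS_c$-only selection from every competitor are already respected by $p^1$, so the selection region is "locked in" from the start of Stage 2.

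So the proof skeleton is: (i) invoke Theorem \ref{thm:complete_seq} to reduce to finitely many candidate limiting directions, each a normalized max-margin solution containing $\cS_c$; (ii) use Lemma \ref{lem:overlap_v} to write $E_s = E_s^0 + \frac{\eta_0}{2}\alpha_s v + \err_s$ and compute, for the completely positive/negative token $s_*^X$ versus any other token $s$ in its sequence, the gap $\hat p^{\,0\top}(E_{s_*^X}-E_s) = \frac{\eta_0}{2}(\alpha_{s_*^X}-\alpha_s)\,\hat p^{\,0\top} v + O(d^{-1/4})$, which is bounded below once $d$ is large; (iii) translate the norm-gap hypothesis $\|\hat p\|_2 < (1-\mu)\|\hat p'\|_2$ into a statement that the margin along $\hat p$'s direction strictly dominates, with slack $\Theta(\mu)$; (iv) run the same monotonicity-of-loss trapping argument as in Lemma \ref{lem:1_impt_norm2}, now with the quantitative slack from (iii) and the smallness of the Stage-1 perturbation guaranteed by $d \geq (8T(1-\mu)/(\mu\eta_0))^4$, to conclude that gradient flow never exits the region $\{p : \cS_X(p) = \{s_*^X\}\ \forall X\}$; (v) conclude $p_\infty = \hat p/\|\hat p\|_2$ by uniqueness of the max-margin direction in that region (Lemma \ref{lem:sol_mm}).

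The main obstacle I anticipate is step (iv): making the trapping argument robust to the fact that we are comparing against \emph{all} competing selections simultaneously, not just one, and verifying that the single threshold $d \geq (8T(1-\mu)/(\mu\eta_0))^4$ suffices uniformly. This requires carefully tracking how the $\err_s$ terms from Lemma \ref{lem:overlap_v} (each of size $\leq 11\eta_0 d^{-1/4}$) aggregate across the up-to-$T$ constraints per sequence and how they interact with the $d$-independent margin gap $\mu$; the factor $8T$ in the final bound on $d$ strongly suggests a union bound over the $T$ tokens per sequence combined with the $11\eta_0 d^{-1/4}$ error scale, so the bookkeeping is the crux. A secondary subtlety is confirming that the hypothesis "$p_\infty$ exists" plus Theorem \ref{thm:complete_seq} genuinely pins $p_\infty$ to one of the finitely many $\hat p'$ directions, so that "a different selection" in the hypothesis exhausts all alternatives — this is where I would lean on Lemma \ref{lem:no-select-all} to also exclude the degenerate all-tokens selection.
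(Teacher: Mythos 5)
Your setup (argue by contradiction, use Theorem \ref{thm:complete_seq} to restrict $p_\infty$ to the finitely many normalized max-margin directions containing $\cS_c$) matches the paper, but the engine you propose for ruling out the competing selections does not work. The loss-monotonicity ``trapping'' argument of Lemma \ref{lem:1_impt_norm2} only excludes selections whose \emph{asymptotic} loss exceeds $\cL(\mE^1,p^1)$ by a constant; that is why it can rule out selections missing $s_*^X$ (their loss stays above $\tfrac{1}{2n}$). A selection containing $s_*^X$ together with $k-1$ irrelevant tokens still drives the loss down to roughly $\log(1+\exp(-\tfrac{\eta_0}{2k}|\alpha_{s_*^X}|))$, which is below the one-step loss bound, so gradient flow is perfectly free to enter and remain in such a region and no contradiction with monotonicity arises. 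Your claim that the choice of $d$ makes the $\cS_c$-only selection ``locked in from the start of Stage~2'' is also false: after one step all inner products $(p^1)^\top E_s^1$ are $O(d^{-1/8})$, so $q_i \approx 1/T$ for every token and $p^1$ approximately selects \emph{all} tokens; no separating hyperplane is respected at initialization of the flow.

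The paper's actual mechanism is a gradient-correlation comparison, not a trapping argument. Assume $p_\infty=\hat p'/\|\hat p'\|_2$ for a competing selection and set $\overline{p_t}=\|\hat p\|_2\,p_t/\|p_t\|_2$. Because all constraints of both max-margin problems are tight (Lemma \ref{lem:sol_mm}) and $\|\hat p\|_2<(1-\mu)\|\hat p'\|_2$, for large $t$ the vector $\overline{p_t}$ achieves margin at most $1-\mu+\eps_1$ on the pairs $(s_*^X, j)$ with $j\in\cS^2_X(p_t)$, while $\hat p$ achieves margin exactly $1$. Plugging both into the gradient approximation of Lemma \ref{lem:grad_approx} and discarding the cross terms coming from the irrelevant tokens in $\cS_X(\hat p')\setminus\{s_*^X\}$ --- which is where $d\ge\bigl(\tfrac{8T(1-\mu)}{\mu\eta_0}\bigr)^4$ is used, via $|\gamma_i-\gamma_j|\le 2d^{-1/4}$ for irrelevant pairs versus $\gamma_{i_*^X}-\gamma_j\ge\eta_0/4$ for the important token --- one gets $-\hat p^\top\nabla_p\cL(\mE^1,p_t)\ge -(1-\eps)\,\overline{p_t}^{\,\top}\nabla_p\cL(\mE^1,p_t)$ for all large $t$. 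Lemma \ref{lem:converge} then forces $p_\infty=\hat p/\|\hat p\|_2$, contradicting the assumption. This comparison step, which converts the norm gap $\mu$ into a strict advantage in directional gradient, is the missing idea in your proposal.
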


The proof of Lemma \ref{lem:suff} is deferred to Appendix \ref{apx:pf_lem_suff}.
The sufficient condition of the result above requires the max-margin direction that does not select irrelevant tokens to have a larger margin than any other max-margin solution associated to a different token selection. We expect this to be the case e.g.\ for datasets where all the completely positive/negative tokens have the same $\alpha_s$. 
In fact, given the structure of the context embeddings in \eqref{eq:embd_1step}, the max-margin solution $\hat{p}$ is expected to satisfy $\hat{p}^\top v\approx 0, \hat{p}^\top E_s \approx 1, \hat{p}^\top E_{s'} \approx 0$ for all $s \in \cS_X(\hat{p})$ and $s' \in \overline{\cS_X(\hat{p})}$. Since the token embeddings at initialization are approximately orthogonal to each other, $\hat{p} \approx \sum_{s \in \cS_X(p)} E_s^0$, meaning that $\|\hat{p}\|_2 \approx \sqrt{|\cS_X(p)|},$ which implies that the sufficient condition holds.  

\section{Numerical experiments}
\begin{figure}[tbp]
  \centering
  \includegraphics[width=0.49\textwidth]{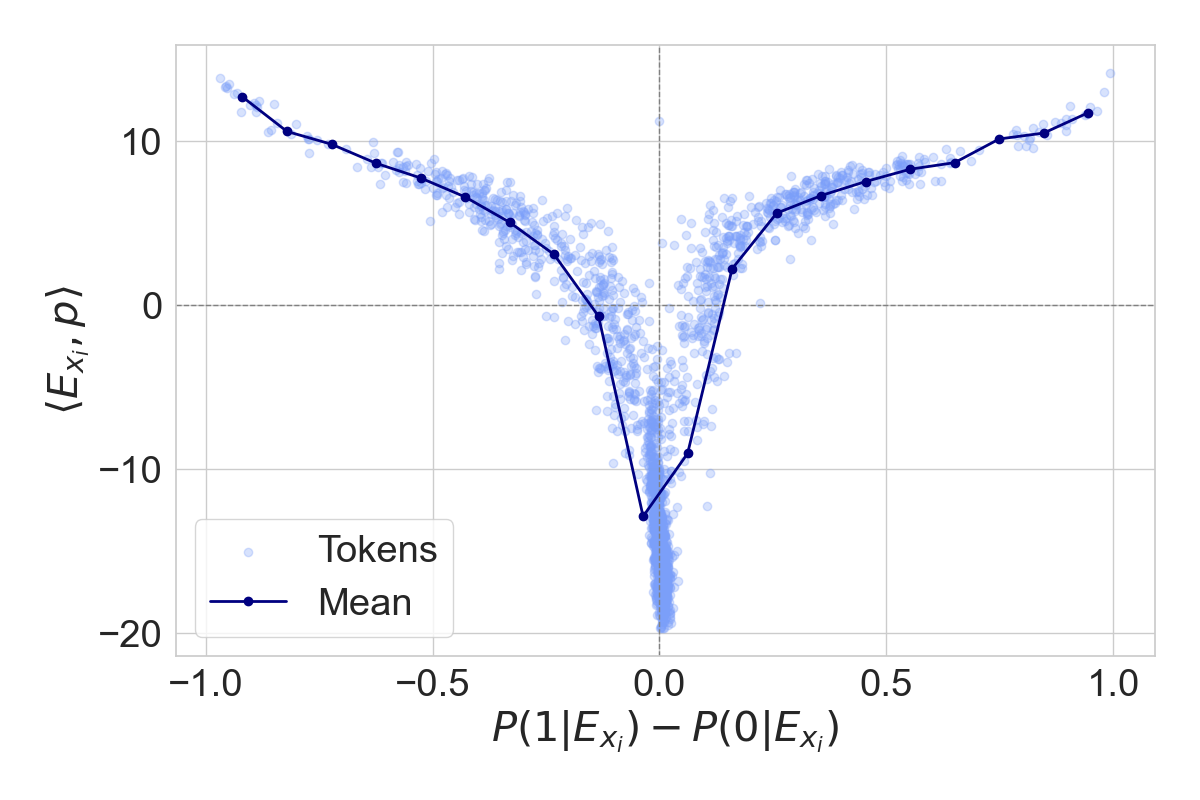}
  \includegraphics[width=0.49\textwidth]{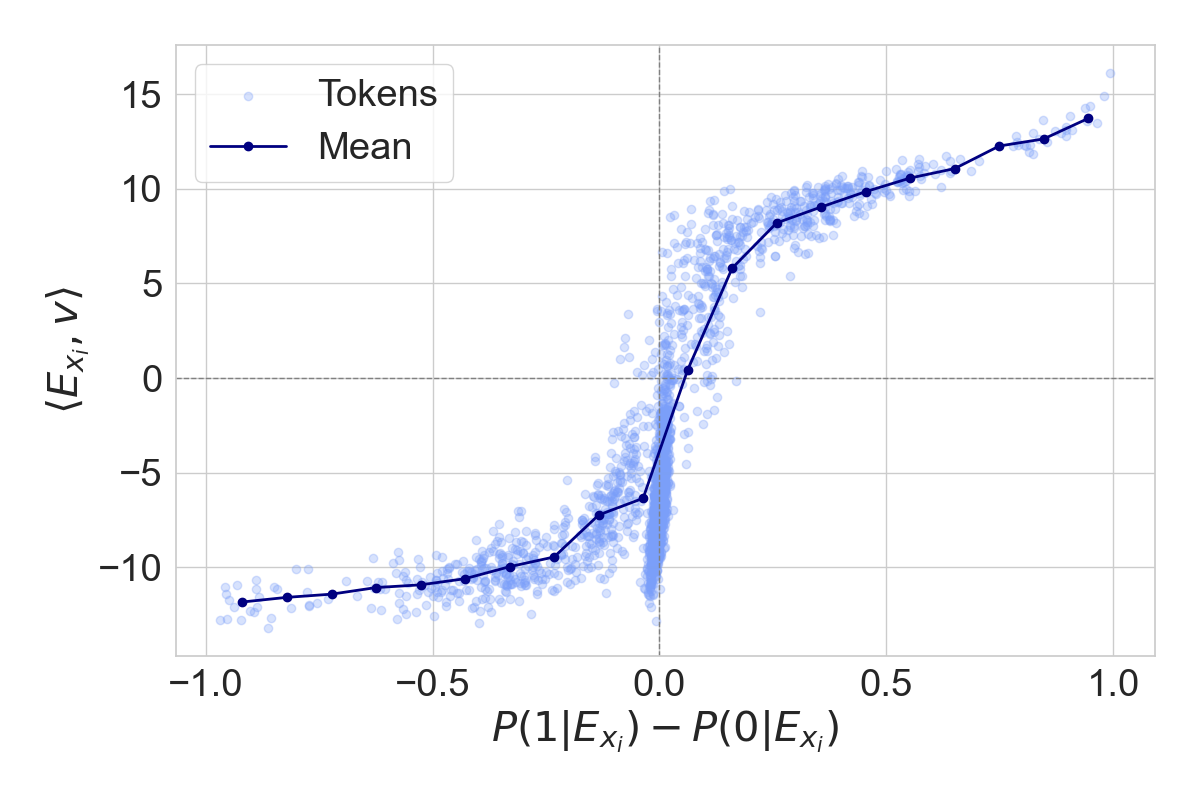}
  \caption{Dot-product of token embeddings with $\cls$ embedding $p$ (left) and regression coefficients $v$ (right), as a function of the token-wise difference in posterior probabilities for synthetic data sampled according to \eqref{eq:synth_data_exp}. We consider the one-layer attention model in \eqref{eq:f} with all parameters trained until convergence. The point cloud around zero corresponds to the tokens in the irrelevant set.}
  \label{fig:syn_corr}
\end{figure}

To support our theoretical findings, we showcase the correlation of the embeddings with the $\cls$ embedding $p$ and the output vector $v$, having trained \emph{all} the parameters with gradient descent until convergence. We consider different datasets (synthetic data in Figure \ref{fig:syn_corr}; IMDB/Yelp datasets in Figures \ref{fig:imdb_corr_2layer} and \ref{fig:imdb_corr}) and different architectures (one-layer model \eqref{eq:f} in Figures \ref{fig:syn_corr} and \ref{fig:imdb_corr}; two-layer model \eqref{eq:2layer_def} in Figure \ref{fig:imdb_corr_2layer}). Taken together, the experiments display an excellent agreement with our theory going beyond the one-layer architecture \eqref{eq:f} and also beyond the requirements on the data-generating process.
Specifically, the trained embeddings capture the importance of the corresponding tokens: the dot-product with $v$ is proportional to how frequently the token appears in positive sequences rather than in negative ones, and the dot-product with  $p$ is proportional to the modulus of such frequency. 
We detail below the experimental design.

\textbf{Synthetic data.} Let us define the data-generating process for the synthetic experiments in Figure \ref{fig:syn_corr}. The data is generated according to a $K$-level model. Namely, the vocabulary set $\mathcal{S}$ is partitioned as 
\begin{equation}\label{eq:synth_partition}
    \mathcal{S} = \tilde{\mathcal{S}} \cup \left\{\mathcal{S}^{-1}_k\right\}_{k=1}^K \cup \left\{\mathcal{S}^{+1}_k\right\}_{k=1}^K.
\end{equation}
Here, $\tilde{\mathcal S}$ contains \emph{irrelevant} tokens appearing in both positive and negative contexts with equal probability, while $\mathcal{S}^{+1}_k$ and $\mathcal{S}^{-1}_k$ (for $k\in \{1, \ldots, K\})$ contain tokens appearing \emph{mostly} in positive and negative contexts, respectively. Formally, define the importance levels $\tilde{\delta},\delta_1,\dots,\delta_K>0$. Then, given the sequence label $y \in \{-1,+1\}$ and $s\in\mathcal{S}$, we sample the tokens from the vocabulary as 
\begin{equation}\label{eq:synth_data_exp}
    p(s|y) = \begin{cases}
        \frac{1-\tilde{\delta}}{|\tilde{\mathcal{S}}|}, & s\in \tilde{\mathcal{S}},\\
        \frac{\tilde{\delta}(1-\delta_{k})}{\sum_{k=1}^{K}|\mathcal{S}_k^{y}|}, & s \in \mathcal{S}_k^{y}, \\
        \frac{\tilde{\delta}\delta_k}{\sum_{k=1}^{K}|\mathcal{S}_k^{\neg y}|}, & s \in \mathcal{S}_k^{\neg y},
    \end{cases}
\end{equation}
where $\neg$ denotes the binary inversion, i.e., $\neg (+1) = -1$ and $\neg (-1) = +1$. The law \eqref{eq:synth_data_exp} implies the following posterior distribution:
\begin{equation}\label{eq:synth_posterior}
    p(y|s) = \begin{cases}
        1/2, & s \in \tilde{\mathcal{S}}, \\
        1 - \delta_k, & s \in \mathcal{S}_k^{y}, \\
        \delta_k, & s \in \mathcal{S}_{k}^{\neg y}.
    \end{cases}
\end{equation}
From \eqref{eq:synth_posterior}, it is clear that \emph{(i)} $\tilde{\mathcal{S}}$ contains \emph{irrelevant} tokens as the posterior is uniform, and \emph{(ii)} $\delta_k$ quantifies the importance of the tokens in $\mathcal{S}_k^{\pm 1}$ by skewing the posterior to be $(\delta_k, 1-\delta_k)$. 
For the experiments in Figure \ref{fig:syn_corr}, we select the following hyper-parameters:
$|\mathcal{S}| = 2048$, $K=8$ and sequence length $T=256$;
$|\mathcal{S}_k^{+1}| = |\mathcal{S}_k^{-1}| $ with $|\mathcal{S}_k^{+1}| = 4 + 2^{k-1}$, and $|\tilde{\mathcal{S}}| = 964$;
$\tilde{\delta} = 0.05$ and $\{\delta_k\}_{k=1}^K = \{0.45, 0.35, 0.3, 0.25, 0.2, 0.1,0.05, 0.02\}$.

Figure \ref{fig:syn_corr} shows a clear separation between positive and negative tokens (right plot with the dot-product $\langle E_{x_i}, v \rangle$), and the selection mechanism ($\cls$ token) assigns high weights to tokens that have larger importance $\delta_k$ (left plot with the dot-product $\langle E_{x_i}, p \rangle$).

\begin{figure}[t]
  \centering
  \includegraphics[width=0.49\textwidth]{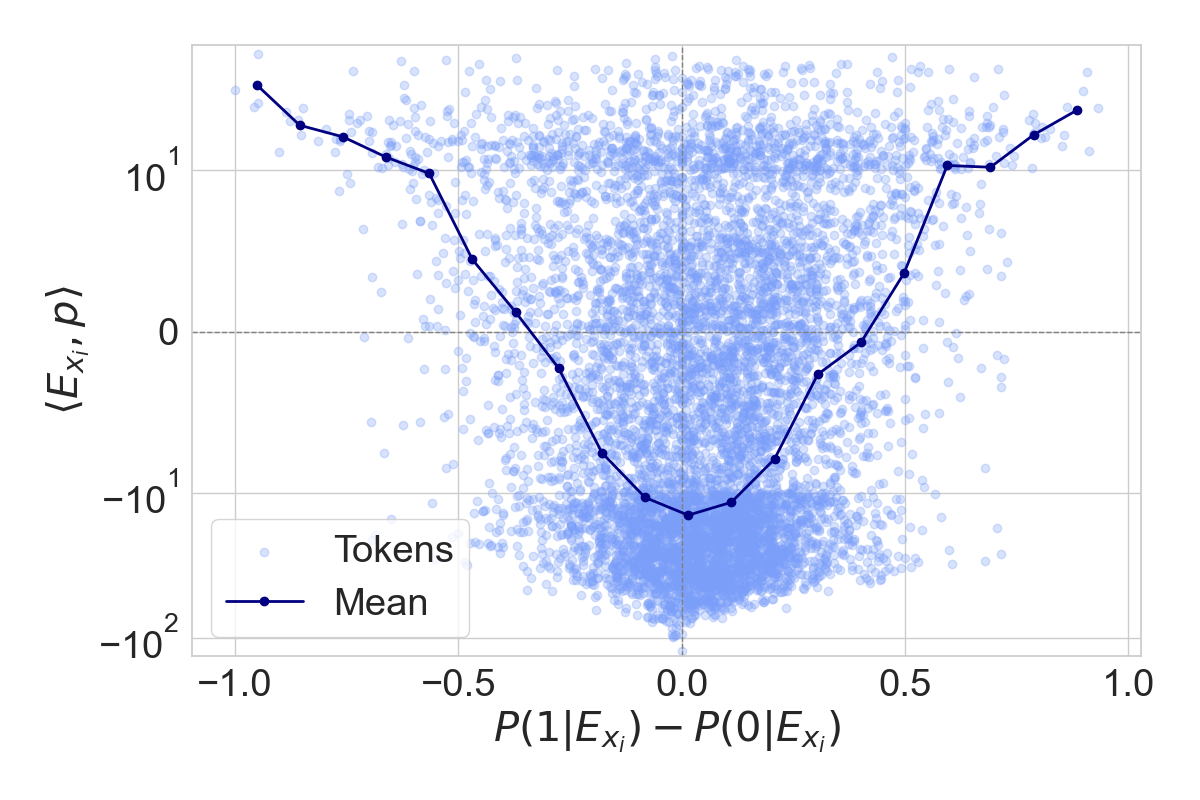}
  \includegraphics[width=0.49\textwidth]{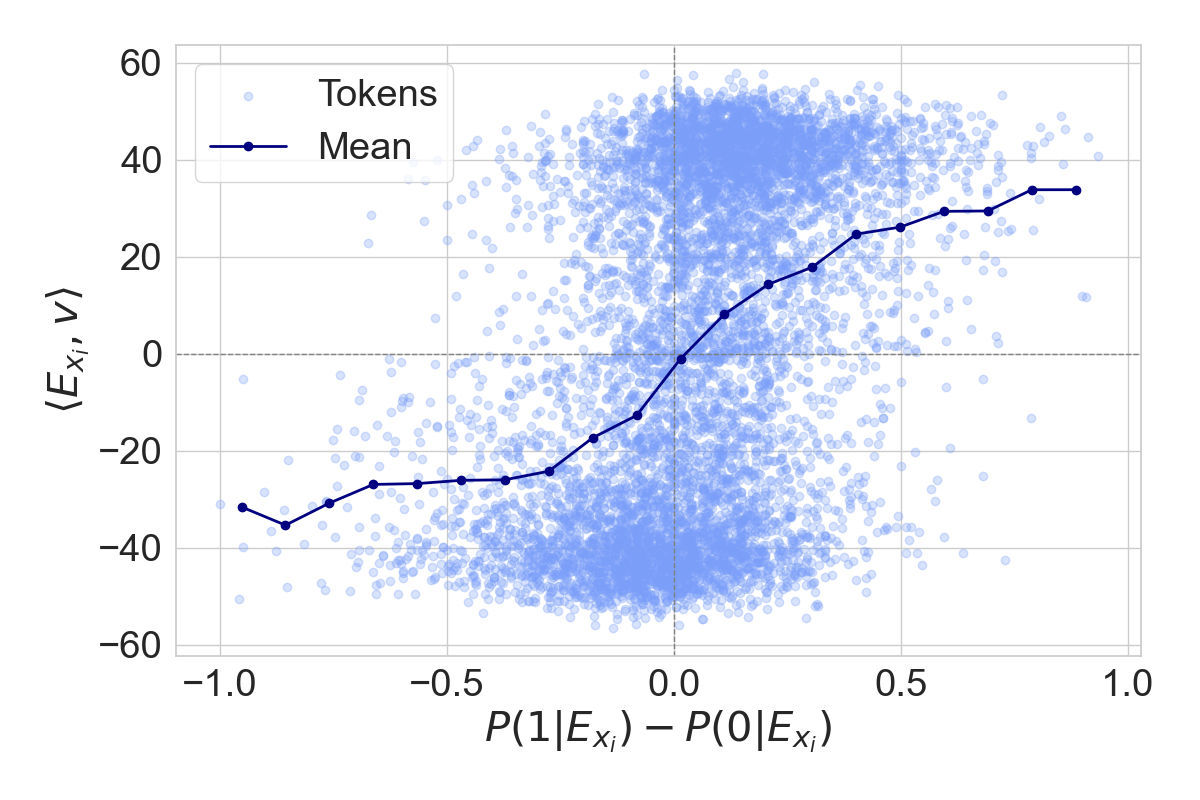}

  \includegraphics[width=0.49\textwidth]{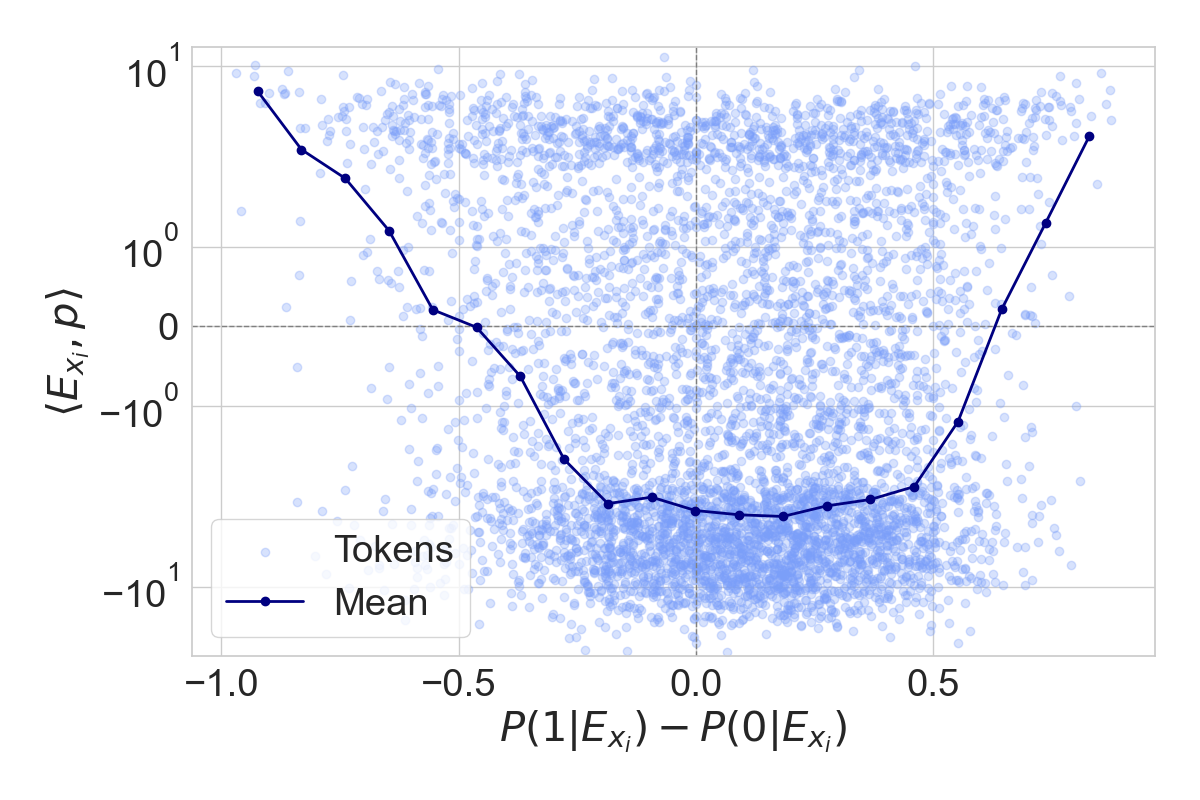}
  \includegraphics[width=0.49\textwidth]{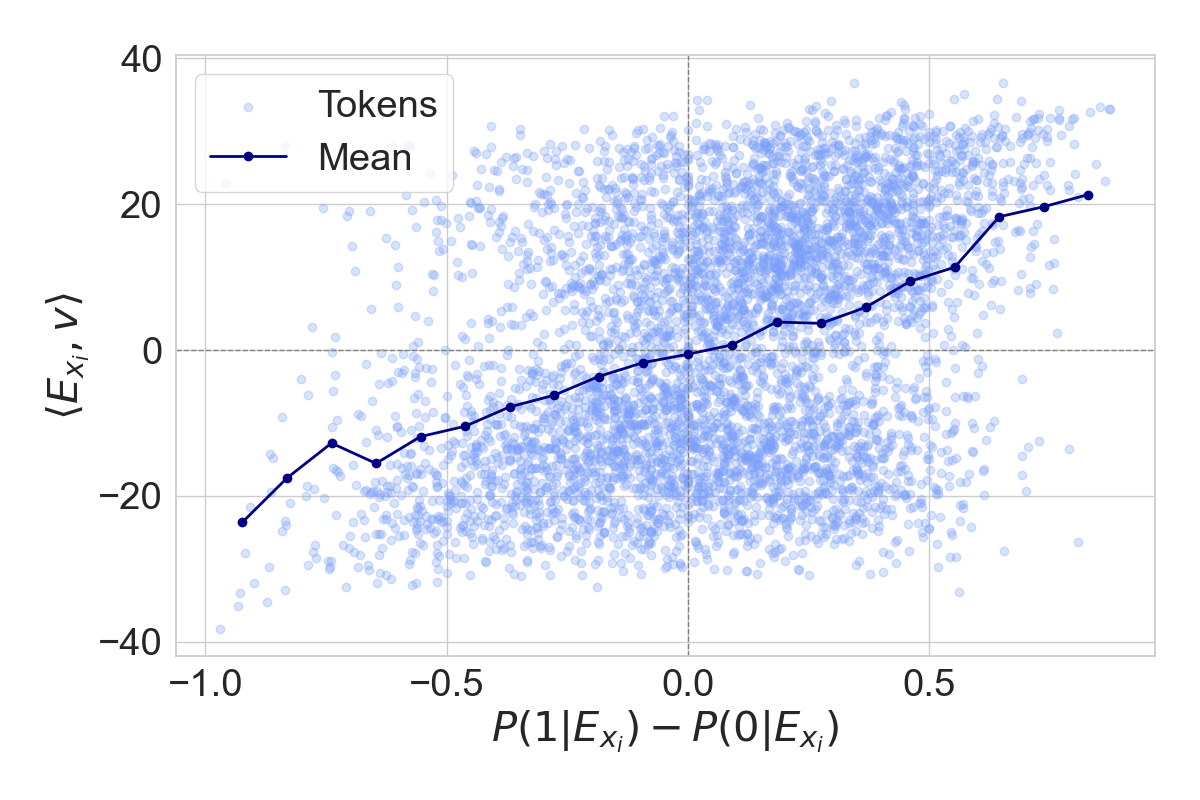}
  \caption{Dot-product of token embeddings with $\cls$ embedding $p$ and regression coefficients $v$, as a function of the token-wise difference in posterior for IMDB dataset (top row) and Yelp dataset (bottom row). We consider the one-layer attention model in \eqref{eq:f} with all parameters trained until convergence.}
  \label{fig:imdb_corr}
\end{figure}

\textbf{IMDB and Yelp datasets.} The IMDB dataset\footnote{\url{https://www.kaggle.com/datasets/lakshmi25npathi/imdb-dataset-of-50k-movie-reviews}} consists of $50000$ reviews of average length $239$ words per review, associated to either a positive or a negative sentiment.
Yelp reviews\footnote{\url{https://www.kaggle.com/datasets/yelp-dataset/yelp-dataset}} provide a much larger selection. To align the data size and sequence length with the IMDB dataset, we randomly subsample a portion of the Yelp dataset constrained on the sequence length, i.e., we select reviews which have at least $1000$ and not more than $1500$ characters. In addition, Yelp reviews provide a five-star ranking, which we convert to the binary sentiment based on the following rule: 1/2 stars reviews are assigned label $-1$; 4/5 star reviews are assigned label $+1$; neutral reviews, i.e.,  3-star score, are removed. We adhere to a typical preprocessing pipeline for both datasets: we start by cleaning the data from punctuation symbols and omitting the stop-words, followed by an application of stemming; and we use the Bert tokenizer from Hugging Face\footnote{\url{https://huggingface.co/google-bert/bert-base-uncased}} to tokenize sequences. Tokens that appear less than $50$ times are purged.

The numerical simulations for both datasets are reported in Figure \ref{fig:imdb_corr}, which displays a phenomenology similar to that obtained for synthetic data in Figure \ref{fig:syn_corr}, thus providing additional grounding for our theoretical claims.

\textbf{Two-layer model.} We also consider the following two-layer model: 
\begin{equation}\label{eq:2layer_def}
\begin{split}
   \mE'_X &= \mathrm{LayerNorm}(\sfmax(\mE_X \mE_X^\top) \mE_X+\mE_X), \hspace{2mm}     f(X;p,\mE) = \sfmax( p^\top (\mE'_X)^\top ) \mE'_X v,
\end{split}
\end{equation}
which includes both a skip connection and the layer-norm.
We note that, for both IMDB and Yelp data, the model in  \eqref{eq:2layer_def} achieves significantly smaller loss values at convergence (of the order of $10^{-5}$, in contrast to the order of $10^{-1}$ achieved by the model in \eqref{eq:f}). However, even if this model is more complex than the one analyzed in Section \ref{sec:1step_1impt}, the results in Figure \ref{fig:imdb_corr_2layer} are still remarkably similar to those in Figures \ref{fig:syn_corr} and \ref{fig:imdb_corr}. 

Finally, we note that all plots consider on the $x$-axis the difference in posterior probabilities
\begin{equation}\label{eq:p_diff}
    p(1|\mE_{x_i}) - p(0|\mE_{x_i}) = \frac{\sum_{(X, y)\in\mathcal D}y \sum_{i=1}^T\idc_{x_i =s}}{\sum_{(X, y)\in\mathcal D}\sum_{i=1}^T\idc_{x_i =s}} 
\end{equation}
in place of the quantity $\alpha_s$ defined in  \eqref{eq:alphas}. In fact, while the quantity in \eqref{eq:alphas} appears naturally from the analysis of gradient descent, the difference in posterior probabilities provides better visuals for real data (IMDB and Yelp). The difference between \eqref{eq:alphas} and \eqref{eq:p_diff}  lies in the normalization used: the posterior difference in \eqref{eq:p_diff} is the discrepancy between counts of the token $x_i$ in positive and negative sentences normalized by the total number of occurrences of $x_i$, while the quantity in \eqref{eq:p_diff}  normalizes the discrepancy by the total number of tokens $nT$ in the datasets. For synthetic data sampled according to \eqref{eq:synth_data_exp}, due to the uniform nature of the sampling procedure, all tokens appear the same number of times. Thus, both quantities are the same up to a fixed scaling and, thus, they are equivalent.
Additional details on the hyperparameter settings for all the experimental setups are contained in Appendix \ref{appendix:hyperparams}.

\section{Conclusions and limitations}
\label{sec:discuss_lim}

In this paper, we study how the embedding vectors trained via gradient methods capture the importance of different tokens in the dataset. We theoretically characterize \emph{(i)} the context embedding $E_s$ after one gradient step, and \emph{(ii)} the implicit bias of the $\cls$ embedding $p$ after training with gradient flow until convergence. We conduct experiments on synthetic and realistic datasets which demonstrate the generality of our findings. 

A limitation of our work is that the characterization we put forward is only in terms of the first-order statistics of the tokens (i.e., the frequencies with which they occur in the dataset), and it does not describe how the model learns the causal structure between tokens. In practice, both first-order statistics and causal structure are expected to be crucial for the model to ``understand'' a text. While our theory assumes a one-layer attention model, the numerical results of Figure \ref{fig:imdb_corr_2layer} suggests that a similar qualitative picture holds more generally. 
This prompts us to conjecture that in deeper attention models with multiple heads, the earlier layers form induction heads \citep{olsson2022context} which learn the causal structure between tokens, and later layers perform classification based on the empirical statistics of the resulting $k$-tuples. We regard this investigation as an exciting future direction.

\section*{Acknowledgements}

This research was funded in whole or in part by the Austrian Science Fund (FWF) 10.55776/COE12. For the purpose of open access, the authors have applied a CC BY public copyright license to any Author Accepted Manuscript version arising from this submission. MM and AS were partially funded by the European Union (ERC, INF2, project number 101161364). Views and opinions expressed are however those of the author(s) only and do not necessarily reflect those of the European Union or the European Research Council Executive Agency. Neither the European Union nor the granting authority can be held responsible for them.
AS was also supported by the SNF Grant 204439. SO was supported by the Office of Naval Research grant N000142412289.

\bibliography{references}

\appendix

\newpage

\paragraph{Additional notation.} Throughout the appendices, to simplify the notation, we write 
\begin{equation}
 a_i(X) := p^\top E_{x_i},\qquad    q_i(X) := \frac{\exp(a_i(X)) }{ \sum_{j=1}^T \exp(a_{j}(X))},
\end{equation}
so that $f(X; p, \mE) = \sum_{i=1}^T q_i(X)E_{x_i}^\top v$. We will drop the dependence on $X$ in $a_i(X), q_i(X)$ when there is no confusion.
We also denote 
\begin{equation}
\gamma_i(X,y) := y E_{x_i}^\top v,    
\end{equation}
dropping again the dependency on $X,y$ when there is no confusion. Finally, we define 
\begin{equation}    
g(X,y):= \frac{1}{1+\exp(y f(X; p, \mE))}.
\end{equation}

\paragraph{Properties of initialization.}

By standard concentration inequalities, with probability at least $ 1-\delta$, at initialization we have \begin{equation}\label{eq:cond}
\begin{split}
        \max &\left\{\max_{s\neq s' \in \cS}|E_s^\top E_{s'} |,  
        \max_{s \in \cS} |E_s^\top v |, \max_{s \in \cS} |E_s^\top p|, |p^\top v| 
        \right\} \leq \frac{1}{\sqrt{d}} \sqrt{2 \log\frac{ |\cS|^2}{\delta}},\\
        \max &\left\{\max_{s\in \mathcal S}\|E_s\|_2,\|p\|_2\right\}\le 2, \qquad \min_{s \in \cS} \|E_s\|_2 \geq \frac{1}{2}.
\end{split}
\end{equation}
For all results of the paper holding with probability at least $1-\delta$, we will be implicitly conditioning on \eqref{eq:cond}.

\section{Technical lemmas}

\begin{lemma}\label{lemma:gdupdate}
The gradients of the empirical loss are given by \begin{equation*}
    \begin{split}
        \nabla_{E_s} \cL(\mE,p)  &= - \hbE\left[y g(X,y) \left( \sum_{i=1}^T  (\sum_{j\neq i} ( \idc_{x_i=s} - \idc_{x_j=s}) q_{i}(X) q_{j}(X) ) E_{x_i}^\top v p + \sum_{i=1}^T \idc_{x_i = s} q_{i} v \right) \right], \\
        \nabla_{p} \cL(\mE,p)  &= - \hbE\left[y g(X,y) \left( \sum_{i=1}^T (\sum_{j\neq i} q_{i}(X) q_{j}(X) (E_{x_i} - E_{x_j})) E_{x_i}^\top v\right)  \right],
    \end{split}
\end{equation*} where we have defined $g(X,y) = \frac{1}{1+\exp(y f(X))}$.
\end{lemma}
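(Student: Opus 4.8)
The plan is to apply the chain rule to the composition defining $\cL$ and then differentiate the softmax explicitly. Write $\ell(z) = \log(1+\exp(-z))$, so that $\cL(\mE,p) = \hbE[\ell(y\, f(X;\mE,p))]$ with $\ell'(z) = -\frac{1}{1+\exp(z)}$. Hence, for any parameter $\theta \in \{E_s, p\}$, linearity of $\hbE$ and the chain rule give $\nabla_\theta \cL = \hbE[\ell'(y f(X))\, y\, \nabla_\theta f(X)] = -\hbE[y\, g(X,y)\, \nabla_\theta f(X)]$, where $g(X,y) = \frac{1}{1+\exp(y f(X))}$. It thus remains only to compute $\nabla_p f(X)$ and $\nabla_{E_s} f(X)$ for a fixed sequence $X$.

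For the $\cls$ embedding $p$: recall $f(X) = \sum_{i=1}^T q_i\, E_{x_i}^\top v$, with $q_i = [\sfmax(a)]_i$ and $a_j = p^\top E_{x_j}$. The standard softmax Jacobian is $\partial q_i / \partial a_j = q_i(\idc_{i=j} - q_j)$, and $\nabla_p a_j = E_{x_j}$; composing, $\nabla_p q_i = q_i\big(E_{x_i} - \sum_j q_j E_{x_j}\big) = q_i \sum_{j\neq i} q_j (E_{x_i}-E_{x_j})$, where the last equality uses $\sum_j q_j = 1$ and drops the vanishing $j=i$ term. Since $v$ does not depend on $p$, we get $\nabla_p f(X) = \sum_i (\nabla_p q_i)\, E_{x_i}^\top v = \sum_i \sum_{j\neq i} q_i q_j (E_{x_i}-E_{x_j})\, E_{x_i}^\top v$, which yields the claimed formula after substitution into $\nabla_p \cL = -\hbE[y\, g(X,y)\, \nabla_p f(X)]$.

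For a context embedding $E_s$: here $f(X)$ depends on $E_s$ in two distinct ways — through every logit $a_j$ with $x_j = s$, and through the explicit factors $E_{x_i}^\top v$ with $x_i = s$. The explicit dependence contributes $\sum_i \idc_{x_i=s}\, q_i\, v$. For the softmax dependence, $\nabla_{E_s} a_j = \idc_{x_j = s}\, p$, so $\nabla_{E_s} q_i = q_i\, p\,(\idc_{x_i=s} - \sum_j q_j \idc_{x_j=s}) = q_i\, p \sum_{j\neq i} q_j(\idc_{x_i=s}-\idc_{x_j=s})$, again using $\sum_j q_j = 1$ and that $j=i$ contributes nothing. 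Multiplying by $E_{x_i}^\top v$ and summing over $i$, the softmax part equals $\big(\sum_i \sum_{j\neq i}(\idc_{x_i=s}-\idc_{x_j=s})\, q_i q_j\, E_{x_i}^\top v\big)\, p$. Adding the two parts and substituting into $\nabla_{E_s}\cL = -\hbE[y\, g(X,y)\, \nabla_{E_s} f(X)]$ gives the statement.

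I do not expect a genuine obstacle here: the only points requiring care are that a token $s$ may occur at several positions of $X$ (hence the sums over $\idc_{x_j=s}$ rather than a single position index) and that the two separate dependencies of $f$ on $E_s$ must both be tracked; beyond this, the argument is just the softmax differentiation identity together with the outer chain rule for the logistic loss.
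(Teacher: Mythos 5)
Your proposal is correct and follows essentially the same route as the paper's proof: chain rule through the logistic loss, then differentiation of the softmax weights, tracking both the logit dependence and the explicit $E_{x_i}^\top v$ dependence on $E_s$. The only cosmetic difference is that you invoke the softmax Jacobian identity $\partial q_i/\partial a_j = q_i(\idc_{i=j}-q_j)$ and compose with $\nabla_\theta a_j$, whereas the paper differentiates the softmax quotient directly; both yield the same intermediate formulas for $\nabla_{E_s} q_i$ and $\nabla_p q_i$.
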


\begin{proof}
 We start by taking the gradient of $q_{i}$ as \begin{equation*}
 \begin{split}
     \nabla_{E_s} q_{i}(X) &= \frac{\idc_{x_i=s} \exp\left(E_{x_i}^\top p \right) p \left(\sum_{j=1}^T  \exp(E_{x_j}^\top p)\right) - \left(\sum_{j=1}^T \idc_{x_j=s}  \exp(E_{x_j}^\top p) p\right) \exp(E_{x_i}^\top p ) }{\left(\sum_{j=1}^T  \exp(E_{x_j}^\top p)\right)^2} \\
     & = \frac{ p \sum_{j=1}^T( \idc_{x_i=s} - \idc_{x_j=s}) \exp(E_{x_j}^\top p)\exp\left(E_{x_i}^\top p\right)}{\left(\sum_{j=1}^T  \exp(E_{x_j}^\top p)\right)^2} \\
     & = p \left(\sum_{j=1}^T ( \idc_{x_i=s} - \idc_{x_j=s}) q_{i} q_{j} \right) \\
     & = p \left(\sum_{j\neq i} ( \idc_{x_i=s} - \idc_{x_j=s}) q_{i} q_{j} \right), \\
     \end{split}
     \end{equation*}
     \begin{equation*}
 \begin{split}
     \nabla_p q_{i}(X) &= \frac{\left(\exp(E_{x_i}^\top p) E_{x_i}\right) \left(\sum_{j=1}^T \exp(E_{x_j}^\top p)\right) - \sum_{j=1}^T \exp(E_{x_j}^\top p) E_{x_j} \exp(E_{x_i}^\top p) }{\left(\sum_{j=1}^T \exp(E_{x_j}^\top p)\right)^2} \\
     & = \frac{\sum_{j=1}^T \exp(E_{x_j}^\top p) \exp(E_{x_i}^\top p) (E_{x_i} - E_{x_j} )}{\left(\sum_{j=1}^T \exp(E_{x_j}^\top p)\right)^2} \\
     & = \sum_{j=1}^T q_{i} q_{j} (E_{x_i} - E_{x_j}) \\
     & = \sum_{j\neq i} q_{i} q_{j} (E_{x_i} - E_{x_j}) .
     \end{split}
 \end{equation*}
Next, we look at the gradient of $f(X; p, \mE)$:\begin{equation*}
    \begin{split}
        \nabla_{E_s} f(X; p, \mE) &= \sum_{i=1}^T \left(\nabla_{E_s} q_{i} \right)E_{x_i}^\top v + \sum_{i=1}^T \idc_{x_i = s}  q_{i} v  \\
        & =\sum_{i=1}^T  \left(\sum_{j\neq i} ( \idc_{x_i=s} - \idc_{x_j=s}) q_{i} q_{j} \right) E_{x_i}^\top v p + \sum_{i=1}^T \idc_{x_i = s}  q_{i} v, \\
        \nabla_p f(X; p, \mE) & = \sum_{i=1}^T \left(\sum_{j\neq i} q_{i} q_{j} (E_{x_i} - E_{x_j})\right) E_{x_i}^\top v.
    \end{split}
\end{equation*} 
This allows us to conclude that 
\begin{equation*}
    \begin{split}
        \nabla_{E_s} \cL(\mE,p) &= \hbE\left[\frac{-y}{1+\exp(y f(X; p, \mE))} \nabla_{E_s} f(X; p, \mE)  \right] \\
        & =\hbE\Biggl[\frac{-y}{1+\exp(y f(X; p, \mE))} \biggl( \sum_{i=1}^T  \left(\sum_{j\neq i} ( \idc_{x_i=s} - \idc_{x_j=s}) q_{i}(X) q_{j}(X) \right) E_{x_i}^\top v p \\
        &\hspace{23em}+ \sum_{i=1}^T \idc_{x_i = s} q_{i}(X) v \biggr) \Biggr] , \\
        \nabla_{p} \cL(\mE,p) &= \hbE\left[\frac{-y}{1+\exp(y f(X; p, \mE))} \nabla_{p} f(X; p, \mE)  \right] \\
        & = \hbE\left[\frac{-y}{1+\exp(y f(X; p, \mE))} \left( \sum_{i=1}^T \left(\sum_{j\neq i} q_{i}(X) q_{j}(X) (E_{x_i} - E_{x_j})\right) E_{x_i}^\top v\right)  \right],
    \end{split}
\end{equation*}
thus concluding the proof.
\end{proof}

\begin{lemma}
\label{lem:dir_grad}
    For any vector $\widehat{p},$ we have \begin{equation*}
         - \widehat{p}^\top \nabla_p\mathcal L(\mE,p) = \hbE \left[ g(X,y) \left(\sum_{i=1}^T\sum_{j>i} (\widehat{a}_i(X) - \widehat{a}_j(X)) q_i(X) q_j(X) (\gamma_i(X,y) - \gamma_j(X,y)) \right) \right],
    \end{equation*} where $\widehat{a}_i=\widehat{p}^\top E_{x_i}$ for all $i\in\{1, \ldots, T\}$. 
\end{lemma}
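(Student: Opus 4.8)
\textbf{Proof plan for Lemma \ref{lem:dir_grad}.}

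The plan is to start from the expression for $\nabla_p \cL(\mE,p)$ given in Lemma \ref{lemma:gdupdate}, take the inner product with an arbitrary vector $\widehat p$, and then symmetrize the resulting double sum over the pairs $(i,j)$. Concretely, from Lemma \ref{lemma:gdupdate} we have
\begin{equation*}
    -\widehat p^\top \nabla_p \cL(\mE,p) = \hbE\left[ y\, g(X,y) \sum_{i=1}^T \Bigl(\sum_{j\neq i} q_i q_j\, \widehat p^\top(E_{x_i}-E_{x_j})\Bigr) E_{x_i}^\top v \right].
\end{equation*}
Using the shorthand $\widehat a_i = \widehat p^\top E_{x_i}$ and $\gamma_i = y E_{x_i}^\top v$, the bracketed quantity becomes $\sum_{i\neq j} q_i q_j (\widehat a_i - \widehat a_j)\gamma_i$, where the outer $y$ has been absorbed into $\gamma_i$. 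So the whole expression is $\hbE\bigl[g(X,y)\sum_{i\neq j} q_i q_j (\widehat a_i-\widehat a_j)\gamma_i\bigr]$.

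The key step is to pair up the term indexed by $(i,j)$ with the term indexed by $(j,i)$. For a fixed unordered pair $\{i,j\}$ with $i<j$, the two contributions are $q_i q_j(\widehat a_i-\widehat a_j)\gamma_i$ and $q_j q_i(\widehat a_j-\widehat a_i)\gamma_j = -q_i q_j(\widehat a_i-\widehat a_j)\gamma_j$. Adding them gives $q_i q_j(\widehat a_i-\widehat a_j)(\gamma_i-\gamma_j)$. Summing over all unordered pairs $i<j$ yields
\begin{equation*}
    \sum_{i\neq j} q_i q_j(\widehat a_i-\widehat a_j)\gamma_i = \sum_{i=1}^T\sum_{j>i} (\widehat a_i-\widehat a_j) q_i q_j(\gamma_i-\gamma_j),
\end{equation*}
and substituting back into the expectation gives exactly the claimed identity. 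I should be slightly careful to note that the diagonal terms $i=j$ contribute nothing (the factor $\widehat a_i - \widehat a_i = 0$), so extending or restricting the sum over $j\neq i$ versus all $j$ is harmless; this is why the original $\sum_{j\neq i}$ can be treated as $\sum_j$ when convenient.

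There is essentially no analytic obstacle here — the lemma is a purely algebraic rearrangement. The only thing to get right is the bookkeeping of the antisymmetrization: the factor $(\widehat a_i - \widehat a_j)$ is antisymmetric in $(i,j)$ while $q_i q_j$ is symmetric, so pairing $(i,j)$ with $(j,i)$ converts the $\gamma_i$ (which is not symmetric) into the antisymmetric combination $\gamma_i - \gamma_j$, and this is precisely what produces the final symmetric-sum form. I would present the computation in two display lines: first the substitution of the gradient formula and absorption of $y$ into $\gamma_i$, then the pair-symmetrization identity, and conclude.
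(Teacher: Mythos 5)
Your proposal is correct and follows essentially the same route as the paper: both start from the gradient formula of Lemma \ref{lemma:gdupdate}, absorb $y$ into $\gamma_i$, and antisymmetrize the double sum over ordered pairs $(i,j)$ into a sum over $j>i$ (the paper routes through the matrix form $\Diag(q_X)-q_Xq_X^\top$ and the identity $1-q_i=\sum_{j\neq i}q_j$ before symmetrizing, but this is the same computation in a slightly different order). No gaps.
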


\begin{proof}
    From Lemma \ref{lemma:gdupdate}, we have \begin{equation*}
        \begin{split}
            \nabla_{p} \cL(\mE,p)  &= - \hbE\left[y g(X,y) \left( \sum_{i=1}^T \left(\sum_{j\neq i} q_{i}(X) q_{j}(X) (E_{x_i} - E_{x_j})\right) E_{x_i}^\top v\right)  \right] \\
            &= - \hbE\left[ g(X,y) \left( \sum_{i=1}^T \left(\sum_{j\neq i} q_{i}(X) q_{j}(X) (E_{x_i} - E_{x_j})\right) \gamma_i(X,y) \right)  \right]\\
            & = - \hbE\left[ g(X,y) \mE_X^\top \left(\Diag(q_X) - q_X q_X^\top \right) \gamma(X,y)  \right],
        \end{split} 
    \end{equation*} where $q_X = [q_1(X), \dots, q_T(X)]^\top, \gamma(X,y) = [\gamma_1(X,y), \dots \gamma_T(X,y)]^\top$ and $\Diag(q_X)$ denotes the diagonal matrix with $[\Diag(q_X)]_{i,i} = q_i(X).$

    Thus, letting $\widehat{a}=[\widehat{a}_1, \ldots, \widehat{a}_T]\in \mathbb R^T$ with $\widehat{a}_i = \hat{p}^\top E_{x_i}$, we have \begin{equation*}
        \begin{split}
            -\widehat{p}^\top \nabla_p\cL(\mE,p) & =  \hbE\left[ g(X,y) \widehat{p}^\top \mE_X^\top (\Diag(q_X) - q_X q_X^\top ) \gamma(X,y)  \right] \\
            & = \hbE\left[ g(X,y) \widehat{a}^\top (\Diag(q_X) - q_X q_X^\top ) \gamma(X,y)  \right] \\
            & = \hbE\left[ g(X,y) \left( \sum_{i=1}^T  \widehat{a}_i q_i (1-q_i) \gamma_i - \sum_{i=1}^T \sum_{j \neq i} \widehat{a}_i q_i q_j \gamma_j\right) \right] \\
            & = \hbE\left[ g(X,y) \left( \sum_{i=1}^T \sum_{j \neq i} \widehat{a}_i q_i q_j (\gamma_i - \gamma_j)\right) \right] \quad (\text{use $1-q_i = \sum_{j \neq i} q_j$}) \\
            & = \hbE\left[ g(X,y) \left( \frac{1}{2}  \sum_{i=1}^T \sum_{j \neq i} \widehat{a}_i q_i q_j (\gamma_i - \gamma_j) + \frac{1}{2} \sum_{j=1}^T \sum_{i \neq j} \widehat{a}_j q_i q_j (\gamma_j - \gamma_i) \right) \right] \\
            & = \hbE\left[ g(X,y) \left( \frac{1}{2}  \sum_{i=1}^T \sum_{j \neq i} (\widehat{a}_i - \widehat{a}_j )q_i q_j (\gamma_i - \gamma_j) \right) \right] \\
            & = \hbE\left[ g(X,y) \left( \sum_{i=1}^T \sum_{j > i} (\widehat{a}_i - \widehat{a}_j )q_i q_j (\gamma_i - \gamma_j) \right) \right].
        \end{split}
    \end{equation*}
\end{proof}

\begin{lemma}[Convergence lemma]
\label{lem:converge}
    Let $\|p_t\|_2 \rightarrow \infty$ and suppose there exists  $\widehat{p}$ such that, for any $\eps >0$, there is a $\bar t(\eps)$ ensuring \begin{equation}
    \label{eq:dir_grad}
        -\frac{\widehat{p}^\top}{\|\widehat{p}\|_2} \nabla_p \cL(\mE, p_t) \geq -(1-\eps) \frac{p_t^\top}{\|p_t\|_2} \nabla_p \cL(\mE, p_t),\qquad \text{ for all }t\ge \bar t(\epsilon).
     \end{equation} Then, if $\lim_{t \rightarrow \infty} \frac{p_t}{\|p_t\|_2}$ exists, we have \begin{equation*}
         \lim_{t \rightarrow \infty} \frac{p_t}{\|p_t\|_2} = \frac{\widehat{p}}{\|\widehat{p}\|_2}.
     \end{equation*}
\end{lemma}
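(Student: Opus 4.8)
The plan is to show that the normalized iterate $p_t/\|p_t\|_2$ converges to $\widehat p/\|\widehat p\|_2$ by tracking the quantity $\rho_t := \dfrac{\langle p_t, \widehat p/\|\widehat p\|_2 \rangle}{\|p_t\|_2}$, which is the cosine of the angle between $p_t$ and $\widehat p$, and by showing that $\rho_t \to 1$. Since $\langle p_t/\|p_t\|_2, \widehat p/\|\widehat p\|_2 \rangle \le 1$ always, and since by hypothesis $\lim_{t\to\infty} p_t/\|p_t\|_2$ exists (call it $p_\infty$), it suffices to prove $\langle p_\infty, \widehat p/\|\widehat p\|_2\rangle \ge 1$, i.e.\ $\liminf_t \rho_t \ge 1$. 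The key is the standard implicit-bias identity: along gradient flow, $\frac{\dd}{\dd t}\|p_t\|_2 = \frac{\langle p_t, \dot p_t\rangle}{\|p_t\|_2} = -\frac{p_t^\top}{\|p_t\|_2}\nabla_p\cL(\mE,p_t)$, while $\frac{\dd}{\dd t}\langle p_t, \widehat p\rangle = \langle \widehat p, \dot p_t\rangle = -\widehat p^\top \nabla_p \cL(\mE,p_t)$. Because the loss is monotone and (for separable-type problems) $\|p_t\|_2\to\infty$ forces $-p_t^\top\nabla_p\cL(\mE,p_t) > 0$ for large $t$, both derivatives are eventually positive, so $\|p_t\|_2$ and $\langle p_t,\widehat p\rangle$ are both eventually increasing and diverge.

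Next I would combine these two derivative formulas with the hypothesis \eqref{eq:dir_grad}. Dividing, for $t \ge \bar t(\eps)$,
\begin{equation}
\frac{\frac{\dd}{\dd t}\langle p_t, \widehat p/\|\widehat p\|_2\rangle}{\frac{\dd}{\dd t}\|p_t\|_2} = \frac{-\widehat p^\top \nabla_p\cL(\mE,p_t)/\|\widehat p\|_2}{-p_t^\top\nabla_p\cL(\mE,p_t)/\|p_t\|_2} \ge 1-\eps,
\end{equation}
where we used that the denominator is positive. Integrating this differential inequality from $\bar t(\eps)$ to $t$ gives
\begin{equation}
\langle p_t, \widehat p/\|\widehat p\|_2\rangle - \langle p_{\bar t(\eps)}, \widehat p/\|\widehat p\|_2\rangle \ge (1-\eps)\left(\|p_t\|_2 - \|p_{\bar t(\eps)}\|_2\right).
\end{equation}
Dividing by $\|p_t\|_2$ and letting $t\to\infty$, using $\|p_t\|_2\to\infty$ to kill the bounded additive constants, yields $\liminf_t \rho_t \ge 1-\eps$. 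Since $\eps>0$ was arbitrary, $\liminf_t \rho_t \ge 1$, hence $\lim_t \rho_t = 1$.

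Finally, I would convert $\rho_t\to 1$ into directional convergence: writing $u_t := p_t/\|p_t\|_2$ and $u_* := \widehat p/\|\widehat p\|_2$, both unit vectors, $\|u_t - u_*\|_2^2 = 2(1-\langle u_t,u_*\rangle) = 2(1-\rho_t) \to 0$, so $u_t\to u_*$; and since $p_\infty = \lim_t u_t$ exists by assumption, $p_\infty = u_* = \widehat p/\|\widehat p\|_2$, which is the claim. The one point needing a little care — and the place I expect to spend the most effort — is justifying that the denominator $-p_t^\top\nabla_p\cL(\mE,p_t)$ is \emph{strictly} positive for all large $t$ (so that the division and the ``killing of constants'' step are legitimate, and the monotone-increasing claims hold): this should follow from $\|p_t\|_2\to\infty$ together with Lemma \ref{lem:dir_grad} applied with $\widehat p = p_t$, since that expression is a sum of terms $(\widehat a_i - \widehat a_j)q_iq_j(\gamma_i-\gamma_j)$ which, under the data/selection structure, are nonnegative and not all zero once $p_t$ has grown large enough to separate the selected tokens — but one must also rule out the degenerate possibility that $p_t^\top\nabla_p\cL(\mE,p_t)=0$ along a subsequence, which is precisely where the hypothesis $\|p_t\|_2\to\infty$ (rather than mere unboundedness) and the earlier structural lemmas are used.
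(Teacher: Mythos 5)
Your argument is essentially the paper's: integrate the hypothesis along the flow to get $\langle p_t,\widehat p/\|\widehat p\|_2\rangle - \langle p_{\bar t(\eps)},\widehat p/\|\widehat p\|_2\rangle \ge (1-\eps)(\|p_t\|_2-\|p_{\bar t(\eps)}\|_2)$, divide by $\|p_t\|_2\to\infty$ to kill the constants, conclude $\liminf_t \rho_t\ge 1-\eps$, and use arbitrariness of $\eps$. The one place you diverge is the detour through the \emph{ratio} of the two derivatives, which forces you to worry (in your final paragraph) about whether $-p_t^\top\nabla_p\cL(\mE,p_t)$ is strictly positive for large $t$. That worry is unnecessary: condition \eqref{eq:dir_grad} combined with $\dot p_t=-\nabla_p\cL(\mE,p_t)$ is already the \emph{linear} differential inequality
\begin{equation*}
\frac{\dd}{\dd t}\,\frac{\widehat p^\top p_t}{\|\widehat p\|_2} \;\ge\; (1-\eps)\,\frac{\dd}{\dd t}\|p_t\|_2,
\end{equation*}
which you can integrate over $[\bar t(\eps),t]$ directly, with no sign condition on either side and no division. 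This is exactly what the paper does, and it makes your entire last paragraph (the appeal to Lemma \ref{lem:dir_grad}, the separability structure, and ruling out $p_t^\top\nabla_p\cL=0$ along subsequences) superfluous. Aside from that self-inflicted complication, the proof is correct; the conversion from $\rho_t\to 1$ to $\|u_t-u_*\|_2\to 0$ at the end is fine (the paper states the conclusion more tersely but means the same thing).
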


\begin{proof}
    By the definition of the gradient flow, \eqref{eq:dir_grad} is equivalent to \begin{equation*}
        \frac{\widehat{p}^\top}{\|\widehat{p}\|_2} \frac{\dd p_t}{\dd t} \geq (1-\eps) \frac{p_t^\top}{\|p_t\|_2} \frac{\dd p_t}{\dd t}.
    \end{equation*}

    We note that \begin{equation*}
        \frac{p_t^\top}{\|p_t\|_2} \frac{\dd p_t}{\dd t} = \frac{\dd}{\dd t} \|p_t\|_2.
    \end{equation*}

    Thus, by integrating both sides from $[\bar t(\epsilon), t],$ we have: \begin{equation*}
       \frac{\widehat{p}^\top}{\|\widehat{p}\|_2} (p_t - p_{\bar t(\eps)}) \geq (1-\eps) (\|p_t\|_2 - \|p_{\bar t(\eps)}\|_2),
    \end{equation*}
which gives \begin{equation*}
        \frac{\widehat{p}^\top p_t}{\|\widehat{p}\|_2 \|p_t\|_2} \geq (1-\eps) - (1-\eps) \frac{ \|p_{\bar t(\eps)}\|_2 }{\|p_t\|_2} + \frac{\widehat{p}^\top p_{\bar t(\eps)}}{\|\widehat{p}\|_2 \|p_t\|_2}.
    \end{equation*} Since $p_{\bar t(\eps)}, \widehat{p}$ have finite norm for fixed $\eps$, by taking the limit on both sides, we have \begin{equation*}
        \liminf_{t \rightarrow \infty}   \frac{\widehat{p}^\top p_t}{\|\widehat{p}\|_2 \|p_t\|_2} \geq 1-\eps.
    \end{equation*}

    As we assume that $\lim_{t \rightarrow \infty} \frac{p_t}{\|p_t\|_2}$ exist and the above argument holds for any $\eps$, we conclude \begin{equation*}
        \lim_{t \rightarrow \infty} \frac{p_t}{\|p_t\|_2} = \frac{\hat{p}}{\|\hat{p}\|_2}.
    \end{equation*}
\end{proof}

\begin{lemma}
\label{lem:bds_on_q}
    Given a sequence $X$, model parameters $\mE,p,v$, and indices $i_*, j$ s.t.\ $x_{i_*}\in \cS_X(p), x_j \in X \setminus \cS_X(p),$ the following results hold. \begin{enumerate}
        \item We have \begin{equation*}
            \frac{1}{T} \leq q_{i_*} \leq 1.
        \end{equation*}
        \item If there exist $\tau > 0$ such that $p^\top (E_{x_{i_*}} - E_{x_{j}}) \geq  \tau$ for all $x_{i_*} \in \cS_X(p)$, then we have \begin{equation*}
            q_j \leq \frac{1}{1 + \exp(\tau)}.
        \end{equation*}
        \item If there exist $\tau > 0$ such that $p^\top (E_{x_{i_*}} - E_{x_{j}}) \leq  \tau$ for all $x_{i_*} \in \cS_X(p)$, then we have \begin{equation*}
        \begin{split}
             q_j \geq \frac{1}{T \exp(\tau )}.
        \end{split}
        \end{equation*}
    \end{enumerate}
\end{lemma}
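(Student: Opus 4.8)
The plan is to unwind the definition $q_i(X) = \exp(a_i(X))/\sum_{k=1}^T \exp(a_k(X))$ directly, treating each of the three claims by an elementary manipulation of this ratio. Throughout I would fix one token $x_{i_*} \in \cS_X(p)$, which by definition of $\cS_X(p)$ satisfies $a_{i_*}(X) = \max_{1 \le k \le T} a_k(X)$; the three statements then follow from pairing this maximality with (respectively) a counting bound, a reciprocal-of-softmax computation, and a denominator bound.

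For the first claim, the upper bound $q_{i_*} \le 1$ is immediate, since $q_{i_*}$ is one coordinate of a vector of nonnegative entries summing to one. For the lower bound, I would use $a_k(X) \le a_{i_*}(X)$ for every $k$ to obtain $\sum_{k=1}^T \exp(a_k(X)) \le T\exp(a_{i_*}(X))$, so that $q_{i_*} = \exp(a_{i_*}(X))/\sum_k \exp(a_k(X)) \ge 1/T$. For the second claim, I would lower bound the reciprocal $1/q_j = \sum_{k=1}^T \exp(a_k(X) - a_j(X))$: since $x_j \notin \cS_X(p)$, the terms $k=j$ and $k=i_*$ are distinct, contributing $\exp(0)=1$ and $\exp(a_{i_*}(X)-a_j(X)) = \exp(p^\top(E_{x_{i_*}} - E_{x_j})) \ge \exp(\tau)$ respectively; dropping the remaining nonnegative terms yields $1/q_j \ge 1 + \exp(\tau)$, i.e.\ $q_j \le 1/(1+\exp(\tau))$.

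For the third claim I would instead upper bound the denominator: $\sum_{k=1}^T \exp(a_k(X)) \le T\exp(a_{i_*}(X))$ as before, and then invoke the hypothesis $a_{i_*}(X) - a_j(X) = p^\top(E_{x_{i_*}} - E_{x_j}) \le \tau$ to write $\exp(a_{i_*}(X)) \le \exp(\tau)\exp(a_j(X))$. Combining the two gives $q_j = \exp(a_j(X))/\sum_k \exp(a_k(X)) \ge \exp(a_j(X))/(T\exp(\tau)\exp(a_j(X))) = 1/(T\exp(\tau))$.

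Honestly, none of the three steps presents a genuine obstacle: the whole argument is self-contained and uses nothing beyond the definitions of $a_i$ and $q_i$. The only points requiring a little care are making sure, in the second claim, that the $k=j$ and $k=i_*$ summands are genuinely distinct — which is exactly why the hypothesis is stated for $x_j \in X \setminus \cS_X(p)$ — and, in the first and third claims, recognizing that the crude inequality $\sum_k \exp(a_k) \le T\exp(\max_k a_k)$ is the correct thing to invoke (and that it is tight enough, contributing only the harmless factor of $T$ that appears in the statement).
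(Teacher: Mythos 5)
Your proposal is correct and follows essentially the same route as the paper's proof: all three bounds come from the maximality of $a_{i_*}$ among the $a_k$'s and elementary manipulations of the softmax ratio (bounding the denominator by $T$ times the largest exponential for claims 1 and 3, and keeping only the $k=j$ and $k=i_*$ terms for claim 2). No gaps.
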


\begin{proof}
    The upper bound on $q_{i_*}$ is trivial. For the lower bound: \begin{equation*}
        \begin{split}
            q_{i_*} &= \frac{\exp(p^\top E_{x_{i_*}})}{\exp(p^\top E_{x_{i_*}}) + \sum_{j \neq i_*} \exp(p^\top E_{x_j})} \\
            & \geq \frac{\exp(p^\top E_{x_{i_*}})}{T \exp(p^\top E_{x_{i_*}}) } = \frac{1}{T}.
        \end{split}
    \end{equation*}

    If there exists $\tau > 0$ such that $p^\top (E_{x_{i_*}} - E_{x_{j}}) \geq  \tau$ for all $x_i \in \cS_X(p)$, then we have \begin{equation*}
        \begin{split}
            q_j &= \frac{1}{1 + \sum_{i \neq j} \exp(p^\top(E_{x_{i}} - E_{x_{j}}))} \\
            & \leq \frac{1}{1 + \exp(p^\top(E_{x_{i_*}} - E_{x_{j}} ))}  \\
            & \leq \frac{1}{1 + \exp(\tau)}.
        \end{split}
    \end{equation*}

    If there  exists $\tau > 0$ such that $p^\top (E_{x_{i_*}} - E_{x_{j}}) \leq  \tau$ for all $x_{i_*} \in \cS_X(p)$, then  we have \begin{equation*}
        \begin{split}
            q_{i_*} &= \frac{1}{1 + \sum_{i \neq j} \exp(p^\top(E_{x_{i}} - E_{x_{j}}))} \\
            &\geq \frac{1}{1 + (T -1 )\exp(p^\top(E_{x_{i_*}} - E_{x_{j}}) ) }  \quad (\text{by definition of $\cS_X(p)$}) \\
            &\geq \frac{1}{T \exp(\tau)}.
        \end{split}
    \end{equation*}
\end{proof}

\section{Properties after the first gradient step}


\begin{lemma}[Boundedness of the embeddings]
\label{lem:bd_embd}
    For any $\delta > 0,$ let \begin{equation*}
        d \geq \max\left\{ 256,\left( 2 \log\frac{|\cS|^2}{\delta} \right)^2   \right\},
    \end{equation*} then with probability at least $1-\delta,$ \begin{equation*}
        \max_{s\in \mathcal S} \|E_s^1\|_2 \leq 2(1 +  2\eta_0 ), \qquad \|p^1\|_2 \leq 2+11\eta_0 d^{-\frac{1}{4}}.
    \end{equation*}
\end{lemma}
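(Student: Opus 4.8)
The plan is to obtain this bound as a direct consequence of the first-step characterization in Lemma~\ref{lem:overlap_v}, together with the initialization bounds \eqref{eq:cond}. Since the proof of Lemma~\ref{lem:overlap_v} already conditions on the event \eqref{eq:cond}, all the estimates below live on one common event of probability at least $1-\delta$, so no extra union bound is needed. On that event we have the decomposition $E_s^1 = E_s^0 + \tfrac{\eta_0}{2}\alpha_s v + \err_s$ and $p^1 = p^0 + \err_p$, with $\max\{\max_{s}\|\err_s\|_2,\|\err_p\|_2\} \le 11\eta_0 d^{-1/4}$.

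For the context embeddings, I would apply the triangle inequality, $\|E_s^1\|_2 \le \|E_s^0\|_2 + \tfrac{\eta_0}{2}|\alpha_s|\,\|v\|_2 + \|\err_s\|_2$, and then plug in three elementary facts: $\|E_s^0\|_2 \le 2$ from \eqref{eq:cond}; $\|v\|_2 = 1$ because $v$ is initialized unit-norm; and $|\alpha_s| \le 1$, which follows from the definition \eqref{eq:alphas} since $\sum_{i=1}^T \idc_{x_i=s} \le T$ for each of the $n$ sequences, so the numerator of $\alpha_s$ is at most $nT$ in absolute value. This gives $\|E_s^1\|_2 \le 2 + \tfrac{\eta_0}{2} + 11\eta_0 d^{-1/4}$. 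Using $d \ge 256$, hence $d^{-1/4} \le \tfrac14$, we get $11\eta_0 d^{-1/4} \le \tfrac{11}{4}\eta_0$, and $\tfrac{\eta_0}{2} + \tfrac{11}{4}\eta_0 = \tfrac{13}{4}\eta_0 \le 4\eta_0$, so $\|E_s^1\|_2 \le 2 + 4\eta_0 = 2(1+2\eta_0)$, as claimed.

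The bound on the $\cls$ embedding is even more immediate: from Lemma~\ref{lem:overlap_v} and \eqref{eq:cond}, $\|p^1\|_2 \le \|p^0\|_2 + \|\err_p\|_2 \le 2 + 11\eta_0 d^{-1/4}$. There is no genuine obstacle here — the statement is essentially a bookkeeping corollary of Lemma~\ref{lem:overlap_v} — and the only points requiring minor care are recording $\|v\|_2 = 1$ and $|\alpha_s| \le 1$, and verifying the elementary numerical inequality $\tfrac{\eta_0}{2} + 11\eta_0 d^{-1/4} \le 4\eta_0$ under the assumption $d \ge 256$.
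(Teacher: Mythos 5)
Your proof is correct and follows essentially the same route as the paper: triangle inequality applied to the decomposition from Lemma~\ref{lem:overlap_v}, combined with the initialization bounds \eqref{eq:cond}, with $|\alpha_s|\le 1$ and $\|v\|_2=1$ controlling the middle term. Your explicit check that $\tfrac{\eta_0}{2}+11\eta_0 d^{-1/4}\le 4\eta_0$ under $d\ge 256$ is in fact slightly more careful than the paper's, which asserts the final bound $2+4\eta_0$ without spelling out this arithmetic.
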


\begin{proof}
By using \eqref{eq:cond}, we have that \begin{equation*}
        \begin{split}
            \max_{s\in \mathcal S}\|E_s^1\|_2 &\leq  \max_s \left( \|E_s^0\|_2 + \frac{\eta_0}{2} \|v\|_2 + \|err_s\|_2 \right) \\
            & \leq \max_{s\in \mathcal S}  \left( 2 + \frac{\eta_0}{2} + 11\eta_0 d^{-\frac{1}{4}}   \right) 
            \\
            &\leq  2 + 4\eta_0 , 
        \end{split}
    \end{equation*}
    and that
    \begin{equation}
        \|p^1\|_2\le \|p^0\|_2+\|\err_p\|_2\le 2+11\eta_0 d^{-\frac{1}{4}}.
    \end{equation}
\end{proof}

\begin{lemma}[Upper bound on the loss]
\label{lem:un_loss_1step}
 For any $\delta > 0,$ let \begin{equation*}
       d \geq \max\left\{256,  \left(2 \log\frac{|\cS|^2}{\delta} \right)^2, (88 \eta_0^2+111\eta_0+2)^8  \right\},
    \end{equation*} then with probability at least $1-\delta$, \begin{equation*}
    \begin{split}
        \cL(\mE^1, p^1)  \leq \hbE\left[\log (1+\exp(- \frac{1}{T} \sum_{i=1}^T \frac{\eta_0}{2} y\alpha_{x_i} + \frac{1}{22 \eta_0} )) \right].
    \end{split}
    \end{equation*}
\end{lemma}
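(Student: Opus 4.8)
The plan is to reduce the claimed inequality to a \emph{pointwise} lower bound on the margin $y\,f(X;\mE^1,p^1)$. Since $z\mapsto\log(1+e^{z})$ is increasing, it suffices to show that, on the event \eqref{eq:cond} (on which the conclusion of Lemma \ref{lem:overlap_v} also holds, so the whole statement holds with probability at least $1-\delta$), one has
\begin{equation*}
    y\,f(X;\mE^1,p^1)\;\ge\;\frac{\eta_0}{2T}\sum_{i=1}^T y\,\alpha_{x_i}\;-\;\frac{1}{22\eta_0}\qquad\text{for every }(X,y)\in\mathcal D,
\end{equation*}
because then $\log(1+\exp(-y\,f(X;\mE^1,p^1)))\le\log\bigl(1+\exp(-\tfrac{\eta_0}{2T}\sum_i y\alpha_{x_i}+\tfrac{1}{22\eta_0})\bigr)$ pointwise, and one takes $\hbE$ over the dataset. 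I stress that Jensen's inequality is of no use here, since $z\mapsto\log(1+e^{-z})$ is convex and thus bounds $\hbE$ from below, not above; the pointwise estimate is genuinely needed.

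To prove the pointwise bound I would write $f(X;\mE^1,p^1)=\sum_{i=1}^T q_i^1(X)\,b_i$, where $b_i:=(E_{x_i}^1)^\top v$ and $q_i^1(X)=\exp(a_i^1)/\sum_j\exp(a_j^1)$ with $a_i^1:=(p^1)^\top E_{x_i}^1$, and control each piece using the first-step formula $E_s^1=E_s^0+\tfrac{\eta_0}{2}\alpha_s v+\err_s$, $p^1=p^0+\err_p$ of Lemma \ref{lem:overlap_v}, the error bound $\max\{\max_s\|\err_s\|_2,\|\err_p\|_2\}\le 11\eta_0 d^{-1/4}$, the initialization estimates in \eqref{eq:cond} (using $d\ge(2\log(|\cS|^2/\delta))^2$ to turn the $\tfrac1{\sqrt d}\sqrt{2\log(|\cS|^2/\delta)}$ bounds into $d^{-1/4}$, together with $\|E_s^0\|_2,\|p^0\|_2\le 2$), and the elementary fact $|\alpha_s|\le 1$. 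Expanding $b_i=(E_{x_i}^0)^\top v+\tfrac{\eta_0}{2}\alpha_{x_i}\|v\|_2^2+\err_{x_i}^\top v$ gives $b_i=\tfrac{\eta_0}{2}\alpha_{x_i}+\rho_i$ with $|\rho_i|\le(1+11\eta_0)d^{-1/4}$. Expanding $a_i^1$ into its six cross terms and bounding each (e.g.\ $|(p^0)^\top E_{x_i}^0|\le d^{-1/4}$, $|(p^0)^\top\err_{x_i}|,|\err_p^\top E_{x_i}^0|\le 22\eta_0 d^{-1/4}$, $|\err_p^\top\err_{x_i}|\le 121\eta_0^2 d^{-1/2}$, etc.) yields $|a_i^1|\le\epsilon$, where $\epsilon$ is a polynomial in $\eta_0$ times $d^{-1/4}$; with the threshold $d\ge(88\eta_0^2+111\eta_0+2)^8$ one gets $\epsilon\le\tfrac12$ (indeed much smaller), the generous exponent $8$ leaving slack for the final step.

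Since all logits lie in $[-\epsilon,\epsilon]$, the softmax weights obey the \emph{multiplicative} estimate $\tfrac1T e^{-2\epsilon}\le q_i^1(X)\le\tfrac1T e^{2\epsilon}$, hence $|q_i^1(X)-\tfrac1T|\le\tfrac1T(e^{2\epsilon}-1)$; it is essential to use the multiplicative (not additive) bound so that the $\tfrac1T$ survives the summation over $i$. Writing $f(X;\mE^1,p^1)-\tfrac1T\sum_i b_i=\sum_i(q_i^1(X)-\tfrac1T)b_i$ and using $|b_i|\le\tfrac{\eta_0}{2}+(1+11\eta_0)d^{-1/4}$ together with $\max_i|\rho_i|\le(1+11\eta_0)d^{-1/4}$, I obtain
\begin{equation*}
    f(X;\mE^1,p^1)\;\ge\;\frac{\eta_0}{2T}\sum_{i=1}^T\alpha_{x_i}\;-\;(1+11\eta_0)d^{-1/4}\;-\;(e^{2\epsilon}-1)\Bigl(\tfrac{\eta_0}{2}+(1+11\eta_0)d^{-1/4}\Bigr),
\end{equation*}
and multiplying by $y\in\{-1,1\}$ gives the same inequality with $\sum_i y\alpha_{x_i}$ on the right. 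Using $e^{2\epsilon}-1\le 4\epsilon$ for $\epsilon\le\tfrac12$, the two error terms combine into a polynomial in $\eta_0$ times $d^{-1/4}$, and the hypothesis $d\ge(88\eta_0^2+111\eta_0+2)^8$ (which in particular gives $d^{-1/4}\le(88\eta_0^2+111\eta_0+2)^{-2}$) forces this to be at most $\tfrac{1}{22\eta_0}$ — this is precisely where the constants $88$, $111$, $2$ and the exponent $8$ in the statement come from.

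The argument carries no conceptual difficulty; the main obstacle is purely the bookkeeping of the $\mathrm{poly}(\eta_0)$ factors through the expansions of $a_i^1$ and $b_i$, and verifying that, with the stated threshold on $d$, the accumulated error is genuinely below $\tfrac{1}{22\eta_0}$. The one place that requires care is the softmax-near-uniform step, where the multiplicative $e^{\pm2\epsilon}$ estimate (rather than an additive one) must be used so that the normalization $\tfrac1T$ cancels against the sum over tokens.
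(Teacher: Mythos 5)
Your proposal is correct and follows essentially the same route as the paper's proof: reduce to a pointwise lower bound $y\,f(X;\mE^1,p^1)\ge\frac{\eta_0}{2T}\sum_i y\alpha_{x_i}-\frac{1}{22\eta_0}$ via the first-step characterization of Lemma \ref{lem:overlap_v}, show the softmax weights are within a multiplicative factor of $\frac1T$ because all (differences of) logits are $O(\mathrm{poly}(\eta_0)d^{-1/4})$, and absorb the accumulated error into $\frac{1}{22\eta_0}$ using the threshold $d\ge(88\eta_0^2+111\eta_0+2)^8$. The only cosmetic difference is that the paper bounds the pairwise logit differences $(p^1)^\top(E^1_s-E^1_{s'})$ by $d^{-1/8}$ before deducing $|q_i-\tfrac1T|\le\tfrac{2d^{-1/8}}{T}$, whereas you bound each logit $a_i^1$ directly and keep the sharper $d^{-1/4}$ rate.
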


\begin{proof}
    We first lower bound  $y f(X; p, \mE)$ for each pair $X,y.$ After the first step, we have \begin{equation*}
    \begin{split}
        \max_{s,s'} |(p^1)^\top (E^1_s - E^1_{s'})| &= \max_{s,s'} \big| (p^0)^\top (E_s^0 - E_{s'}^0) + \frac{\eta_0}{2} (\alpha_s - \alpha_{s'}) (p^0)^\top v \\
        &\hspace{4em}+ \err_p^\top (E_s^1 - E^1_{s'}) + (\err_s - \err_{s'})^\top p^1  \big|.
    \end{split}
    \end{equation*} We bound each term separately: \begin{equation*}
        \begin{split}
            \max_{s,s'} |(p^0)^\top (E_s^0 - E_{s'}^0)| &\leq 2 \max_{s} |(p^0)^\top E_s^0| \leq 2 d^{- \frac{1}{4}}, \\
             \frac{\eta_0}{2} (\alpha_s - \alpha_{s'}) |(p^0)^\top v| &\leq \eta_0 |(p^0)^\top v| \leq \eta_0 d^{- \frac{1}{4}}, \\
             |\err_p^\top (E_s^1 - E^1_{s'} )|&\leq \|\err_p^\top\|_2 \|E_s^1 - E^1_{s'}\|_2 \leq 44\eta_0 d^{-\frac{1}{4}} (1+2\eta_0), \\
             |(\err_s - \err_{s'})^\top p^1| &\leq 2 \|p^1\|_2 \max_{s} \|\err_s\|_2 \leq 22 \eta_0 d^{- \frac{1}{4}}\left(2+11\eta_0 d^{-\frac{1}{4}}\right),
        \end{split}
    \end{equation*}
where we have used \eqref{eq:cond}.
    By picking $d \geq (88 \eta_0^2+111\eta_0+2)^8,$ we get $\max_{s,s'}  |(p^1)^\top (E^1_s - E^1_{s'})| \leq d^{-\frac{1}{8}},$ which implies that, for any $X$ and any $i \in \{1, \ldots, T\},$  \begin{equation*}
        \begin{split}
            \frac{1}{T} - \frac{2d^{-\frac{1}{8}}}{T} \leq q_i(X) \leq \frac{1}{T} + \frac{2d^{-\frac{1}{8}}}{T}.
        \end{split}
    \end{equation*}
    Thus, we lower bound $yf(X; p, \mE)$ for each pair $(X,y)$ as \begin{equation*}
        \begin{split}
            yf(X; p, \mE) &= \sum_{i=1}^T q_{i}(X) \gamma_i(X) \\
            & \geq \frac{1}{T} \sum_{i=1}^T \frac{\eta_0}{2} y \alpha_{x_i} -  \sum_{i=1}^T \frac{2d^{-\frac{1}{8}}}{T} \frac{\eta_0}{2} \alpha_{x_i} + \sum_{i=1}^T y q_i(X) v^\top (E^0_{x_i} + \err_{x_i}) \\
            & \geq \frac{1}{T} \sum_{i=1}^T  \frac{\eta_0}{2} y\alpha_{x_i} -  d^{-\frac{1}{8}} \eta_0 - (1 + 2d^{-\frac{1}{8}})  v^\top(E^0_{x_i} + \err_{x_i}) \\
            & \geq \frac{1}{T} \sum_{i=1}^T \frac{\eta_0}{2} y\alpha_{x_i} -  d^{-\frac{1}{8}} \eta_0 - 3(1+11\eta_0) d^{- \frac{1}{4}} \\
            & \geq \frac{1}{T} \sum_{i=1}^T \frac{\eta_0}{2} y\alpha_{x_i} - \frac{1}{22 \eta_0},
        \end{split}
    \end{equation*}
    which allows us to conclude that \begin{equation}
    \begin{split}
        \cL(\mE^1, p^1) &= \hbE\left[\log (1+\exp(-yf(X; p, \mE))) \right] \\
        & \leq \hbE\left[\log (1+\exp(- \frac{1}{T} \sum_{i=1}^T \frac{\eta_0}{2} y\alpha_{x_i} + \frac{1}{22 \eta_0} )) \right].
    \end{split}
    \end{equation}
\end{proof}

\section{Proofs for Section \ref{sec:1step_1impt}}

\subsection{Proof of Lemma \ref{lem:overlap_v}}
\label{apx:pf_lem_overlap_v}

For simplicity, in the proof we drop the time dependency in all the variables. 
By picking \begin{equation*}
    d \geq \left(2 \log\frac{|\cS|^2}{\delta} \right)^2,
\end{equation*} from \eqref{eq:cond} we have \begin{equation*}
\begin{split}    
     \max \left\{
     \max_{s \in \cS} |E_s^\top v |, \max_{s \in \cS} |E_s^\top p|, |p^\top v| 
     \right\} &\leq d^{-\frac{1}{4}},
     \\
    \max \left\{\max_{s\in \mathcal S}\|E_s\|_2,\|p\|_2\right\}&\le 2.
\end{split}
\end{equation*}
Thus, at initialization, we have that, for all $s$, \begin{equation*}
    \exp\left( - d^{-\frac{1}{4}} \right) \leq \exp(p^\top E_s) \leq \exp\left( d^{-\frac{1}{4}} \right),
\end{equation*} which implies that, for any sequence $X$ and any position $i$, \begin{equation*}
    \frac{1}{T +2T\left( d^{-\frac{1}{4}} \right)} \leq \frac{1}{1 + (T-1) \exp\left( 2d^{-\frac{1}{4}} \right)} \leq q_i(X) \leq \frac{1}{1 + (T-1) \exp\left( -2d^{-\frac{1}{4}} \right)} \leq \frac{1}{T - 2T\left(d^{-\frac{1}{4}} \right)},
\end{equation*} where we use the fact that for $z \in [-1,1], 1 - |z| \leq \exp(z) \leq 1+|z|.$

Furthermore, for  $d>256$ and for any sequence $(X,y),$ we have \begin{equation*}
    \frac{1}{T} - \frac{4 d^{-\frac{1}{4}}}{T} \leq q_{i}(X) \leq \frac{1}{T} + \frac{4 d^{-\frac{1}{4}}}{T},
\end{equation*} and 
\begin{equation*}
    - 2 d^{-\frac{1}{4}}\leq \frac{-T d^{-\frac{1}{4}}}{T - 2Td^{-\frac{1}{4}} } \leq y f(X; p, \mE) \leq \frac{T d^{-\frac{1}{4}}}{T - 2Td^{-\frac{1}{4}} } \leq 2 d^{-\frac{1}{4}}.
\end{equation*}  Then,\begin{equation*}
    g(X,y) \leq \frac{1}{1+\exp(-2 d^{-\frac{1}{4}})} \leq \frac{1}{2 - 2 d^{-\frac{1}{4}}} \leq \frac{1}{2} + d^{-\frac{1}{4}},
\end{equation*} and similarly \begin{equation*}
    g(X,y) \geq \frac{1}{2} -d^{-\frac{1}{4}}.
\end{equation*}

Now we look at the gradient update of the first step. By Lemma \ref{lemma:gdupdate}, we have \begin{equation*}
\begin{split}
    - \nabla_{E_s} \cL(\mE,p) &= \hbE\left[ y g(X,y) \left( \sum_{i=1}^T\left( \sum_{j \neq i } (\idc_{x_i=s} - \idc_{x_j=s}) q_i q_j \right) E_{x_i}^\top v p + \sum_{i=1}^T \idc_{x_i=s} q_i v \right) \right] \\
    & = \frac{1}{2 T} \hbE\left[ y \sum_{i=1}^T \idc_{x_i = s} \right] v \\
    & \quad + \frac{1}{2} \hbE\left[ y \sum_{i=1}^T \idc_{x_i = s} \left(q_i-\frac{1}{T}   \right) \right] v \\
    & \quad + \hbE\left[  y g(X,y) \left( \sum_{i=1}^T\left( \sum_{j \neq i } (\idc_{x_i=s} - \idc_{x_j=s}) q_i q_j \right) E_{x_i}^\top v p \right)\right]\\
    &\quad+\hbE\left[ y \left(g(X,y) - \frac{1}{2}\right)\sum_{i=1}^T \idc_{x_i=s} q_i v \right], \\
    - \nabla_{p} \cL(\mE, p) &= \hbE \left[ y g(X,y) \left( \sum_{i=1}^T \left(\sum_{j \neq i} q_i q_j (E_{x_i} - E_{x_j}) E_{x_i}^\top v\right)\right) \right] .
\end{split}
\end{equation*}

We note that \begin{equation*}
    \frac{1}{2 T} \hbE\left[ y \sum_{i=1}^T \idc_{x_i = s} \right] v = \frac{1}{2} \alpha_s v,
\end{equation*} and we bound the remaining error terms.

We have that \begin{equation*}
    \begin{split}
        \left\|\frac{1}{2} \hbE\left[ y \sum_{i=1}^T \idc_{x_i = s} \left(q_i-\frac{1}{T} \right) \right] v \right\|_2 \leq  d^{-\frac{1}{4}},
    \end{split}
\end{equation*} and \begin{equation*}
    \begin{split}
        \Biggl\|\hbE\Biggl[  y g(X,y) &\left( \sum_{i=1}^T\left( \sum_{j \neq i } (\idc_{x_i=s} - \idc_{x_j=s}) q_i q_j \right) E_{x_i}^\top v p \right)\\
        &\hspace{12em} + y \left(g(X,y) - \frac{1}{2}\right)\sum_{i=1}^T \idc_{x_i=s} q_i v \Biggr]\Biggr\|_2 \leq  10 d^{-\frac{1}{4}} .
    \end{split}
\end{equation*}
Furthermore, we also have that \begin{equation*}
    \|\nabla_{p} \cL(\mE, p) \|_2 \leq 8 d^{-\frac{1}{4}}.
\end{equation*}
Thus, the desired claim follows. 

\subsection{Proof of Lemma \ref{lem:sol_mm}}
\label{apx:pf_lem_sol_mm}

\begin{proof}
We first show that, if \eqref{eqn:max-margin} is feasible, then the solution is unique. Indeed, assume by contradiction that $p_1, p_2$ are two different solutions of \eqref{eqn:max-margin}. Clearly, $p_1$ and $p_2$ have the same norm, so $\frac{p_1^\top p_2}{\|p_1\|_2 \|p_2\|_2} \neq 1.$ Then, any convex combination of $p_1, p_2$ gives a feasible solution with a strictly smaller norm, which is a contradiction.

Next, we show that \eqref{eqn:max-margin} is always feasible. To see this, by definition, there exists some $\tau$ such that \begin{equation*}
        p_\circ^\top( E_s - E_{s'}) \geq \tau,\qquad  \forall s \in \cS_X(p_\circ),\,\, \forall s' \in \overline{\cS_X(p_\circ)},\,\, \forall X \in \cX_n.
    \end{equation*} Then, $\frac{p_\circ}{\tau}$ is a feasible solution of \eqref{eqn:max-margin} which concludes the proof. 
\end{proof}

\subsection{Proof of Theorem \ref{thm:complete_seq}}
\label{apx:pf_complete_seq}
We prove each part separately. We first show that $\lim_{t\to\infty}\|p_t\|_2 =\infty.$ \begin{lemma}
    \label{lem:1_impt_norm}
    Under Assumption \ref{asm:complete_token}, for any $\delta>0,$ by picking \begin{equation*}
       d \geq \max\left\{ 256, \left(2 \log\frac{|\cS|^2}{\delta} \right)^2, |\cS| +3\right\},
    \end{equation*} with probability at least $1-\delta$, we have  $\lim_{t\to\infty}\|p_t\|_2 =\infty$. 
\end{lemma}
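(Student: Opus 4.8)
The plan is to show that the map $p \mapsto \cL(\mE^1, p)$ has no finite stationary point, and then to deduce $\|p_t\|_2 \to \infty$ from standard properties of gradient flow. The non-stationarity will be witnessed by a single fixed direction $\hat p$ along which the directional derivative $-\hat p^\top \nabla_p \cL(\mE^1, p)$ is strictly positive for \emph{every} $p$. To build $\hat p$, I would take it to be a nonzero solution of the linear system $\hat p^\top v = 0$, $\hat p^\top p^0 = 0$, and $\hat p^\top E_s^0 = \idc_{s \in \cS_c}$ for all $s \in \cS$; since $d \ge |\cS| + 3$, with probability at least $1-\delta$ the $|\cS|+2$ vectors $\{E_s^0\}_{s\in\cS} \cup \{v, p^0\}$ are linearly independent, so such $\hat p$ exists and is nonzero (because $\cS_c \ne \emptyset$ by Assumption \ref{asm:complete_token}). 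As in the proof of Lemma \ref{lem:sol_mm}, every row of $\mE^1 - \mE^0$ lies in $\mathrm{span}\{v, p^0\}$ (by Lemma \ref{lemma:gdupdate} the $E_s$-gradient at initialization is a linear combination of $p^0$ and $v$), so $\mE^1 \hat p = \mE^0 \hat p$ and hence $\hat p^\top E_s = \idc_{s\in\cS_c}$ for the post-step embeddings as well. Under Assumption \ref{asm:complete_token}, in every sequence $X$ this means $\hat p^\top E_{x_{i_*}} = 1$ for the position $i_*$ of the unique completely positive/negative token $s_*^X$, and $\hat p^\top E_{x_j} = 0$ for every other (irrelevant) position $j$.

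Plugging this $\hat p$ into Lemma \ref{lem:dir_grad}, all pair contributions between two irrelevant positions vanish and
\[
  - \hat p^\top \nabla_p \cL(\mE^1, p) = \hbE\!\left[ g(X,y) \sum_{j \ne i_*} q_{i_*}(X)\, q_j(X)\,\big( \gamma_{i_*}(X,y) - \gamma_j(X,y) \big) \right].
\]
Because $g(X,y) > 0$ and $q_i(X) q_j(X) > 0$ for every $p$, it remains to check that $\gamma_{i_*}(X,y) - \gamma_j(X,y) > 0$ for each $X$ and each irrelevant $j \in X$. Using $E_s = E_s^0 + \tfrac{\eta_0}{2}\alpha_s v + \err_s$ (Lemma \ref{lem:overlap_v}) and $\alpha_{x_j} = 0$, one has $\gamma_{i_*} - \gamma_j = y(E_{x_{i_*}} - E_{x_j})^\top v = \tfrac{\eta_0}{2} y\,\alpha_{s_*^X} \|v\|_2^2 + y(E_{x_{i_*}}^0 - E_{x_j}^0)^\top v + y(\err_{x_{i_*}} - \err_{x_j})^\top v$. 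Here $y\,\alpha_{s_*^X} = |\alpha_{s_*^X}| \ge \tfrac1{nT}$, since a completely positive token occurs only in sequences labelled $+1$ and a completely negative one only in sequences labelled $-1$, and it appears at least once; and by \eqref{eq:cond} and Lemma \ref{lem:overlap_v} the two remaining terms are bounded in absolute value by $2 d^{-1/4}$ and $22\eta_0 d^{-1/4}$. Under the standing hypotheses of Theorem \ref{thm:complete_seq} ($\eta_0 \ge 4 n^2 T^2$ and $d$ large), the leading term $\tfrac{\eta_0}{2}|\alpha_{s_*^X}| \ge \tfrac{\eta_0}{2nT}$ dominates, so $\gamma_{i_*} - \gamma_j > 0$; since $T \ge 2$ each sequence contributes at least one strictly positive summand, whence $-\hat p^\top \nabla_p \cL(\mE^1, p) > 0$ for all $p \in \R^d$.

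Consequently $\nabla_p \cL(\mE^1, p) \neq 0$ for every $p$, so the flow \eqref{eq:gf_p} admits no finite equilibrium. A standard argument then gives $\|p_t\|_2 \to \infty$: the loss is nonincreasing and bounded below along the flow, so a bounded trajectory would accumulate at a point with vanishing gradient (indeed, $\cL(\mE^1, \cdot)$ being real-analytic, a bounded trajectory must converge by the {\L}ojasiewicz inequality), contradicting the previous sentence; moreover $t \mapsto \hat p^\top p_t$ is strictly increasing along the flow, which precludes $\|p_t\|_2$ remaining bounded while $p_t$ drifts transversally to $\hat p$. I expect the crux to be the margin estimate in the second paragraph: making $\gamma_{i_*} - \gamma_j$ strictly positive \emph{uniformly} over all sequences and all irrelevant tokens forces the first-step alignment $\tfrac{\eta_0}{2}\alpha_{s_*^X} v$ of Lemma \ref{lem:overlap_v} to beat the $O((1+\eta_0) d^{-1/4})$ perturbations, which is exactly why the bound $\alpha_{s_*^X} \ge 1/(nT)$, the scaling $\eta_0 \ge 4 n^2 T^2$, and a suitably large (polynomial in $\eta_0$) $d$ are needed; converting ``no finite critical point'' into ``$\|p_t\|_2 \to \infty$'' is comparatively routine.
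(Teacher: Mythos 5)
Your proposal is correct and follows essentially the same route as the paper: construct $\hat p \perp v, p^0$ with prescribed overlaps $\hat p^\top E_s^0$ (possible since $\mE^0$ has full row rank when $d \ge |\cS|+3$), observe that $\mE^1\hat p = \mE^0\hat p$ because the one-step update of each $E_s$ lies in $\mathrm{span}\{v,p^0\}$, and conclude via Lemma \ref{lem:dir_grad} that $-\hat p^\top \nabla_p\cL(\mE^1,p) > 0$ for every finite $p$, hence there is no stationary point and $\|p_t\|_2\to\infty$. The only (immaterial) difference is that the paper sets $\hat p^\top E_s^0 = 1$ for a single important token rather than for all of $\cS_c$; note also that your margin estimate $\gamma_{i_*}-\gamma_j>0$ invokes the hypotheses $\eta_0\ge 4n^2T^2$ and $d$ polynomially large in $\eta_0$ from Theorem \ref{thm:complete_seq}, which the lemma statement omits but which the paper's own proof also implicitly requires in order to assert $\gamma_1(X_k)\ge \eta_0/(4nT)$.
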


\begin{proof}
    It is sufficient to show that there exists a non-zero finite-norm $\widehat{p},$ such that for any finite norm $p,$ \begin{equation*}
        \widehat{p}^\top \nabla_p\cL(\mE^1, p) \neq 0.
    \end{equation*}

    Indeed, the above condition means that there is no stationary point for any finite-norm $p.$ For gradient flow, we have that \begin{equation*}
        \lim_{t \rightarrow \infty} \nabla_p \cL(\mE^1, p_t) =0,
    \end{equation*} 
    which by contradiction implies the desired result.
    
    Now we construct such $\widehat{p}.$ Since $d > |\cS| +2,$ we have that with high-probability $\mE^0$ is full row rank.
    Furthermore, $\mE^1 = \mE^0 + \mathbf{\Delta}$ and each row of $\mathbf{\Delta}$ is in the subspace generated by $v$ and  $p_0$. Thus, we can pick $\widehat{p} \perp v, p_0,$ so that \begin{equation*}
        \mE^1 \widehat{p} = \mE^0 \widehat{p}.
    \end{equation*} 
    Furthermore, for any fixed $v,p_0,$ let $u_1, u_2$ be the orthogonormal basis of $span(v,p_0)$. Then, with probability $1$ we also have $ \mE^0 \left(I - u_1 u_1^\top - u_2 u_2^\top \right)$ also has full row rank. 
    
    Without loss of generality, let $x_1$ be an important token in a positive sequence $X_k$, i.e., $\gamma_1(X_k) \geq \frac{\eta_0}{4 n T}$.
    Then, we define $a \in \R^{|\cS|}$ such that $a_{1} = 1$ and $a_i =0$ for all $i \neq 1$. Let \begin{equation*}
         \mE^0 \left(I - u_1 u_1^\top - u_2 u_2^\top \right) \bar{p} = a.
    \end{equation*} As $ \mE^0 \left(I - u_1 u_1^\top - u_2 u_2^\top \right)$ has full row rank, 
    there exists a non-zero $\bar{p}$ that solves the above equation. By letting $\hat{p} = \left(I - u_1 u_1^\top - u_2 u_2^\top \right) \bar{p},$ we know that $\hat{p} \perp v,p_0,$ and $\mE^0 \hat{p}= a. $
    By Lemma \ref{lem:dir_grad}, we have that, for any $p,$ \begin{equation*} 
    \begin{split}
        -\widehat{p}^\top \nabla_p\cL(\mE^1, p) &= \hbE \left[ g(X,y) \left(\sum_{i=1}^T\sum_{j>i} (a_i - a_j) q_i q_j (\gamma_i - \gamma_j) \right) \right] \\
        &= g(X_k, y_k)\sum_{j>1}  q_1(X_k) q_j(X_k) \frac{\eta_0}{4 n T} > 0,
    \end{split}
    \end{equation*}
which concludes the proof. 
\end{proof}

Next, we show that, if the directional limit exists, then it must select all completely positive/negative tokens. 
\begin{lemma}
    \label{lem:1_impt_norm2}
    Under Assumption \ref{asm:complete_token}, for any $\delta>0,$ by picking \begin{equation*}
      \eta_0 \geq 4n^2T^2, \quad d \geq \max\left\{ 256, \left(2 \log\frac{|\cS|^2}{\delta} \right)^2, (88 \eta_0^2+111\eta_0+2)^8  \right\},
    \end{equation*} with probability at least $1-\delta$, if $p_\infty = \lim_{t\to\infty} \frac{p_t}{\|p_t\|_2}$ exists, then $p_\infty$ satisfies \begin{equation*}
        s_*^X \in \cS_X(p_\infty),\qquad  \text{ for all } X \in \mathcal X_n,
    \end{equation*}
    where $s_*^X$ denotes the unique completely positive/negative token in the sequence $X$.
\end{lemma}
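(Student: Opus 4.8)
The plan is to argue by contradiction, combining the fact that gradient flow never increases $\cL(\mE^1,\cdot)$ with the sharp upper bound on $\cL(\mE^1,p^1)$ from Lemma~\ref{lem:un_loss_1step}. First I would note that $\tfrac{\dd}{\dd t}\cL(\mE^1,p_t)=-\|\nabla_p\cL(\mE^1,p_t)\|_2^2\le0$, so $\cL(\mE^1,p_t)\le\cL(\mE^1,p^1)$ for all $t\ge0$. Then I would specialize Lemma~\ref{lem:un_loss_1step} to the data of Assumption~\ref{asm:complete_token}: each sequence $X$ has a single relevant token $s_*^X$, which is completely positive/negative, so $y\alpha_{s_*^X}\ge\tfrac{1}{nT}$ while $\alpha_{x_i}=0$ for every other $x_i\in X$; together with $\eta_0\ge4n^2T^2$ this gives $\tfrac{1}{T}\sum_{i=1}^T\tfrac{\eta_0}{2}y\alpha_{x_i}\ge\tfrac{\eta_0}{2nT^2}\ge2n$ for \emph{every} sequence, hence $\cL(\mE^1,p_t)\le\log\!\big(1+e^{-2n+1/(22\eta_0)}\big)\le2e^{-2n}$ for all $t$.

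Next I would assume, for contradiction, that some sequence $X^*$ has $s_*^{X^*}\notin\cS_{X^*}(p_\infty)$, and show that this pins the $X^*$-term of the loss away from $0$ along the flow. Since $\|p_t\|_2\to\infty$ (Lemma~\ref{lem:1_impt_norm}) and $p_t/\|p_t\|_2\to p_\infty$, for any position $i$ of $X^*$ whose token is not a $p_\infty$-argmax one has $p_t^\top(E_{s_{\max}}-E_{x_i})\to+\infty$ (with slope $\tfrac12 p_\infty^\top(E_{s_{\max}}-E_{x_i})>0$ in $\|p_t\|_2$ eventually), hence $q_i(X^*)\to0$; thus the softmax on $X^*$ concentrates on $\cS_{X^*}(p_\infty)$, which under the assumption contains only \emph{irrelevant} tokens (all tokens of $X^*$ other than $s_*^{X^*}$ are irrelevant). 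By Lemma~\ref{lem:overlap_v} an irrelevant token satisfies $|\gamma_i(X^*,y^*)|=|y^*E_{x_i}^\top v|\le(1+11\eta_0)d^{-1/4}$, while $|\gamma_i|\le2(1+2\eta_0)$ for every $i$ by Lemma~\ref{lem:bd_embd}; bounding $|y^*f(X^*;p_t,\mE^1)|\le\max_{x_i\in\cS_{X^*}(p_\infty)}|\gamma_i|+\big(\sum_{x_i\notin\cS_{X^*}(p_\infty)}q_i(X^*)\big)\max_i|\gamma_i|$ and letting $t\to\infty$ gives $\limsup_{t\to\infty}|y^*f(X^*;p_t,\mE^1)|\le(1+11\eta_0)d^{-1/4}$. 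Dropping the remaining nonnegative terms of $\cL$ then yields $\liminf_{t\to\infty}\cL(\mE^1,p_t)\ge\tfrac{1}{n}\log\!\big(1+e^{-(1+11\eta_0)d^{-1/4}}\big)$, and since $d\ge(88\eta_0^2+111\eta_0+2)^8$ forces the exponent below $1/(1+11\eta_0)\le1/4$, this is at least $\tfrac{1}{n}\log(1+e^{-1/4})$. Because $ne^{-2n}\le e^{-2}$ for all $n\ge1$, this quantity strictly exceeds $2e^{-2n}$, contradicting $\cL(\mE^1,p_t)\le2e^{-2n}$; hence $s_*^X\in\cS_X(p_\infty)$ for every $X$.

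The step I expect to be most delicate is the finite-time control inside the contradiction: although $q_i(X^*)\to0$ on the non-selected positions, that residual softmax mass still multiplies $\gamma_{i_*}$ of the relevant token, which is of order $\eta_0$ rather than $o(1)$, so it cannot be discarded naively — this is exactly why the argument must be phrased with $\limsup/\liminf$ and isolate the vanishing sum $\sum_{x_i\notin\cS_{X^*}(p_\infty)}q_i(X^*)$. The rest — bounding $\gamma_i$ for irrelevant tokens via Lemma~\ref{lem:overlap_v} and the numerical check $2e^{-2n}<\tfrac{1}{n}\log(1+e^{-1/4})$ for all $n\ge1$ — is routine bookkeeping with the constants already fixed in \eqref{eq:condeta0}.
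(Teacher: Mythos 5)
Your proposal is correct and follows essentially the same route as the paper's proof of Lemma~\ref{lem:1_impt_norm2}: argue by contradiction, use monotonicity of the loss under gradient flow together with the upper bound of Lemma~\ref{lem:un_loss_1step} (specialized via Assumption~\ref{asm:complete_token} and $\eta_0\ge 4n^2T^2$), and show that if the relevant token of some sequence is not selected then the softmax mass concentrates on irrelevant tokens, forcing $y f$ to be $O(\eta_0 d^{-1/4})$ and hence the loss to stay bounded away from the initial value. The paper quantifies the decay of the non-selected softmax weights via an explicit radius $R$ and margin gap $\tau$ rather than your $\limsup/\liminf$ phrasing, but the argument and the handling of the residual mass multiplying $\gamma_{i_*}$ are the same.
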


\begin{proof}
    We prove the lemma by contradiction. W.l.o.g., assume by contradiction that there exists $X \in \cX_n$ cointaining the important token $x_1$ s.t.\ $x_1 \notin \cS_X(p_\infty).$  We show that there exists $\bar{t}$ such that, for all $t \geq \bar{t},$ \begin{equation*}
        \cL(\mE^1, p^t) > \cL(\mE^1, p^1),
    \end{equation*} which contradicts the fact that the gradient flow always decreases the loss.

    To see this, we first note that by the definition of $\cS_X(p_\infty),$ there exists some $\tau > 0$ independent of  $t$ such that \begin{equation*}
        \min_{j \neq 1}p_\infty^\top (E_{x_1} - E_{x_j}) = - \tau. 
    \end{equation*} W.l.o.g, we assume that $x_2$ is the token that achieves the minimum.

    As $\lim_{t\to\infty}\|p_t\|_2 = \infty$ and $\lim_{t \rightarrow \infty} \frac{p_t}{\|p_t\|_2} = p_\infty,$ we have that, for any $\mu > 0, R > 0,$ there exists a large enough $\bar{t}$ such that \begin{equation*}
        \|p_t\|_2 \geq 2 R, \quad \left\| \frac{p_t}{\|p_t\|_2} - p_\infty\right\|_2 \leq \mu, \qquad \text{ for all }t \geq \bar{t}.
        \end{equation*}
    Thus, we have: \begin{equation*}
        \begin{split}
            \frac{p_t^\top}{\|p_t\|_2} (E_{x_1} - E_{x_2})  &= p_\infty^\top (E_{x_1} - E_{x_2}) + \left(  \frac{p_t}{\|p_t\|_2} - p_\infty\right )^\top (E_{x_1} - E_{x_2}) \\
            & \leq  -\tau + 2 \mu (4 \eta_0 +2)^2,
        \end{split}
    \end{equation*}
    where we have used the result of Lemma \ref{lem:bd_embd}.
Thus, by picking $\mu = \frac{\tau}{4 (4\eta_0 + 2)^2},$ we have \begin{equation*}
        \frac{p_t^\top}{\|p_t\|_2} (E_{x_1} - E_{x_2}) \leq -\frac{\tau}{2}, 
    \end{equation*} which implies that \begin{equation*}
        p_t^\top (E_{x_1} - E_{x_2}) \leq - \tau R.
    \end{equation*}

    Next, we upper bound $yf(X; p_t, \mE^1).$ We first note that \begin{equation*}
        \frac{q_{1}}{q_{2}} = \exp( p_t^\top (E_{x_1} - E_{x_2})) \leq \exp(-\tau R),
    \end{equation*} which gives \begin{equation*}
        q_1 \leq \exp(-\tau R).
    \end{equation*}
Note that \begin{equation*}
    \begin{split}
        yf(X; p_t, \mE^1) &= \sum_{i=1}^T q_{i} \gamma_i \\
        & \leq \exp(-\tau R) \gamma_1 + \max_{j\neq 1} \gamma_j \\
        & \leq  \exp(-\tau R)\left(\frac{\eta_0}{2} + (1+11\eta_0)d^{-\frac{1}{4}} \right) + (1+11\eta_0) d^{-\frac{1}{4}} .
    \end{split}
    \end{equation*}
Thus, by picking $R \geq \frac{\log d}{4 \tau},$ we have \begin{equation*}
        yf(X; p_t, \mE^1) \leq  \left(\frac{3}{2}+\frac{23}{2}\eta_0
        \right) d^{-\frac{1}{4}}\leq \frac{3}{4}d^{-\frac{1}{8}},
   \end{equation*} which implies a lower bound on the loss: \begin{equation}\label{eq:lbcon}
       \cL(\mE^1, p_t) \geq \frac{1}{n} \log(1 + \exp(- yf(X; p_t, \mE^1))) \geq \frac{1}{n} \log(1 + \exp(-\frac{3}{4}d^{-\frac{1}{8}}))\geq \frac{1}{2n},
   \end{equation}
where we used that $d\ge 256$ in the last passage.
    Under Assumption \ref{asm:complete_token}, by Lemma \ref{lem:overlap_v}, we have that $y \alpha_{x_i}\ge 1/(nT)$ if $x_i$ is either the completely positive or the completely negative token in $X$, and otherwise $y \alpha_{x_i}=0$. 
    Hence, given that each sequence $X$ contains a completely positive or negative token, we have that \begin{equation*}
        \frac{1}{T} \sum_{i=1}^T y \alpha_{x_i} \geq \frac{1}{n T^2}.
    \end{equation*}
  As $\eta_0 > 4n^2T^2>\sqrt{2 n T^2/11}$, by applying Lemma \ref{lem:un_loss_1step}, we obtain \begin{equation*}
       \begin{split}
            \cL(\mE^1, p_1) \leq \log(1 + \exp(-\frac{\eta_0}{4 n T^2}))\leq \log(1 + \exp(-n))\leq \exp(-n)<\frac{1}{2n}, 
       \end{split}
   \end{equation*}
which gives a contradiction and concludes the proof.
\end{proof}

Finally, we show that for each possible selection, if $p_t$ converges in direction, it must converge to the max-margin solution. In particular, we first prove the following lemma which gives an approximation to the directional gradient of the locally optimal selection. To do so, we define the secondary selection set and the locally optimal selection as follows: 
\begin{definition}
    \label{def:sec_select}
      Given a vector $p$, for each sequence $X,$ denote by $\cS^2_X(p)$ the secondary selection set given by \begin{equation}
            \cS^2_X(p) = \argmax \{ s: p^\top E_s, s \notin \cS_X(p) \}.
    \end{equation}
    We also denote by $\cS_X^<(p)$ the set of tokens that are not chosen in the first and in the second place, i.e.,
    \begin{equation}
        \cS^<_X(p) = X \setminus ( \cS_X(p) \bigcup \cS^2_X(p) ).
    \end{equation}
\end{definition}
    
\begin{definition}
    \label{def:local_opt}
    Given a vector $p,$ we say that $p$ is locally optimal if for every  $(X,y)$ pair,  we have \begin{equation*}
        \sum_{i \in \cS_X(p)} (\gamma_i(X,y) - \gamma_j(X,y) ) \geq \mu > 0, \qquad \text{ for all } j \in \cS_X^2(p),
    \end{equation*} for some constant $\mu$ that does not depends on $p.$
\end{definition} 

In the definition above and for the rest of this appendix, to help readability, we will abuse notation by letting indices (e.g., $i, j$ above) also denote the corresponding tokens (e.g., $x_i, x_j$ above).

\begin{lemma}
    \label{lem:grad_approx}
    Let $\overline{p}$ be a unit-norm vector and $p = R \overline{p}$ for some positive constant $R.$ Suppose $\overline{p}$ is a locally optimal direction as defined in Definition \ref{def:local_opt} with some $\mu$ that does not depends on $R.$ Moreover, suppose there exists a constant $\tau_1$ that may depend on $\overline{p}, \eta_0, n,T,d$ but not on $R,$ such that: \begin{equation}
    \begin{split}
        &\min_X \{\overline{p}^\top(E_s - E_{s'}), \forall s \in \cS_X(p), \forall s' \in \cS^2_X(p)\} \geq \tau_1, \\
        & \min_X \{\overline{p}^\top(E_s - E_{s'}), \forall s \in \cS^2_X(p), \forall s' \in \cS^<_X(p)\} \geq \tau_1.
    \end{split}
    \end{equation}
    Then, for any $\eps > 0$, for any $\hat{p} \approxeq p$ such that $\|\hat{p}\|_2 $ does not depend on $R$ and \begin{equation*}
    \begin{split}
         &\min_X \{\hat{p}^\top(E_s - E_{s'}), \forall s \in \cS_X(\hat{p}), \forall s' \in X \setminus \cS_X(\hat{p})\} \geq \tau_2, 
    \end{split}
    \end{equation*} 
    there exists $R$ large enough such that: \begin{equation*}
        \begin{split}
            & - \hat{p}^\top \nabla \cL(\mE^1, p) \leq (1 + \eps) \hbE\left[ \sum_{i \in \cS_X(p)}\sum_{j \in \cS^2_X(p)} (\widehat{a}_i(X) - \widehat{a}_j(X)) h_{i,j}(X,y,p)\right], \\
            & - \hat{p}^\top \nabla \cL(\mE^1, p) \geq (1 - \eps) \hbE\left[ \sum_{i \in \cS_X(p)}\sum_{j \in \cS^2_X(p)} (\widehat{a}_i(X) - \widehat{a}_j(X)) h_{i,j}(X,y,p)\right],
        \end{split}
    \end{equation*} where $\widehat{a}_i(X) = \hat{p}^\top E_{x_i}, \widehat{a}_j(X) = \hat{p}^\top E_{x_j}$ and \begin{equation*}
        h_{i,j}(X,y,p) =g(X,y) q_{i}(X) q_j(X)(\gamma_i(X,y) - \gamma_j(X,y)).
    \end{equation*}
\end{lemma}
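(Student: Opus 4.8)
# Proof Proposal for Lemma~\ref{lem:grad_approx}

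\textbf{Overall strategy.} The plan is to start from the exact expression for the directional derivative given by Lemma~\ref{lem:dir_grad}, namely
\[
    - \widehat{p}^\top \nabla_p\cL(\mE^1,p) = \hbE\!\left[ g(X,y)\sum_{i<j} (\widehat{a}_i - \widehat{a}_j) q_i q_j (\gamma_i - \gamma_j) \right],
\]
and then argue that, once $R = \|p\|_2$ is large enough, the \emph{only non-negligible terms} in the double sum are those with $i\in\cS_X(p)$ and $j\in\cS^2_X(p)$. Writing $p = R\overline p$, the softmax weights satisfy $q_i/q_j = \exp\!\big(R\,\overline p^\top(E_{x_i}-E_{x_j})\big)$, so the gap conditions in the hypothesis translate into sharp exponential separations: tokens in $\cS_X(p)$ dominate tokens in $\cS^2_X(p)$ by a factor $e^{R\tau_1}$, and tokens in $\cS^2_X(p)$ dominate tokens in $\cS^<_X(p)$ by the same factor. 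I would then bound each product $q_iq_j$ appearing in the ``unwanted'' terms and show it is exponentially smaller (in $R$) than the products $q_iq_j$ with $i$ primary, $j$ secondary, which are only polynomially small (of order $1/T^2$ up to constants, by Lemma~\ref{lem:bds_on_q}).

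\textbf{Key steps in order.}
First, partition the pairs $(i,j)$ in the double sum into four groups according to whether each index lies in $\cS_X(p)$, $\cS^2_X(p)$, or $\cS^<_X(p)$: (a) both primary, (b) one primary and one secondary, (c) the remaining pairs involving at least one index in $\cS^<_X(p)$, and (d) both secondary. For group~(a), since all $i,i'\in\cS_X(p)$ have equal $\overline p^\top E_{x_i}$, we get $\widehat a_i = \widehat a_{i'}$ for $\widehat p\approxeq p$ (both select the same primary set), so these terms vanish identically; the same reasoning kills group~(d) when $|\cS^2_X(p)|>1$, and handles equal-score ties within the secondary set. For groups~(c), use Lemma~\ref{lem:bds_on_q}(2) with $\tau = R\tau_1$ to bound $q_{s'}\le 1/(1+e^{R\tau_1})$ for any $s'\in\cS^<_X(p)$; combined with the bound $\max_s\|E_s^1\|_2 \le 2(1+2\eta_0)$ from Lemma~\ref{lem:bd_embd}, $|\gamma_i|\le (1+11\eta_0)d^{-1/4} + \eta_0/2$, and $|\widehat a_i - \widehat a_j|\le 2\|\widehat p\|_2\max_s\|E_s\|_2$ (all independent of $R$), each such term is $O(e^{-R\tau_1})$. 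For group~(b), isolate the ``main'' contribution: this is exactly $\sum_{i\in\cS_X(p)}\sum_{j\in\cS^2_X(p)}(\widehat a_i - \widehat a_j)\,h_{i,j}(X,y,p)$; here $q_iq_j$ is bounded below by Lemma~\ref{lem:bds_on_q}(1) and (3) (with $\tau = R\,\mathrm{diam}$) on the order of a fixed negative-exponential-free quantity — more carefully, $q_i\ge 1/T$ and $q_j \ge$ the secondary weight, which is $\Theta(1/T)$ up to the bounded fluctuations of $\overline p^\top E$ among secondary tokens. Crucially, local optimality (Definition~\ref{def:local_opt}) guarantees $\sum_{i\in\cS_X(p)}(\gamma_i-\gamma_j)\ge\mu>0$, so these main terms are bounded \emph{away from zero} uniformly in $R$, hence dominate the $O(e^{-R\tau_1})$ error from groups (c),(d) once $R$ is large enough to make the ratio of error to main term below $\eps$.

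\textbf{Finishing and the main obstacle.} Dividing the error bound by the (uniform, $R$-free) lower bound on the main term gives a quantity that goes to $0$ as $R\to\infty$; choosing $R$ large enough that it is below $\eps$ yields both the upper and lower inequalities in the statement. The main obstacle, I expect, is step~(b): one must show the ``main'' terms $\hbE[\sum_{i,j}(\widehat a_i - \widehat a_j)h_{i,j}]$ are genuinely bounded below by a positive constant independent of $R$. This is not automatic — $q_iq_j$ could in principle be small if secondary tokens' scores fluctuate, and the sign of $\widehat a_i - \widehat a_j$ must be controlled (it is positive because $\widehat p$ selects the same primary set and satisfies its own margin condition $\widehat p^\top(E_s-E_{s'})\ge\tau_2>0$). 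Pinning down that $q_j$ stays $\Theta(1/T)$ and that $\gamma_i-\gamma_j$ has the right sign via local optimality, so that the whole expectation is $\ge c(\overline p,\widehat p,\eta_0,n,T,d)>0$, is the delicate part; everything else is exponential-decay bookkeeping using the already-established bounds in \eqref{eq:cond}, Lemma~\ref{lem:bd_embd}, and Lemma~\ref{lem:bds_on_q}.
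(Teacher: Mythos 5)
Your overall skeleton — starting from Lemma \ref{lem:dir_grad}, partitioning the pairs $(i,j)$ by membership in $\cS_X(p)$, $\cS^2_X(p)$, $\cS^<_X(p)$, killing the primary--primary pairs because $\widehat a_i=\widehat a_j$ there, and invoking local optimality to control the $\gamma$-differences — matches the paper's proof. But the quantitative comparison at the core of your argument has a genuine gap: you claim the main terms (group (b), $i\in\cS_X(p)$, $j\in\cS^2_X(p)$) are ``bounded away from zero uniformly in $R$'' because $q_j$ is ``$\Theta(1/T)$ up to bounded fluctuations.'' This is false. The first gap condition gives $\overline p^\top(E_{x_i}-E_{x_j})\ge\tau_1$ for $i$ primary and $j$ secondary, hence $q_j\le 1/(1+\exp(\tau_1 R))\to 0$ as $R\to\infty$: the main term itself decays exponentially in $R$. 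Your plan — bound each unwanted term by $O(e^{-\tau_1 R})$ in \emph{absolute} value and divide by an $R$-independent lower bound on the main term — therefore cannot close, since both quantities are of order $e^{-\tau_1 R}$ and their ratio need not vanish. You correctly flag this as ``the main obstacle,'' but the obstacle is not a missing estimate; the lower bound you need is simply not true.

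The paper resolves this by comparing error terms to main terms \emph{multiplicatively}, term by term. For a pair $(i_0,k)$ with $i_0$ primary and $k\in\cS^<_X(p)$, it uses the \emph{second} gap condition (secondary dominates tertiary by $\tau_1$) to get $q_k/q_j\le\exp(-\tau_1 R)$ for $j\in\cS^2_X(p)$, and bounds the ratios of the $\widehat a$-differences and of the $\gamma$-sums by $R$-independent constants (via $\tau_2$ and via the local-optimality constant $\mu$, respectively). This yields that the contribution of the tertiary pairs is at most $C\exp(-\tau_1 R)$ \emph{times} the main contribution, and similarly for pairs with both indices outside $\cS_X(p)$, which is exactly the relative smallness needed to absorb them into a $(1\pm\eps)$ factor. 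Two further points: your role for the two gap hypotheses is inverted (the second one, separating $\cS^2_X$ from $\cS^<_X$, is what drives the ratio bound, not an absolute bound); and your claim that the ``both secondary'' pairs vanish is unjustified, since tokens in $\cS^2_X(p)$ share the same value of $p^\top E_s$ but not necessarily of $\hat p^\top E_s$ — the relation $\hat p\approxeq p$ only fixes the primary selection — so these pairs must also be handled by the ratio comparison.
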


\begin{proof}
    By Lemma \ref{lem:dir_grad}, we can write the directional gradient as follows: \begin{align}
    - \widehat{p}^\top \nabla_p \cL(\mE^1,p) & = \hbE\left[ \sum_{i=1}^T \sum_{j>i} (\widehat{a}_i(X) - \widehat{a}_j(X))h_{i,j}(X,y,p))\right] \notag \\
    & = \hbE\left[ \sum_{i \in \cS_X(p)}\sum_{j \in \cS^2_X(p)} (\widehat{a}_i(X) - \widehat{a}_j(X)) h_{i,j}(X,y,p)\right]\tag{B0} \label{eqn:b0}\\
    & \quad + \hbE\left[ \sum_{i \in \cS_X(p)} \sum_{j \in \cS^<_X(p)}   (\widehat{a}_i(X) - \widehat{a}_j(X)) h_{i,j}(X,y,p) ) \right]\tag{B1} \label{eqn:b1} \\
    & \quad +  \hbE\left[ \sum_{i \in X \setminus \cS_X(p)} \sum_{j > i: j \in X \setminus \cS_X(p)}   (\widehat{a}_i(X) - \widehat{a}_j(X)) h_{i,j}(X,y,p) ) \right] \tag{B2} \label{eqn:b2}.
    \end{align}

    The rest of the proof is to show that \begin{equation*}
        \begin{split}
            & - C_1 \exp( -\tau_1 R) \eqref{eqn:b0}\leq \eqref{eqn:b1} \leq C_1 \exp( -\tau_1 R)  \eqref{eqn:b0}, \\
            & - C_2 \exp( -\tau_1 R) \eqref{eqn:b0}\leq \eqref{eqn:b2} \leq C_2 \exp( -\tau_1 R)  \eqref{eqn:b0},
        \end{split}
    \end{equation*} for some $C_1, C_2 > 0$ that do not depend on $R.$ Then, by taking $R$ large enough, we obtain the desired result. 

    First, we simplify \eqref{eqn:b0}. 
    Note that, for all $i, i_0 \in \cS_X(p)$, we have that $\widehat{a}_i(X)= \widehat{a}_{i_0}(X)$. Hence, by switching the order of $i,j$, we obtain \begin{equation*}
        \begin{split}
            \sum_{i \in \cS_X(p)}\sum_{j \in \cS^2_X(p)} (\widehat{a}_{i}(X) - \widehat{a}_j(X)) h_{i,j}(X,y,p) &= \sum_{j \in \cS^2_X(p)} (\widehat{a}_{i_0}(X) - \widehat{a}_j(X)) \sum_{i \in \cS_X(p)} h_{i,j}(X,y,p) \\
            & \hspace{-11.5em}=g(X, y) \sum_{j \in \cS^2_X(p)} (\widehat{a}_{i_0}(X) - \widehat{a}_j(X)) q_{i_0}(X) q_j(X) \sum_{i \in \cS_X(p)}( \gamma_i(X,y) - \gamma_j(X,y)),
        \end{split}
    \end{equation*} 
for any $i_0\in \mathcal S_X(p)$.
    Since $p$ is a locally optimal direction, we have \begin{equation*}
        \sum_{i \in \cS_X(p)}( \gamma_i(X,y) - \gamma_j(X,y)) \geq \mu,\qquad  \text{for all } j \in \cS^2_X(p).
    \end{equation*}
    Now, we compare \eqref{eqn:b1} and \eqref{eqn:b0}. By the exact same reason above, we can rewrite \begin{equation*}
    \begin{split}
        \sum_{i \in \cS_X(p)}\sum_{j \in \cS^<_X(p)}& (\widehat{a}_i(X) - \widehat{a}_j(X)) h_{i,j}(X,y,p) \\
        &= g(X, y)\sum_{j \in \cS^<_X(p)} (\widehat{a}_{i_0}(X) - \widehat{a}_j(X)) q_{i_0}(X) q_j(X) \sum_{i \in \cS_X(p)}( \gamma_i(X,y) - \gamma_j(X,y)),
        \end{split}
    \end{equation*}
    for any $i_0\in \mathcal S_X(p)$, and we compare to \eqref{eqn:b0} term-by-term. Namely, for any $X$, $j \in \cS_X^2(p)$ and $k \in \cS^<_X(p),$ we have: \begin{align}
            & \frac{|\widehat{a}_{i_0}(X) - \widehat{a}_k(X) |}{\widehat{a}_{i_0}(X) - \widehat{a}_j(X)} \leq \frac{\|\hat{p}\|_2\|E_{x_{i_0}} - E_{x_j}\|_2}{\tau_2} \leq \frac{2\|\hat{p}\|_2 \max_s \|E_s\|_2}{\tau_2}:= C_3, \label{eq:a_i-a_j} \\
            & \frac{q_k(X)}{q_j(X)} = \exp( a_k(X) - a_j(X) ) \leq \exp( - \tau_1 R), \\
            & \frac{\sum_{i \in \cS_X(p)}| \gamma_i(X,y) - \gamma_k(X,y) |}{\sum_{i \in \cS_X(p)}( \gamma_i(X,y) - \gamma_j(X,y))} \leq \frac{2 T \max_s |\gamma_s|}{\mu} \leq  \frac{ 2T \max_s \|E_s\|_2}{\mu} := C_4 \label{eq:gamma_i-gamma_j},
        \end{align} which implies that, for any $X$, $j \in \cS_X^2(p)$ and $k \in \cS^<_X(p),$ \begin{equation*}
    \begin{split}
         |\widehat{a}_{i_0}(X) & - \widehat{a}_k(X)| q_{i_0}(X) q_k(X) \sum_{i \in \cS_X(p)}| \gamma_i(X,y) - \gamma_k(X,y)| \\
         \leq &\exp(-\tau_1 R) C_3 C_4 (\widehat{a}_{i_0}(X) - \widehat{a}_j(X)) q_{i_0}(X) q_j(X) \sum_{i \in \cS_X(p)}( \gamma_i(X,y) - \gamma_j(X,y)).
    \end{split}
    \end{equation*}

    Thus, we get that: \begin{equation*}
        |\eqref{eqn:b1}| \leq \exp(- \tau_1 R) T C_3 C_4  |\eqref{eqn:b0}|.
    \end{equation*}

    Next, we compare \eqref{eqn:b2} and \eqref{eqn:b0}. Take any $i' \in X\setminus \cS_X(p), k > i' \in X\setminus\cS_X(p), i_0 \in \cS_X(p), j \in \cS^2_X(p)$. We compare \begin{equation*}
        (\widehat{a}_{i'}(X) - \widehat{a}_k(X)) h_{i',k}(X,y,p)
    \end{equation*}  with each term in \eqref{eqn:b1}. We note that the bounds on $\frac{\widehat{a}_{i'}(X) - \widehat{a}_k(X)}{\widehat{a}_{i_0}(X) - \widehat{a}_j(X)}$ and $\frac{|\gamma_{i'}(X) - \gamma_k(X)|}{\sum_{i \in \cS_X(p)}(\gamma_i(X,y) - \gamma_j(X,y))}$ are the same as those in  \eqref{eq:a_i-a_j} and \eqref{eq:gamma_i-gamma_j}. Furthermore, \begin{equation*}
        \begin{split}
            \frac{q_{i'} q_k}{q_{i_0} q_j} \leq \exp(-\tau_1 R),
        \end{split}
    \end{equation*}
which gives that \begin{equation*}
        |\eqref{eqn:b2}| \leq T^2 \exp(- \tau_1 R) C_3 C_4 |\eqref{eqn:b0}|, 
    \end{equation*} thus concluding the proof.

\end{proof}

\begin{lemma}
\label{lem:max-margin}
     Under Assumption \ref{asm:complete_token}, for any $\delta>0,$ by picking \begin{equation*}
      \eta_0 \geq 4 n^2 T^2, \quad d \geq \max\left\{256,  \left(2 \log\frac{|\cS|^2}{\delta} \right)^2, (88 \eta_0^2+111\eta_0+2)^8  \right\},
    \end{equation*} with probability $\geq 1-\delta$ over the initialization, if $p_\infty = \lim_{t\to\infty} \frac{p_t}{\|p_t\|_2}$ exists, then  $p_\infty \in \cP_*(p_\infty)$.
\end{lemma}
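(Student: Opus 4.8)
The plan is to deduce the statement from the Convergence Lemma (Lemma~\ref{lem:converge}), taking as target vector $\hat p:=\hat p(p_\infty)$, the unique max-margin solution for the selection induced by $p_\infty$; by Lemma~\ref{lem:sol_mm} it exists, makes every constraint tight (so $\hat p^\top(E_s-E_{s'})=1$ for all $X\in\cX_n$, $s\in\cS_X(p_\infty)$, $s'\notin\cS_X(p_\infty)$), and has $\|\hat p\|_2$ independent of $t$. Write $\cS_X^*:=\cS_X(p_\infty)$ and $\overline{p_t}:=p_t/\|p_t\|_2$. Since $\|p_t\|_2\to\infty$ by Lemma~\ref{lem:1_impt_norm} and the directional limit $p_\infty$ exists, for all large $t$ the selections stabilize: $\cS_X(p_t)=\cS_X^*$ and $\cS^2_X(p_t)\subseteq\cS^2_X(p_\infty)$ for every $X$ (there are finitely many sequences and tokens, hence a strict separation $\tau_1>0$ between consecutive tiers of $p_\infty$, and $\overline{p_t}\to p_\infty$ preserves it with slack). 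It therefore suffices to show that for every $\eps>0$ there is a $\bar t(\eps)$ with
\begin{equation*}
-\tfrac{\hat p^\top}{\|\hat p\|_2}\,\nabla_p\cL(\mE^1,p_t)\ \ge\ -(1-\eps)\,\overline{p_t}^\top\nabla_p\cL(\mE^1,p_t),\qquad\text{for all }t\ge\bar t(\eps),
\end{equation*}
since Lemma~\ref{lem:converge} then forces $\overline{p_t}\to\hat p/\|\hat p\|_2=p_*(p_\infty)$, hence $p_\infty=p_*(p_\infty)\in\cP_*(p_\infty)$.

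To establish this inequality I would first record that $\{\cS_X^*\}_X$ is \emph{locally optimal} in the sense of Definition~\ref{def:local_opt}: by Lemma~\ref{lem:1_impt_norm2}, each $\cS_X^*$ contains the unique relevant token $s_*^X$, and by Lemma~\ref{lem:overlap_v}, $\gamma_{s_*^X}=yE_{s_*^X}^\top v=\tfrac{\eta_0}{2}|\alpha_{s_*^X}|+O(\eta_0 d^{-1/4})\ge\tfrac{\eta_0}{2nT}-O(\eta_0 d^{-1/4})$, while $\gamma_i=O(\eta_0 d^{-1/4})$ for the irrelevant tokens $i\in\cS_X^*\setminus\{s_*^X\}$ and for $j\in\cS^2_X$; under \eqref{eq:condeta0} this yields $\sum_{i\in\cS_X^*}(\gamma_i-\gamma_j)\ge\tfrac{\eta_0}{4nT}=:\mu>0$. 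Then I would run the argument of Lemma~\ref{lem:grad_approx} \emph{along the trajectory $p_t$} rather than along a fixed ray --- its proof only uses ratios $q_k(X)/q_j(X)=\exp(p_t^\top(E_{x_k}-E_{x_j}))$, which decay like $\exp(-\tfrac{\tau_1}{2}\|p_t\|_2)$ once $\overline{p_t}$ is close to $p_\infty$ --- to conclude that, up to a factor $1+o(1)$, both directional gradients reduce to the same leading sum over the pairs $i\in\cS_X^*$, $j\in\cS^2_X$; since $\gamma_i-\gamma_j\approx 0$ unless $i=s_*^X$, this leading sum is in turn
\begin{equation*}
-\,\widehat q^\top\nabla_p\cL(\mE^1,p_t)\ =\ (1+o(1))\sum_{X}\sum_{j\in\cS^2_X(p_t)} w^{(t)}_{X,j}\;\widehat q^\top\bigl(E_{s_*^X}-E_{x_j}\bigr),\qquad w^{(t)}_{X,j}>0,
\end{equation*}
valid for both $\widehat q=\hat p/\|\hat p\|_2$ and $\widehat q=\overline{p_t}$, where $w^{(t)}_{X,j}$ is proportional to $g(X,y)\,q_{s_*^X}(X)\,q_j(X)\,\gamma_{s_*^X}$.

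The crux is then comparing the two resulting scalars. For $\widehat q=\hat p/\|\hat p\|_2$, tightness of the constraints gives $\widehat q^\top(E_{s_*^X}-E_{x_j})=1/\|\hat p\|_2$ for every $X$ and $j\notin\cS_X^*$, so $-\tfrac{\hat p^\top}{\|\hat p\|_2}\nabla_p\cL=(1+o(1))\,\|\hat p\|_2^{-1}\sum_{X,j}w^{(t)}_{X,j}$. For $\widehat q=\overline{p_t}$, since $j\in\cS^2_X(p_t)\subseteq\cS^2_X(p_\infty)$ attains the smallest gap of $p_\infty$, we have $\overline{p_t}^\top(E_{s_*^X}-E_{x_j})\to\tau_X:=\min_{s'\notin\cS_X^*}p_\infty^\top(E_{s_*^X}-E_{s'})$; moreover $q_j(X)\asymp\exp(-\tau_X\|p_t\|_2)$, so for large $t$ the weights $w^{(t)}_{X,j}$ concentrate on the sequences that attain $\tau_X=\tau:=\min_X\tau_X$, whence $-\overline{p_t}^\top\nabla_p\cL=(1+o(1))\,\tau\sum_{X,j}w^{(t)}_{X,j}$. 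The ratio of the two scalars therefore converges to $1/(\tau\|\hat p\|_2)$, and since $p_\infty/\tau$ is feasible for the max-margin program (each constraint becomes $p_\infty^\top(E_s-E_{s'})/\tau\ge\tau_X/\tau\ge 1$) we get $\|\hat p\|_2\le 1/\tau$, i.e.\ the ratio is $\ge 1$, which yields the required inequality for all large $t$. I expect the main obstacle to be precisely this last comparison: making the $o(1)$ bookkeeping rigorous and uniform in $t$ --- controlling the exponentially small tail from pairs outside $(\cS_X^*,\cS^2_X)$, from sequences with non-minimal $\tau_X$, and from the $O(\eta_0 d^{-1/4})$ slack in the $\gamma$'s (this is where \eqref{eq:condeta0} enters) --- together with carefully transplanting Lemma~\ref{lem:grad_approx} from a fixed direction to the moving direction $\overline{p_t}$.
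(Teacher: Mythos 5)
Your proposal is correct in its core logic and shares the paper's skeleton --- reduce everything to the directional-gradient inequality of Lemma \ref{lem:converge} with target $\hat p=\hat p(p_\infty)$, then use a Lemma \ref{lem:grad_approx}-type expansion along the trajectory to isolate the leading pairs $(s_*^X,j)$, $j\in\cS^2_X$ --- but the decisive comparison is carried out by a genuinely different certificate. The paper argues by contradiction: assuming $p_\infty\notin\cP_*(p_\infty)$, it extracts a nonempty set $\cX_0$ of sequences on which $\|\hat p\|_2\,p_\infty$ violates the margin by $3\tau$, shows the gradient concentrates on $\cX_0$ (the violating pairs have the slowest-decaying softmax weights), and wins there by the fixed factor $\tfrac{1}{1-\tau}$ from comparing $\widehat a_i-\widehat a_j\ge 1$ against $\overline a_i-\overline a_j\le 1-\tau$. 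You argue directly: both directional derivatives reduce to the same positive weights $w^{(t)}_{X,j}$ times the alignments $\hat q^\top(E_{s_*^X}-E_{x_j})$, the weights concentrate on sequences attaining $\tau=\min_X\tau_X$, and the ratio tends to $\tfrac{1}{\tau\|\hat p\|_2}\ge 1$ because $p_\infty/\tau$ is feasible for \eqref{eqn:max-margin}. This buys a direct proof and a transparent optimality argument, whereas the paper's localization to $\cX_0$ only makes sense under the contradiction hypothesis. Two cautions when you make this rigorous. First, Lemma \ref{lem:converge} needs the inequality with $1-\eps$ for \emph{every} $\eps>0$, so all multiplicative errors must vanish as $t\to\infty$; the exponential tails do, but your approximation of $\sum_{i\in\cS_X}(\gamma_i-\gamma_j)$ by $\gamma_{s_*^X}$ is a \emph{fixed} $O(nTd^{-1/4})$ relative error. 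You must therefore keep the weight $w^{(t)}_{X,j}=g\,q_{i_0}q_j\sum_{i\in\cS_X}(\gamma_i-\gamma_j)$ literally identical in both expansions so it cancels in the ratio (this is legitimate: the $i$-sum factors out exactly for $\hat p$, which is constant on $\cS_X(p_\infty)$ by tightness of the constraints from Lemma \ref{lem:sol_mm}, and up to $o(1)$ for $\overline{p_t}$); alternatively, dispose of the boundary case $\|\hat p\|_2=1/\tau$ separately, where uniqueness in Lemma \ref{lem:sol_mm} already forces $p_\infty=p_*(p_\infty)$, so that otherwise the limiting ratio is strictly above $1$ and absorbs small fixed errors. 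Second, your claim that $\cS_X(p_t)=\cS_X(p_\infty)$ for all large $t$ can fail if $p_t$ breaks a tie present in $p_\infty$; the paper's own proof has the same blind spot, so this is a shared rigor issue rather than a defect of your route.
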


\begin{proof}
    We prove the lemma by contradiction. We first assume that there exists $p_\infty$ such that $p_\infty \notin \cP_*(p_\infty)$ and $p_\infty = \lim_{t\to\infty} \frac{p_t}{\|p_t\|_2}.$ Then, we show that there exists $\widehat{p} \in \cP_*(p_\infty)$ such that, for any $\eps > 0$, there is $\bar{t}(\epsilon)$ ensuring 
    \begin{equation*}
        -\frac{\widehat{p}^\top}{\|\widehat{p}\|_2}\nabla_p \cL(\mE^1, p_t) \geq -(1-\eps)\frac{p_t^\top}{\|p_t\|_2}\nabla_p \cL(\mE^1, p_t),\qquad \text{ for all }t\ge \bar t(\epsilon).
    \end{equation*}
As a consequence, by Lemma \ref{lem:converge}, we have that $p_\infty = \frac{\widehat{p}}{\|\widehat{p}\|_2},$ which gives a contradiction. 

    For the rest of the proof, we  fix any $\eps > 0$ and denote $R = \|p_t\|_2$. 
    We define $\overline{p_t} = \frac{p_t \|\widehat{p}\|_2}{\|p_t\|_2},$ and we equivalently show that: \begin{equation}
    \label{eq:1impt_gradineq}
        - \widehat{p}^\top \nabla_p \cL(\mE^1, p_t) \geq - (1-\eps) \overline{p_t} \nabla_p \cL(\mE^1, p_t).
    \end{equation}
    To prove this, we first note that since $p_\infty \notin \cP_*(p_\infty),$ for all $\frac{\widehat{p}}{\|\widehat{p}\|_2} \in \cP_*(p_\infty),$ there exists $\tau_0$ independent of $R$ such that \begin{equation*}
        \left\| \widehat{p}  - p_\infty \|\widehat{p}\|_2 \right\|_2 \geq \tau_0.
    \end{equation*}

    Thus, by the definition of $\cP_*(p_\infty)$, there exists $\cX_0 \subseteq \cX_n$ such that for each sequence $X\in \cX_0$, we can find a pair of indices $(i,j)$ with $i \in \cS_X(p_\infty), j \in X\setminus \cS_X(p_\infty)$ violating the margin, i.e., \begin{equation*}
         (\|\widehat{p}\|_2 p_\infty)^\top (E_{x_i} - E_{x_j}) \leq 1 - 3\tau,
    \end{equation*} for some $\tau < \frac{1}{6}$ that does not depend on $R.$ 
    With a slight abuse of notation, we define $\tau$ as \begin{equation*}
    \begin{split}
        \tau = &\frac{1}{3} \min\{ \min_{X \in \cX_0}\{ 1-(\|\widehat{p}\|_2 p_\infty)^\top (E_{x_i} - E_{x_j}), i \in \cS_X(p_\infty), j \in  \cS_X^2(p_\infty) \}, \\
        & \min_{X \in \cX_n} \{ (\|\widehat{p}\|_2 p_\infty)^\top (E_{x_i} - E_{x_j}), i \in \cS_X(p_\infty), j \in  \cS_X^2(p_\infty) \}, \\
        & \min_{X \in \cX_n} \{ (\|\widehat{p}\|_2 p_\infty)^\top (E_{x_i} - E_{x_j}), i \in \cS^2_X(p_\infty), j \in  \cS_X^<(p_\infty) \}\}.
    \end{split}
    \end{equation*}    
    This means that, for all $X \in \cX_n$ and for all $(i,j)$ pairs such that $i \in \cS_X(p_\infty), j \in \cS^2_X(p_\infty)$, we have \begin{equation*}
        (\|\widehat{p}\|_2 p_\infty)^\top (E_{x_i} - E_{x_j}) \geq 3\tau;
    \end{equation*} for all pairs $(i, j)$ such that $i \in \cS^2_X(p_\infty), j \in \cS^<_X(p_\infty)$, we have \begin{equation*}
        (\|\widehat{p}\|_2 p_\infty)^\top (E_{x_i} - E_{x_j}) \geq 3 \tau;
    \end{equation*} and for all $X \in \cX_0,i \in \cS_X(p_\infty), j \in \cS^2_X(p_\infty),$ we have \begin{equation*}
        (\|\widehat{p}\|_2 p_\infty)^\top (E_{x_i} - E_{x_j}) \leq 1- 3 \tau,
    \end{equation*} with some $\tau$ that does not depend on $R.$

    Now, we compute the overlap with $\overline{p_t}.$ For all $X$ and $(i,j)$, we have \begin{equation*}
        \begin{split}
            \overline{p_t}^\top (E_{x_i} - E_{x_j}) &=  (\|\widehat{p}\|_2 p_\infty)^\top (E_{x_i} - E_{x_j}) + (\overline{p_t} - \|\widehat{p}\|_2 p_\infty)^\top(E_{x_i} - E_{x_j}).
        \end{split}
    \end{equation*}
    We upper bound \begin{equation*}
       |(\overline{p_t} - \|\widehat{p}\|_2 p_\infty)^\top(E_{x_i} - E_{x_j}) | \leq  \|\widehat{p}\|_2 \left\| \frac{p_t}{\|p_t\|_2} - p_\infty\right\|_2 \|E_{x_1} - E_{x_2}\|_2, 
    \end{equation*} and since $\|\widehat{p}\|_2, \|E_{x_1} - E_{x_2}\|_2$ are finite, we have \begin{equation*}
        \lim_{t \rightarrow \infty}|(\overline{p_t} - \|\widehat{p}\|_2 p_\infty)^\top(E_{x_i} - E_{x_j}) | = 0.
    \end{equation*}
    Thus, we can pick $t_1,$ such that for $t \geq t_1,$ we have \begin{equation*}
        |(\overline{p_t} - \|\widehat{p}\|_2 p_\infty)^\top(E_{x_i} - E_{x_j}) | \leq \tau,
    \end{equation*} which implies that, for all $X \in \cX_n$ and for all $(i,j)$ pairs such that $i \in \cS_X(p_\infty), j \in \cS^2_X(p_\infty)$, we have \begin{equation*}
        \overline{p_t}^\top (E_{x_i} - E_{x_j}) \geq \tau;
    \end{equation*} for all $(i, j)$ pairs such that $i \in \cS^2_X(p_\infty), j \in \cS^<_X(p_\infty)$, we have \begin{equation*}
        \overline{p_t}^\top (E_{x_i} - E_{x_j}) \geq  \tau;
    \end{equation*} and for all $X \in \cX_0,i \in \cS_X(p_\infty), j \in \cS^2_X(p_\infty),$ we have: \begin{equation*}
        \overline{p_t}^\top (E_{x_i} - E_{x_j}) \leq 1-  \tau,
    \end{equation*} for some $\tau$ that does not depend on $R.$

    Next, we show that $\overline{p_t}$ is a locally optimal solution as per Definition \ref{def:local_opt}. By Lemma \ref{lem:1_impt_norm}, $p_\infty$ selects all the completely positive/negative tokens. Thus, as $\overline{p_t} \approxeq p_\infty$,  $\overline{p_t}$ also selects such tokens, the rest being irrelevant by Assumption \ref{asm:complete_token}. Hence, for any pair $(X,y)$ and for any $j \in X \setminus \cS_X(p_t),$ we have: \begin{equation*}
        \sum_{i \in \cS_X(p_t)} (\gamma_i(X,y) - \gamma_j(X,y)) \geq \frac{\eta_0}{4nT},
    \end{equation*} by picking $d$ large enough (as per the hypothesis of the lemma). By construction, $\hat{p} \approxeq p_t$, $\|\hat{p}\|_2$ does not depends on $R$ and,  moreover, for any $X$, \begin{equation*}
        \begin{split}
            \hat{p}^\top (E_{x_i} - E_{x_j}) \geq 1, \qquad \text{ for all } i \in \cS_X(\hat{p}),\,\, j \in X \setminus \cS_X(\hat{p}) .
        \end{split}
    \end{equation*}
    By applying Lemma \ref{lem:grad_approx} on both $\hat{p}$ and $\overline{p_t},$ we have that for any $\eps_1 > 0$ there exist $t_2$ s.t.\ for all $t \geq \max \{t_1, t_2\},$ we have \begin{equation*}
        \begin{split}
            & - \hat{p}^\top \nabla_p \cL(\mE^1, p_t) \geq (1 - \eps_1) \hbE\left[ \sum_{i \in \cS_X(p)}\sum_{j \in \cS^2_X(p)} (\widehat{a}_i(X) - \widehat{a}_j(X)) h_{i,j}(X,y,p_t)\right], \\
            & - \overline{p_t}^\top \nabla_p \cL(\mE^1, p_t) \leq (1 + \eps_1) \hbE\left[ \sum_{i \in \cS_X(p)}\sum_{j \in \cS^2_X(p)} (\overline{a_i}(X) - \overline{a_j}(X)) h_{i,j}(X,y,p_t)\right],
        \end{split}
    \end{equation*}
where $\overline{a_i}(X), \overline{a_j}(X)$ are defined analogously to $\widehat{a}_i(X), \widehat{a}_j(X)$ by replacing $\hat{p}$ with $\overline{p_t}$.

    Now, we further show that, 
    for any $\eps_2 > 0,$ there exist $t_3$ such that for all $t \geq t_3$, \begin{equation*}
    \begin{split}
        &\hbE\left[ \sum_{i \in \cS_X(p)}\sum_{j \in \cS^2_X(p)} (\overline{a_i}(X) - \overline{a_j}(X)) h_{i,j}(X,y,p_t)\right]\\
        &\hspace{4em}\leq (1+\eps_2) \hbE_{X \in \cX_0}\left[ \sum_{i \in \cS_X(p)}\sum_{j \in \cS^2_X(p)} (\overline{a_i}(X) - \overline{a_j}(X)) h_{i,j}(X,y,p_t)\right].
    \end{split}
    \end{equation*} 

    To see this, we use the same 
    idea as in the proof of Lemma \ref{lem:grad_approx}. We can write \begin{align}
       & \hbE\left[ \sum_{i \in \cS_X(p)}\sum_{j \in \cS^2_X(p)} (\overline{a_i}(X) - \overline{a_j}(X)) h_{i,j}(X,y,p_t)\right]\notag \\
       &\hspace{4em}= \hbE_{X \in \cX_0}\left[ \sum_{i \in \cS_X(p)}\sum_{j \in \cS^2_X(p)} (\overline{a_i}(X) - \overline{a_j}(X)) h_{i,j}(X,y,p_t)\right] \label{eq:X0} \tag{A0}\\
        &\hspace{4em} + \hbE_{X' \in \cX_n \setminus \cX_0}\left[ \sum_{i \in \cS_{X'}(p)}\sum_{j \in \cS^2_{X'}(p)} (\overline{a_i}(X') - \overline{a_j}(X')) h_{i,j}(X',y',p_t)\right] \label{eq:X-X0} \tag{A1},
    \end{align} and it is sufficient to show that \begin{equation*}
        \eqref{eq:X-X0} \leq \eps_2 \eqref{eq:X0}.
    \end{equation*}

    To prove this, we compare term-by-term. Let $X \in \cX_0, X' \in \cX_n \setminus \cX_0, j \in \cS^2_X(p_t), j' \in \cS^2_{X}(p_t),$ and recall that: \begin{equation*}
        \begin{split}
            &\sum_{i \in \cS_X(p_t)} (\overline{a_i}(X) - \overline{a_j}(X)) h_{i,j}(X,y,p_t) \\
            &\hspace{4em}= g(X,y) (\overline{a_{i_0}}(X) - \overline{a_j}(X)) q_{i_0}(X) q_j(X) \sum_{i \in \cS_X(p_t)}(\gamma_i(X,y) - \gamma_j(X,y)),
            \end{split}
            \end{equation*}
            \begin{equation*}
                \begin{split}
            &\sum_{i \in \cS_{X'}(p_t)} (\overline{a_i}(X') - \overline{a_{j'}}(X')) h_{i,j'}(X',y',p_t) \\
            &\hspace{4em}= g(X',y') (\overline{a_{i_1}}(X') - \overline{a_{j'}}(X')) q_{i_1}(X') q_{j'}(X') \sum_{i \in \cS_{X'}(p_t)}(\gamma_i(X',y') - \gamma_{j'}(X',y')),
        \end{split}
    \end{equation*}
for any $i_0\in \mathcal S_X(p_t), i_1\in \mathcal S_{X'}(p_t)$. Note that \begin{equation}
\label{eq:gX-gX'}
        \frac{g(X',y')}{g(X,y)} \leq \frac{\max_{X,y} g(X,y)}{\min_{X,y} g(X,y) } \leq  \max_{X,y} (1+ \exp(yf(X))) \leq (1+ \exp(\eta_0)) := C_5.
    \end{equation}

    By using the same argument as in \eqref{eq:a_i-a_j} and \eqref{eq:gamma_i-gamma_j}, we have \begin{equation*}
        \begin{split}
            & \frac{\overline{a_{i_1}}(X') - \overline{a_{j'}}(X')}{\overline{a_{i_0}}(X) - \overline{a_j}(X)} \leq C_3, \\
            & \frac{\sum_{i \in \cS_{X'}(p_t)}(\gamma_i(X',y') - \gamma_{j'}(X',y'))}{\sum_{i \in \cS_X(p_t)}(\gamma_i(X,y) - \gamma_j(X,y))} \leq C_4.
        \end{split}
    \end{equation*}
    Finally, we need to upper bound: \begin{equation*}
        \frac{q_{i_1}(X') q_{j'}(X')}{q_{i_0}(X) q_j(X)}.
    \end{equation*}
    We note that\begin{equation*}
        \begin{split}
            a_{i_1}(X') - a_{j'}(X') \geq R/\|\hat{p}\|_2, \\
            a_{i_0}(X) - a_{j}(X) \leq (1 - \tau)R/\|\hat{p}\|_2,
        \end{split}
    \end{equation*} where $a_i(X) = p_t^\top E_{x_i}.$ Thus by Lemma \ref{lem:bds_on_q}, we have: \begin{equation*}
        q_{i_0}(X) \geq \frac{1}{T},\quad q_j(X) \geq \frac{1}{T\exp((1 - \tau)R/\|\hat{p}\|_2)},\quad  q_{i_1}(X') \leq 1, \quad q_{j'}(X') \leq \frac{1}{\exp(R/\|\hat{p}\|_2)},
    \end{equation*}
which implies that \begin{equation*}
        \frac{q_{i_1}(X') q_{j'}(X')}{q_{i_0}(X) q_j(X)} \leq T^2 \exp(- \tau R/\|\hat{p}\|_2).
    \end{equation*} Thus, for each $X \in \cX_0, X' \in \cX_n \setminus \cX_0, j \in \cS^2_X(p_t), j' \in \cS^2_{X}(p_t)$, we have \begin{equation*}
\sum_{i \in \cS_{X'}(p)} (\overline{a_i}(X') - \overline{a_{j'}}(X')) h_{i,j'}(X',y',p_t)
         \leq C_6     \exp(- \tau R/\|\hat{p}\|_2) \sum_{i \in \cS_X(p)} (\overline{a_i}(X) - \overline{a_j}(X)) h_{i,j}(X,y,p_t).
    \end{equation*}
    Thus by picking large enough $t_3$ which gives  large enough $R, $ we have: \begin{equation*}
        \eqref{eq:X-X0} \leq \eps_2 \eqref{eq:X0}.
    \end{equation*}
This allows us to conclude that
        \begin{equation*}
        \begin{split}
             - \hat{p}^\top \nabla_p \cL(\mE^1, p_t) &\geq (1 - \eps_1) \hbE\left[ \sum_{i \in \cS_X(p)}\sum_{j \in \cS^2_X(p)} (\widehat{a}_i(X) - \widehat{a}_j(X)) h_{i,j}(X,y,p_t)\right] \\
             &\geq (1 - \eps_1)\hbE_{X \in \cX_0}\left[ \sum_{i \in \cS_X(p)}\sum_{j \in \cS^2_X(p)} (\widehat{a}_i(X) - \widehat{a}_j(X)) h_{i,j}(X,y,p_t)\right], \\
             -\overline{p_t}^\top \nabla_p \cL(\mE^1, p_t)  &\leq (1 + \eps_1)(1+\eps_2) \hbE_{X \in \cX_0}\left[ \sum_{i \in \cS_X(p)}\sum_{j \in \cS^2_X(p)} (\overline{a_i}(X) - \overline{a_j}(X)) h_{i,j}(X,y,p_t)\right].
        \end{split}
    \end{equation*} Note that, for each $X \in \cX_0$, \begin{equation*}
        \widehat{a}_i(X) - \widehat{a}_j(X) \geq 1,\qquad  \overline{a_i}(X) - \overline{a_j}(X) \leq 1-\tau,
    \end{equation*}
which gives that \begin{equation*}
         - \hat{p}^\top \nabla_p \cL(\mE^1, p_t) \geq - \frac{1-\eps_1}{(1+\eps_1)(1+\eps_2)(1-\tau)}\overline{p_t}^\top \nabla_p \cL(\mE^1, p_t).
    \end{equation*} Since $\eps_1, \eps_2$ can be arbitrarily small, the proof is complete.
\end{proof}


\subsection{Proof of Lemma \ref{lem:suff}}
\label{apx:pf_lem_suff}

We first prove two auxiliary lemmas
showing that for any max-margin solution, at least one constraint must be tight. 

\begin{lemma}
\label{lem:tight_one}
    Consider the max-margin problem in \eqref{eqn:max-margin}, and let $\hat{p}$ be the solution. Then, there exist $X \in \cX_n, s \in \cS_X(p), s' \in X \setminus \cS_X(p)$ such that: \begin{equation*}
        \hat{p}^\top (E_s - E_{s'}) = 1.
    \end{equation*} 
\end{lemma}

\begin{proof}
    Assume by contradiction that for all $X \in \cX_n, s \in \cS_X(p), s' \in X \setminus \cS_X(p), $   \begin{equation*}
        \hat{p}^\top (E_s - E_{s'}) > 1.
    \end{equation*}

    Define \begin{equation*}
        \tau = \min_{X \in \cX_n, s \in \cS_X(p), s' \in X \setminus \cS_X(p)} \hat{p}^\top (E_s - E_{s'}) ,
    \end{equation*} and we know that $\tau >1.$ Then, $\hat{p}' = \frac{\hat{p}}{\tau}$ is also a feasible solution to \eqref{eqn:max-margin} with $\|\hat{p}'\|_2 < \|\hat{p}\|_2,$ which contradicts to the fact that $\hat{p}$ is the solution to the max-margin problem.   
\end{proof}

\begin{lemma}
\label{lem:ub_mmnorm}
    Under the same condition on $d$ in Lemma \ref{lem:suff}, let $\hat{p}$ be the unique solution of the max-margin problem in \eqref{eqn:max-margin} that only selects the completely positive and negative tokens. Then we have: \begin{equation*}
        \| \hat{p} \|_2 \leq 4n.
    \end{equation*}
\end{lemma}

\begin{proof}
    By the definition of the max-margin problem, it is sufficient to construct a feasible $p$ with $\|p \|_2 \leq 4n$, which implies that: \begin{equation*}
        \|\hat{p}\|_2 \leq \|p\|_2 \leq 4n.
    \end{equation*}
    
    Since $\hat{p}$ only selects completely positive and negative tokens, we know that $(\bigcup_{X} \cS_X(\hat{p})) \bigcap (\bigcup_{X} \overline{\cS_X(\hat{p})}) = \emptyset.$ We let \begin{equation*}
        p = 2 \sum_{s \in \bigcup_{X} \cS_X(\hat{p})} E_s^0, 
    \end{equation*} and by \eqref{eq:cond} we know that $\|p\|_2 \leq 4n,$ since there are at most $n$ completely positive and negative tokens.

    It remains to show that $p$ is feasible with high probability. To see this, for $s \in \bigcup_{X} \cS_X(\hat{p})$, we have \begin{equation*}
        \begin{split}
        p^\top E_s^1 &= 2 + \sum_{s' \in \cS_X(\hat{p}):s' \neq s} (E_s^0)^\top E_{s'}^0 +  
 \frac{\eta_0}{2} \alpha_s p^\top v + p^\top \err_s \\
        & \geq  2 - \frac{\eta_0}{2} \alpha_s \frac{n}{\sqrt{d}} \sqrt{2 \log \frac{|\cS|^2}{\delta}} - 22\sqrt{2n} \eta_0 d^{-\frac{1}{4}} , \\
        &\geq  2 - \frac{\eta_0}{2}\frac{n}{\sqrt{d}} \sqrt{2 \log \frac{|\cS|^2}{\delta}} - 22\sqrt{2n} \eta_0 d^{-\frac{1}{4}}
       \end{split}
    \end{equation*}  where the first  inequality follows from \eqref{eq:cond} and  Lemma \ref{lem:overlap_v}. Similarly, for $s \in \bigcup_{X} \overline{\cS_X(\hat{p})},$ \begin{equation*}
        p^\top E_s^1 \leq \frac{\eta_0}{2}\frac{n}{\sqrt{d}} \sqrt{2 \log \frac{|\cS|^2}{\delta}} + 22\sqrt{2n} \eta_0 d^{-\frac{1}{4}}
    \end{equation*}

    Thus, by picking \begin{equation*}
        d \geq \left(2 \eta_0 \left( n \sqrt{2 \log \frac{|\cS|^2}{\delta}} + 44 \sqrt{2n}\right) \right)^4,
    \end{equation*} 
    we have $p^\top (E_s^1 - E_{s'}^1) \geq 1,$ with $s \in \bigcup_{X} \cS_X(\hat{p}), s' \in \bigcup_{X} \overline{\cS_X(\hat{p})},$ which indicates the feasibility of $p$ and finishes the proof.
\end{proof}

\begin{proof}[Proof of Lemma \ref{lem:suff}]
    Let $\hat{p}'$ be the max-margin solution of \eqref{eqn:max-margin} with a different selection. By Theorem \ref{thm:complete_seq}, we have that, for all $X, s_*^X \in \cS_X(\hat{p}')$. We denote by $i_*^X$ the index of $s_*^X$.
    Assume by contradiction $p_\infty = \frac{\hat{p}'}{\|\hat{p}'\|_2}$. We will now show that this implies the following statement: 
for any $\eps > 0$, there is a $t(\eps)$ ensuring \begin{equation}\label{eq:statm}
        - \frac{\hat{p}^\top}{\|\hat{p}\|_2} \nabla_p \cL(\mE, p_t) \geq - (1-\eps )\frac{p_t^\top}{\|p_t\|_2} \nabla_p \cL(\mE, p_t),\qquad \text { for all }t \geq t(\eps).
    \end{equation} Then,  by Lemma \ref{lem:converge}, we have that $p_\infty = \frac{\widehat{p}}{\|\widehat{p}\|_2},$ which gives a contradiction.
    
    As in the proof of Lemma \ref{lem:max-margin}, we define $\overline{p_t} = \frac{p_t}{\|p_t\|_2} \|\hat{p}\|_2$. Thus, \eqref{eq:statm} is equivalent to \begin{equation*}
        - \hat{p}^\top \nabla_p \cL(\mE, p_t) \geq - (1-\eps )\overline{p_t}^\top \nabla_p \cL(\mE, p_t).
    \end{equation*}

    First of all, since $\hat{p}, \hat{p}'$ are two max-margin solutions, by the definition of max-margin solution and Lemma \ref{lem:tight_one}, we have: \begin{equation*}
    \begin{split}
        &\hat{p}^\top (E_{s_*^X} - E_s) \geq 1, \qquad \forall s \in X \setminus s_*^X, \forall X \in \cX_n \\
        &(\hat{p}')^\top (E_{s} - E_{s'}) = 1, \qquad \exists X \in \cX_n,  s \in \cS_X(\hat{p}'), s' \in X \setminus \cS_X(\hat{p}'),
    \end{split}
    \end{equation*} which implies that \begin{equation*}
        \frac{\hat{p}'^\top \|\hat{p}\|_2}{\|\hat{p}'\|_2} (E_{s} - E_{s'}) = \frac{\|\hat{p}\|_2}{\|\hat{p}'\|_2} = 1-\mu <1 ,\qquad \exists X \in \cX_n,   s' \in X \setminus \cS_X(\hat{p}'),\,\, \forall s  \in \cS_X(\hat{p}').
    \end{equation*}

For simplicity, we define $ \overline{\cS_X^\circ(\hat{p}')} \subseteq X \setminus \cS_X(\hat{p}')$ such that \begin{equation*}
    (\hat{p}')^\top (E_{s} - E_{s'}) = 1, \qquad \forall X \in \cX_n,  s' \in  \overline{\cS_X^\circ(\hat{p}')} , s  \in \cS_X(\hat{p}'),
\end{equation*} and note that $\overline{\cS_X^\circ(\hat{p}')} $ can be empty for some $X.$ Also define \begin{equation*}
    \tau_0 = \min_{X \in \cX_n, s \in \cS_X(\hat{p}'), s' \in \overline{\cS_X(\hat{p}')} \setminus  \overline{\cS_X^\circ(\hat{p}')} } (\hat{p}')^\top (E_{s} - E_{s'}) -1.
\end{equation*}

As $\lim_{t \rightarrow \infty} \overline{p_t} = \frac{\hat{p}' \|\hat{p}\|_2}{\|\hat{p}'\|_2}$, for any $\eps_1 \in (0,\mu)$ small enough, there exists a $t_1$ ensuring the following for all $t \geq t_1$:  \begin{equation}
\begin{split}
\label{eq:sepa_impt}
    &\overline{p_t}^\top (E_{s} - E_{s'}) \leq 1-\mu + \eps_1 < (1-\mu)(1+\tau_0) - \eps_1, \qquad \forall X \in \cX_n, s \in \cS_X(\hat{p}'), s' \in  \overline{\cS_X^\circ(\hat{p}')}, \\
    & \overline{p_t}^\top (E_{s} - E_{s'}) \geq  (1-\mu)(1+\tau_0) - \eps_1, \qquad \forall X \in  \cX_n, s \in \cS_X(\hat{p}'), s' \in \overline{\cS_X(\hat{p}')} \setminus  \overline{\cS_X^\circ(\hat{p}')} , 
\end{split}
\end{equation} which implies that $\cS_X^2(p_t) \subseteq \overline{\cS_X^\circ(\hat{p}')},$ if $\overline{\cS_X^\circ(\hat{p}')} \neq \emptyset.$

By applying Lemma \ref{lem:grad_approx} to $\overline{p_t}$, we obtain that, for any $\eps_2 > 0,$ there exists a $t_2$ ensuring that, for all $t \geq t_2$, \begin{align}
        -\overline{p_t}^\top &\nabla_p \cL(\mE^1, p_t) \leq (1 + \eps_2) \hbE\left[g(X, y) \sum_{i \in \cS_X(p_t)} \sum_{j \in \cS^2_X(p_t)} ( \overline{a_i}(X) -  \overline{a_j}(X) q_i(X) q_j(X) (\gamma_i(X) - \gamma_j(X))  ) \right] \notag \\
        &\leq (1 + \eps_2) \hbE\left[g(X, y) \sum_{i \in \cS_X(p_t)} \sum_{j \in \cS^2_X(p_t)} ( \overline{a_i}(X) -  \overline{a_j}(X) ) q_i(X) q_j(X) |\gamma_i(X) - \gamma_j(X)| \right]\label{eq:approx_ptbar} \tag{C1} 
    \end{align}

Now we further approximate  \eqref{eq:approx_ptbar}. For simplicity, we define $\cX_\circ \in \cX_n$ such that,  for all $X \in \cX_\circ$, $\overline{\cS_X^\circ(\hat{p}')} \neq \emptyset$. We show that for any $\eps_3 > 0,$ there exists $t_3$ ensuring that, for all $t \geq \max\{t_1,t_2,t_3\},$ \begin{equation*}
    \eqref{eq:approx_ptbar} \leq (1 + \eps_2) (1+\eps_3) \frac{1}{n} \sum_{X \in \cX_\circ}  
    g(X, y) \sum_{i \in \cS_X(p_t)} \sum_{j \in \cS^2_X(p_t)} ( \overline{a_i}(X) -  \overline{a_j}(X) ) q_i(X) q_j(X) |\gamma_i(X) - \gamma_j(X)|
\end{equation*}

To see this, for any $\eps_3 > 0,$ we show that for $t \geq \max\{t_1, t_2, t_3\}$ and any $X \in \cX_\circ, X' \in \cX_n \setminus \cX_\circ$, 
\begin{equation}
\label{eq:further_approx}
    \begin{split}
        &g(X', y) \sum_{i \in \cS_X(p_t)} \sum_{j \in \cS^2_X(p_t)} ( \overline{a_{i_*^{X'}}}(X') -  \overline{a_j}(X') )q_{i_*^{X'}}(X') q_j(X') (\gamma_{i_*^{X'}}(X') - \gamma_j(X'))  ) \\
        \leq & \frac{1}{n}\eps_3 g(X, y) \sum_{i \in \cS_X(p_t)} \sum_{j \in \cS^2_X(p_t)} ( \overline{a_{i_*^{X}}}(X) -  \overline{a_j}(X) )q_{i_*^{X}}(X) q_j(X) (\gamma_{i_*^X}(X) - \gamma_j(X))  ).
    \end{split}
\end{equation}

Indeed, using the same methods as in \eqref{eq:a_i-a_j}, \eqref{eq:gamma_i-gamma_j}, \eqref{eq:gX-gX'}, we have \begin{equation*}
    \begin{split}
        & \frac{\overline{a_{i}}(X') -  \overline{a_j}(X')}{\overline{a_{i}}(X) -  \overline{a_j}(X)} \leq C_3, \quad \forall i \in \cS_X(p_t)\\
        & \frac{\gamma_{i}(X') - \gamma_j(X'))}{\gamma_{i}(X) - \gamma_j(X))} \leq C_4, \quad \forall i \in \cS_X(p_t) \\
        & \frac{g(X',y')}{g(X,y)} \leq C_5,
    \end{split}
\end{equation*} where $C_3, C_4, C_5$ are constants that do not depend on $R.$ It remains to upper bound $\frac{q_{i}(X') q_j(X')}{q_{i}(X) q_j(X)},$ which is equivalent to $\frac{q_{i_*^{X'}}(X') q_j(X')}{q_{i_*^{X}}(X) q_j(X)}$ as all $i \in \cS_X(p_t)$ has the same $q_i.$ 

By Lemma \ref{lem:bds_on_q} together with \eqref{eq:sepa_impt}, we have: \begin{equation*}
    \begin{split}
        &q_{i_*^{X'}}(X') q_j(X') \leq \exp( - ((1- \mu)(1+\tau_0) - \eps_1) R ),\\
        & q_{i_*^{X}}(X) q_j(X) \geq \frac{1}{T^2} \exp( -(1- \mu + \eps_1) R),
    \end{split} 
\end{equation*} where for simplicity we denote $R = \|p_t\|_2$. This implies \begin{equation*}
    \frac{q_{i_*^{X'}}(X') q_j(X')}{q_{i_*^{X}}(X) q_j(X)} \leq T^2 \exp( - ((1-\mu) \tau_0 - 2 \eps_1) R).
\end{equation*} Since $\|p_t\|_2 \rightarrow \infty,$ and $C_3, C_4, C_5$ do not depend on $R,$ we can pick $t_3$ large enough such that: \begin{equation*}
   n T^2 C_3 C_4 C_5 \exp( - ((1-\mu) \tau_0 - 2 \eps_1) R) \leq \eps_3,
\end{equation*} which implies \eqref{eq:further_approx}. Thus, we have \begin{equation}\label{eq:barpt}
    \begin{split}
          -&\overline{p_t}^\top \nabla_p\mathcal L(\mE^1, p_t) \\
          & \leq (1 + \eps_2) (1+\eps_3) \frac{1}{n} \sum_{X \in \cX_\circ}  
    g(X, y) \sum_{i \in \cS_X(p_t)} \sum_{j \in \cS^2_X(p_t)} ( \overline{a_i}(X) -  \overline{a_j}(X) ) q_i(X) q_j(X) |\gamma_i(X) - \gamma_j(X)| \\
        &=  (1 + \eps_2) (1+\eps_3) \frac{1}{n} \sum_{X \in \cX_\circ}  
    g(X, y)  \sum_{j \in \cS^2_X(p_t)} ( \overline{a_{i_*^X}}(X) -  \overline{a_j}(X) ) q_{i_*^X}(X) q_j(X) |\gamma_{i_*^X}(X) - \gamma_j(X)| \\
    & + (1 + \eps_2) (1+\eps_3) \frac{1}{n} \sum_{X \in \cX_\circ}  
    g(X, y) \sum_{i \in \cS_X(p_t): i \neq i_*^X} \sum_{j \in \cS^2_X(p_t)} ( \overline{a_{i}}(X) -  \overline{a_j}(X) ) q_{i}(X) q_j(X) |\gamma_{i}(X) - \gamma_j(X)|
    \end{split}
\end{equation} 
    We then compute by Lemma \ref{lem:dir_grad} that \begin{align}
            - &\hat{p}^\top \nabla_p\mathcal L(\mE^1, p_t)              =  \hbE\left[ g(X, y)\sum_{j \in  \overline{\cS_X(p_t)}} ( \widehat{a_{i_*^X}}(X) -  \widehat{a_j}(X) ) q_{i_*^X}(X) q_j(X) (\gamma_{i_*^X}(X) - \gamma_j(X))  )  \right] \label{eq:a_i_*} \tag{D1} \\
            & \quad + \hbE\left[ g(X, y) \sum_{i  \in \cS_X(p_t), i \neq i_*^X} \sum_{j \in  \overline{\cS_X(p_t)}} ( \widehat{a_{i}}(X) -  \widehat{a_j}(X) ) q_{i}(X) q_j(X)  (\gamma_{i}(X) - \gamma_j(X))  )  \right] \label{eq:a_impt-a_nonimpt} \tag{D2} \\
            & \quad + \hbE\left[ g(X, y) \sum_{i  \in \overline{\cS_X(p_t)}} \sum_{j \in  \overline{\cS_X(p_t)}:j>i} ( \widehat{a_{i}}(X) -  \widehat{a_j}(X) ) q_{i}(X) q_j(X)  (\gamma_{i}(X) - \gamma_j(X))  )  \right]. \label{eq:a_nonimpt-a_nonimpt} \tag{D3}
    \end{align}

    We show that by picking \begin{equation*}
        d \geq \left( \frac{2816 Tn^2  (1+2\eta_0)}{\mu }   \right)^4,
    \end{equation*}  we have \begin{equation*}
        \begin{split}
            |\eqref{eq:a_impt-a_nonimpt} |  \leq \frac{\mu}{2}  \eqref{eq:a_i_*},
         \end{split}
    \end{equation*}
    and for any $\eps_4>0,$ there exists $t_4$ such that for all $t \geq t_4,$ \begin{equation*}
        |\eqref{eq:a_nonimpt-a_nonimpt} |  \leq  \eps_4  \eqref{eq:a_i_*}. 
    \end{equation*}

    To show $|\eqref{eq:a_impt-a_nonimpt} |  \leq \frac{\mu}{2}  \eqref{eq:a_i_*},$ it suffices to note that, for any $j \in \overline{\cS_X(p_t)}$, \begin{equation*}
    \begin{split}
    \widehat{a_{i_*^X}}(X) -  \widehat{a_j}(X) &\geq 1,\\
        |\widehat{a_{i}}(X) -  \widehat{a_j}(X) | &\leq 16n  (1 + 2 \eta_0), \qquad \forall \,i,\\
         | \gamma_i(X) - \gamma_j(X) | &\leq 22 \eta_0 d ^{-1/4}, \qquad \forall\,i \neq i_*^X,\\
         \gamma_{i_*^X}(X) - \gamma_j(X) &\geq \frac{\eta_0}{4nT },
    \end{split}
    \end{equation*} 
    where the second inequality follows from Lemma \ref{lem:ub_mmnorm} and Lemma \ref{lem:bd_embd}. 
    To show $ |\eqref{eq:a_nonimpt-a_nonimpt} |  \leq  \eps_4  \eqref{eq:a_i_*},$ we use again the inequalities above and the same strategy as in 
 Lemma \ref{lem:grad_approx}.

     Thus we obtain that: \begin{equation*}
         \begin{split}
              - &\hat{p}^\top \nabla_p\mathcal L(\mE^1, p_t) \\
              & \geq \left(1 - \frac{\mu}{2} - \eps_4 \right) \hbE\left[ g(X, y)\sum_{j \in  \overline{\cS_X(p_t)}} ( \widehat{a_{i_*^X}}(X) -  \widehat{a_j}(X) ) q_{i_*^X}(X) q_j(X) (\gamma_{i_*^X}(X) - \gamma_j(X))  )  \right] \\
              & \geq \left(1 - \frac{\mu}{2} - \eps_4 \right) \frac{1}{n} \sum_{X \in \cX_0} g(X, y)\sum_{j \in  \cS^2_X(p_t)} ( \widehat{a_{i_*^X}}(X) -  \widehat{a_j}(X) ) q_{i_*^X}(X) q_j(X) (\gamma_{i_*^X}(X) - \gamma_j(X))),   
         \end{split}
     \end{equation*} where in the last inequality we use that all the summand are non-negative.

%
 Note that 
 \begin{equation*}
        \widehat{a_{i_*^X}}(X) -  \widehat{a_j}(X)  = 1, \qquad \overline{a_{i_*^X}}(X) -  \overline{a_j}(X) \leq 1- \mu +\eps_1, \qquad \forall X \in \cX_0.
    \end{equation*}  Thus, it remains to show that
    \begin{equation}
    \label{eq:residue}
    \begin{split}
         \frac{\mu}{4 n}  & \sum_{X \in \cX_0} g(X, y)\sum_{j \in  \cS^2_X(p_t)} ( \widehat{a_{i_*^X}}(X) -  \widehat{a_j}(X) ) q_{i_*^X}(X) q_j(X) (\gamma_{i_*^X}(X) - \gamma_j(X)) \\
         \geq & (1 + \eps_2) (1+\eps_3) \frac{1}{n} \sum_{X \in \cX_\circ}  
    g(X, y) \sum_{i \in \cS_X(p_t): i \neq i_*^X} \sum_{j \in \cS^2_X(p_t)} ( \overline{a_{i}}(X) -  \overline{a_j}(X) ) q_{i}(X) q_j(X) |\gamma_{i}(X) - \gamma_j(X)|
    \end{split}
    \end{equation}

    We have that\begin{equation*}
        \widehat{a_{i_*^X}}(X) -  \widehat{a_j}(X)  \geq 1, \qquad \overline{a_{i}}(X) -  \overline{a_j}(X) \leq 1- \mu +\eps_1, \qquad \forall \,i \in \cS_X(p_t), \quad \forall \,X \in \cX_0.
    \end{equation*} Furthermore, \begin{equation*}
        | \gamma_i(X) - \gamma_j(X) | \leq 22 \eta_0 d ^{-1/4},\quad  \forall \,i \neq i_*^X, \qquad \gamma_{i_*^X}(X) - \gamma_j(X) \geq \frac{\eta_0}{4 nT }.
    \end{equation*} As
        $d \geq \left( \frac{2816 Tn^2  (1+2\eta_0)}{\mu }   \right)^4 \geq  \left( \frac{176 n^2 T(2- \mu)}{\mu}   \right)^4$,
  \eqref{eq:residue}  holds and the proof is complete.
\end{proof}

\section{Details of numerical experiments}\label{appendix:hyperparams}

For all numerical simulations, we use the AdamW optimizer from \texttt{torch.optim}, and we reduce the learning rate in a multiplicative fashion by a factor $\gamma = 0.1$ at epochs $100$ and $200$, i.e.,
$$
\mathrm{LR}_{\mathrm{new}} = \mathrm{LR}_{\mathrm{old}} \cdot \gamma.
$$
We adhere to the batch size of $128$ and fix the embedding dimension to $2048$.

\textbf{IMDB and Yelp datasets.} The hyperparameters \emph{do not} differ between the two-layer model and the one-layer model. We set the number of training epochs to $500$, the learning rate to $0.01$, and the weight decay to $10^{-8}$. 

\textbf{Synthetic data.}  We set the number of training epochs to $196$, the learning rate to $10^{-4}$, and the weight decay to $10^{-4}$.

\end{document}